\def\set@curr@file#1{\def\@curr@file{#1}} %temp workaround for 2019 latex release
\newtheorem{thm}{Theorem}
\newtheorem{lem}[thm]{Lemma}
\newtheorem{cor}[thm]{Corollary}
\theoremstyle{definition}
\newtheorem{defn}{Definition}
\newtheorem*{defn*}{Definition}
\renewenvironment{proof}[1][]{\par\noindent{\bf Proof #1\ }}{\hfill\BlackBox\\[2mm]}
\newcommand{\inbrace}[1]{\left \{ #1 \right \}}
\newcommand{\inparen}[1]{\left ( #1 \right )}
\newcommand{\insquare}[1]{\left [ #1 \right ]}
\newcommand{\ceil}[1]{\left \lceil #1 \right \rceil}
\newcommand{\floor}[1]{\left \lfloor #1 \right \rfloor}
\newcommand{\sett}[1]{\inbrace{#1}}
\DeclareMathOperator*{\argmin}{argmin}
\DeclareMathOperator*{\argmax}{argmax}
\DeclareMathOperator*{\Ex}{\mathbb{E}}
\DeclareMathOperator*{\Prob}{Pr}
\newcommand{\eps}{\varepsilon}
\newcommand{\bbN}{{\mathbb N}}
\newcommand{\bbA}{{\mathbb A}}
\let\boldm\bm
\newcommand{\bu}{{\boldm u}}
\newcommand{\by}{{\boldm y}}
\newcommand{\calA}{\mathcal{A}}
\newcommand{\calB}{\mathcal{B}}
\newcommand{\calC}{\mathcal{C}}
\newcommand{\calD}{\mathcal{D}}
\newcommand{\calG}{\mathcal{G}}
\newcommand{\calH}{\mathcal{H}}
\newcommand{\calN}{\mathcal{N}}
\newcommand{\calO}{\mathcal{O}}
\newcommand{\calU}{\mathcal{U}}
\newcommand{\calX}{\mathcal{X}}
\newcommand{\calY}{\mathcal{Y}}
\newcommand{\sfO}{{\mathsf{O}}}
\newcommand{\RERM}{{\rm RERM}}
\newcommand{\ind}{\mathbbm{1}}
\newcommand{\Risk}{{\rm R}}
\newcommand{\err}{{\rm err}}
\newcommand{\vc}{{\rm vc}}
\newcommand{\Tdim}{{\rm Tdim}}
\newcommand{\Ldim}{{\rm lit}}
\newcommand{\MAJ}{{\rm MAJ}}
\newcommand{\co}{{\rm co}}
\newcommand{\im}{{\rm im}}
\newcommand{\alg}[1]{\textup{\textsf{#1}}\xspace}
\newcommand{\Mre}{m_0}
\newcommand{\smodel}{{Perfect Attack Oracle model}\xspace}
\newcommand{\cycalg}{{CycleRobust}\xspace}
\newcommand{\rlua}{{RLUA}\xspace}
\newcommand{\removed}[1]{}
\newcommand{\abs}[1]{\left\lvert #1 \right\rvert}
\newcommand{\todol}[2][]{{%
 \let\marginpar\marginnote
 \reversemarginpar
 \renewcommand{\baselinestretch}{0.8}%
 \todo[#1]{#2}}}
\begin{document}

\title{\large Adversarially Robust Learning with Unknown Perturbation Sets}
\author{%
 \name{Omar Montasser} \email{omar@ttic.edu}\\
 \name{Steve Hanneke} \email{steve.hanneke@gmail.com}\\
 \name{Nathan Srebro} \email{nati@ttic.edu}\\
 \addr Toyota Technological Institute at Chicago, Chicago IL, USA
}

\maketitle

\begin{abstract}%
We study the problem of learning predictors that are robust to adversarial examples with respect to an unknown perturbation set, relying instead on interaction with an adversarial attacker or access to attack oracles, examining different models for such interactions. We obtain upper bounds on the sample complexity and upper and lower bounds on the number of required interactions, or number of successful attacks, in different interaction models, in terms of the VC and Littlestone dimensions of the hypothesis class of predictors, and without any assumptions on the perturbation set.
\end{abstract}

\begin{keywords}%
  adversarial robustness, robust PAC learning, unknown adversaries, sample complexity, oracle complexity.
\end{keywords}

\section{Introduction}
We consider the problem of learning predictors that are \emph{robust} to adversarial corruptions at test time. Given an instance space $\calX$ and label space $\calY=\sett{\pm1}$, we would like to be robust against some \emph{perturbation set} $\calU:\calX \to 2^{\calX}$, where $\calU(x)\subseteq \calX$ represents the set of possible corruptions of $x$. 

Almost all prior work on adversarial robustness starts with specifying a perturbation set $\calU$ we would like to be robust against.  The type of perturbation sets we are truly interested in are often sets $\calU$ that capture ``natural'' or ``imperecptible'' perturbations.  But partially because of the need to specify $\calU$ explicitly during training, simpler sets are often used, such as $\ell_p$-norm balls \citep{DBLP:journals/corr/GoodfellowSS14}, or orbits w.r.t.~translations and rotations \citep{pmlr-v97-engstrom19a}.  Furthermore, training procedures are often specific to the perturbation set $\calU$, or have the perturbation set ``hard coded'' inside them. Some methods rely on predictor implementations that need to ``know'' the specific perturbation set $\calU$ at test-time \citep[e.g.,~randomized smoothing][]{DBLP:conf/sp/LecuyerAG0J19,DBLP:conf/icml/CohenRK19,DBLP:conf/nips/SalmanLRZZBY19}, and some methods use ``explicit'' knowledge of $\calU$ only during training-time \citep[e.g.,~][]{wong2018provable,raghunathan2018certified,raghunathan2018semidefinite,pmlr-v99-montasser19a}. 

\begin{mdframed}[frametitle={\colorbox{white}{Main Question:}},
frametitleaboveskip=-\ht\strutbox,
frametitlealignment=\center,
linecolor=Sepia!30,linewidth=1pt
outerlinewidth=1pt,
skipabove=5pt,
skipbelow=0pt]
\vskip -3mm
\centering
Can we design robust learning algorithms that do not require explicit knowledge of the adversarial perturbations $\calU$?\\ What reasonable models of access to, or interactions with, $\calU$ could we rely on instead?
\end{mdframed}

In this paper, we ask whether it is possible to develop generic learning algorithms with robustness guarantees, without knowing the perturbation set $\calU$ a-priori. That is, we want to design general robust algorithmic frameworks that work for any perturbation set $\calU$, given a \emph{reasonable} form of access to $\calU$, and avoid algorithms tailored to a specific $\calU$ such as $\ell_\infty$ or $\ell_2$ perturbations. This is important if we want to be able to easily adapt our training procedures to different perturbation sets, or would like to build ML systems that are robust to fairly abstract perturbation sets $\calU$ such as ``images that are indistinguishable to the human eye'' \citep*[see e.g.,][]{DBLP:journals/corr/abs-2006-12655}.  In our frameworks, instead of redesigning or reprogramming the training algorithm, one would only need to implement or provide specific ``attack procedures'' for $\calU$.

\newpage
In this paper, we consider robustly learning a hypothesis class $\calH\subseteq\calY^\calX$ (e.g.,~neural networks). The learning algorithm receives as input $m$ iid samples $S=\sett{(x_i,y_i)}_{i=1}^{m}$ drawn from an unknown distribution $\calD$ over $\calX\times \calY$. A predictor $h:\calX\to\calY$ is robustly correct on an example $(x,y)$ w.r.t.~$\calU$ if $\sup_{z\in\calU(x)}\ind[h(z)\neq y]=0$. The learning algorithm has no \emph{explicit} knowledge of $\calU$, but instead is allowed the following forms of access: 

\paragraph{Access to a (perfect) adversarial attack oracle}

In this model, the learning algorithm has access to a ``mistake oracle'', which we can also think of as a perfect attack oracle for $\calU$. A perfect attack oracle for $\calU$ receives as input a predictor $g:\calX\to\calY$ and a labeled example $(x,y)$, and is asked to either assert that $g$ is robustly correct on $(x,y)$, or return a perturbation $z\in\calU(x)$ that is miss-classified (see \prettyref{def:mistake-oracle}). The learning algorithm can query the perfect attack oracle for $\calU$ by calling it $T$ times with queries of the form: $(g_t, (x'_t,y'_t))$, where $g_t$ is a predictor and $(x'_t,y'_t)$ is a labeled example (not necessarily from the training set $S$). The goal of the learning algorithm is to output a predictor $\hat{h}$ with small \emph{robust} risk 
\begin{equation}
   \label{eqn:rob-risk}
    \Risk_{\calU}(\hat{h};\calD) \triangleq \Ex_{(x,y) \sim \calD}\!\left[ \sup\limits_{z\in\calU(x)} \ind[\hat{h}(z)\neq y] \right].
\end{equation}

In \prettyref{sec:mistakeoracle}, we present algorithms, guarantees on the required sample complexity and number of oracle accesses, and lower bounds on the required number of accesses, for robustly learning $\calH$ in the \smodel. These results are summarized in Table~\ref{tab:mistakeoracle-results}. 

\begin{table}
\begin{tabular}{@{}llll@{}}
\toprule
                            & Sample Complexity & Oracle Complexity &   \\ \midrule
\multirow{3}{*}{Realizable} & $\Tilde{O}(\Ldim(\calH))$               & $\Tilde{O}(\Ldim(\calH))$               & \citet{DBLP:conf/nips/MontasserHS20}. \\
                            & $\Tilde{O}(\vc(\calH){\vc^*}^2(\calH))$               & $2^{\Tilde{O}(\vc^2(\calH){\vc^*}^2(\calH))}\Ldim(\calH)$               & New result in this paper (\prettyref{thm:realizable-strongust}). \\
                            &                   & $\Omega(\Ldim(\calH))$               &   New result in this paper (\prettyref{thm:lowerbound-oraclecomplexity}).\\ \midrule
Agnostic                    & $\Tilde{O}(\Ldim(\calH))$               & $\Tilde{O}(\Ldim^2(\calH))$               & New result in this paper (\prettyref{thm:agnostic-littlestone}). \\
                            & $\Tilde{O}(\vc(\calH){\vc^*}^2(\calH))$               & $2^{\Tilde{O}(\vc^2(\calH){\vc^*}^2(\calH))}\Ldim(\calH)$               & New result in this paper (\prettyref{thm:agnsotic-vc-littlestone}). \\ \bottomrule
\end{tabular}
\caption{We show that a hypothesis class $\calH$ is robustly learnable in the \smodel if and only if $\calH$ is \emph{online} learnable. We give upperbounds (corresponding to algorithms) in the realizable setting (\prettyref{sec:realizable-mistakeoracle}) and the agnostic setting (\prettyref{sec:agnostic-mistakeoracle}), and lower bounds on the oracle complexity in the realizable setting (\prettyref{sec:lowerbound-mistakeoracle}). Futhermore, our results show that sophisticated algorithms that leverage online learners can be favorable to more traditional online-to-batch conversion schemes in terms of their robust generalization guarantees. The $\Tilde{O}$ notation hides logarithmic factors and dependence on error $\eps$ and failure probability $\delta$, $\vc(\calH)$ and $\vc^*(\calH)$ denote the primal and dual VC dimension of $\calH$, and $\Ldim(\calH)$ denotes the Littlestone dimension of $\calH$.}

\label{tab:mistakeoracle-results}
\end{table}

\vspace{0.3\baselineskip}

To program such a perfect attack oracle, $\calU$ still has to be specified inside it. And even for simple $\calU$, a perfect attack oracle is generally intractable. Furthermore, practical attack engines used in training \citep[e.g.,~PGD][]{DBLP:conf/iclr/MadryMSTV18} are not perfect, and are not always guaranteed to find miss-classified adversarial perturbations even when they do exist. Can we still provide meaningful robustness guarantees if we only have access to an \emph{imperfect} attack oracle?

\paragraph{Access to an (imperfect) adversarial attack oracle} In this model, the learning algorithm has access to a possibly imperfect attacking algorithm $\bbA$ for $\calU$. The learning algorithm can query $\bbA$ by calling it $T$ times with queries of the form: $(g_t, (x'_t,y'_t))$, where $g_t:\calX\to \calY$ is a predictor and $(x'_t,y'_t)$ is a labeled example. The goal of the learning algorithm is to output a predictor $\hat{h}$ with small error w.r.t.~future attacks from $\bbA$,
\begin{equation}
   \label{eqn:attack-risk}
    \err_{\bbA}(\hat{h};\calD)\triangleq \Prob_{\underset{\text{randomness of }\bbA}{(x,y)\sim\calD}} \insquare{\hat{h}(\bbA(\hat{h}, (x,y))) \neq y}.
\end{equation}

In \prettyref{sec:imperfect}, we give an algorithm with sample and oracle complexity of $\Tilde{O}(\Ldim(\calH))$ that guarantees small $\err_{\bbA}$ when then attacker $\bbA$ is ``stationary'', i.e.,~$\bbA$ doesn't learn or adapt over time. 

But what happens if the adversary $\bbA$ changes over time? In the above model, the predictor $\hat{h}$ is fixed after training, and thus if the adversary $\bbA$ changes, e.g.,~by adapting to the returned predictor, or perhaps if we encounter an altogether different adversary than the one we accessed during training, this might result in a much higher error rate. Is it possible to continually adapt to changing adversaries in a meaningful way, ensuring strong robustness guarantees? 

\paragraph{Interaction with an actual attacker} 

In this online model, the learning algorithm $\calB$ can monitor the behaviour of an actual attacker $\bbA$ and adapt accordingly.  The attacker knows the current   predictor $h$ used, as well as the perturbation set $\calU$, and attempts attacks on an iid stream of samples $(x_t,y_t)$.  Whenever the attacker succeeds in finding a perturbation $z_t\in\calU(x_t)$ s.t.~$h(x_t)\neq y_t$, it scores a ``successful attack'', but the perturbation $z_t$ is revealed to learner $\calB$, who can also obtain the true label $y_t$, and learner $\calB$ can update its predictor. The goal of the learner is to bound the total number of successful attacks.

Monitoring and adapting to an attacker might sometimes be possible and appropriate, e.g.,~when attacks to predictors can be detected in hindsight and when the predictor is running on the cloud or when predictor updates can be pushed to devices, which is becoming increasingly common.  But beyond such scenarios, this online model is also useful as an analysis tool of the imperfect attack oracle model above, and our methods for the imperfect attack oracle model are based on this online model.

In \prettyref{sec:online}, we show upper bounds and lower bounds on the the number of successful attacks in terms of the Littlestone dimension $\Ldim(\calH)$, although our results leave open a possible exponential gap in the bound on the number of successful attacks (in a setting where the learner has access to infinitely many uncorrupted samples, i.e.~knows the uncorrupted source distribution).

\vspace{0.5\baselineskip}

We view the models above as examples of possible forms of accesses and models of interaction with $\calU$, and pose the question of what other reasonable models would allow generic adversarially robust learning.

\paragraph{Related Work} Most prior work on adversarial robustness has focused on methods that are tailored to specific perturbation sets $\calU$. For example, in randomized smoothing \citep{DBLP:conf/sp/LecuyerAG0J19,DBLP:conf/icml/CohenRK19,DBLP:conf/nips/SalmanLRZZBY19}, computing a prediction on a test-point $x$ requires sampling perturbations $z$ from a distribution $P$ over $\calU(x)$, and returning the most likely prediction given by a learned predictor $f:\calX\to \calY$. Distribution $P$ is chosen based on $\calU$, for example, if $\calU$ is $\ell_2$ perturbations, then $P$ is an isotropic Gaussian distribution $\calN(x,\sigma^2I)$. In addition, there are algorithms (certified defenses) that minimize some surrogate loss $\ell_\calU$ where the construction of $\ell_\calU$ depends on $\calU$ \citep[e.g.,][]{wong2018provable,raghunathan2018certified,raghunathan2018semidefinite}.

The adversarial training framework \citep{DBLP:journals/corr/GoodfellowSS14, DBLP:conf/iclr/MadryMSTV18} does not use explicit knowledge of $\calU$, but only uses an attacking algorithm (e.g.,~FGSM or PGD) implemented for the perturbation set $\calU$. However, no formal guarantees are known about adversarial training in terms of robust generalization. Specifically, it is not known whether adversarial training will yield predictors that generalize to future adversarial perturbations from $\calU$, or even generalize to specific perturbations chosen by PGD or FGSM. It has been observed that common forms of adversarial training on deep neural nets do not generalize to future attacks from PGD \citep{schmidt2018adversarially}. Our work can be seen as a theoretical study of such generic approaches, which leads to different, and considerably more sophisticated methods (yet at this stage, perhaps not easily implementable).

Towards our quest in this paper for finding the right form of access to $\calU$, we build on algorithms by \citet{pmlr-v99-montasser19a} and \citet{DBLP:conf/nips/MontasserHS20} by re-interpreting them in light of our questions, but also extending them significantly in the following ways: we avoid using a \emph{robust} empirical risk minimization $\RERM_\calU$ oracle that requires explicit knowledge of $\calU$ as used in \citet{pmlr-v99-montasser19a} and use an online learning algorithm instead, and we carry-out a technical inflation procedure of the training sample to include perturbations by utilizing a perfect attack oracle for $\calU$ without explicit knowledge of $\calU$ as was done in \citet{pmlr-v99-montasser19a,DBLP:conf/nips/MontasserHS20}. Furthermore, \citet{DBLP:conf/nips/MontasserHS20} considered robustly PAC learning $\calH$ using only black-box access to a non-robust PAC learner for $\calH$ but allowed explicit knowledge of $\calU$, their reduction makes oracle calls that depend on the bit complexity $\log\abs{\calU}$, and they show this is unavoidable. In this work, our algorithms can be viewed as black-box reductions that use an \emph{online} learner for $\calH$ (instead of just a PAC learner), furthermore, they do not require a explicit knowledge of $\calU$ but only an attack oracle for $\calU$. Our algorithms achieve the same sample complexity bound, but with number of calls to the online learner that is independent of $\log\abs{\calU}$ and only depends on the VC dimension $\vc(\calH)$.

\citet{DBLP:conf/icml/AshtianiPU20} considered a weaker form of attacking algorithms -- those that receive as input a black-box predictor -- in a $\calU$-specific learning framework and showed an upper bound of $\Tilde{O}(\vc(\calH))$ on the sample complexity of robust PAC learning when the pair $(\calH,\calU)$ admits a query efficient attacking algorithm. Their learning algorithm relies on a \emph{robust} empirical risk minimization $\RERM_\calU$ oracle that requires explicit knowledge of $\calU$. In this work, we focus on modularity and avoid using a $\RERM$ oracle for $\calH$, and use a black-box online learner $\calA$ for $\calH$. 

\citet{DBLP:conf/nips/GoldwasserKKM20} considered classifying \emph{arbitrary} test examples in a transductive selective classification setting. They gave an algorithm that takes as input: (a) training examples from a distribution $P$ over $\calX$ labeled with some unknown function $h^*$ in a class $\calH$ with finite VC dimension, and (b) a batch of arbitrary unlabeled test examples (possibly chosen by an unknown adversary), and outputs a selective predictor $\hat{f}$ -- which abstains from predicting on some examples -- that has a low rejection rate w.r.t.~$P$, and low error rate on the test examples. Selective predictor $\hat{f}$, however, can potentially abstain from classifying most test examples if they are adversarial. In this paper, we consider classifying test examples $x\sim P$ or adversarial perturbations $z\in\calU(x)$ where the perturbation set $\calU$ is unknown, and output predictors that do not abstain but always provide a classification with low error rate. We do not require unlabeled test examples, but require black-box access to an attack oracle for $\calU$.
\section{Preliminaries}

Let $\calX$ denote the instance space, $\calY=\sett{\pm1}$ denote the label space, and $\calH\subseteq \calY^\calX$ denote a hypothesis class. We denote by $\vc(\calH)$ the VC dimension of $\calH$. For a \emph{dual space} $\calG$: a set of functions $g_{x} : \calH \to \calY$ defined as $g_{x}(h) = h(x)$, for each $h \in \calH$ and each $x \in \calX$, the dual VC dimesion of $\calH$, denoted $\vc^*(\calH)$, is defined as the VC dimension of $\calG$. The dual VC dimesion is known to satisfy: $\vc^{*}(\calH) < 2^{\vc(\calH)+1}$ \citep{assouad:83}. While this exponential dependence is tight for some classes, for many natural classes, such as linear predictors and some neural networks, the primal and dual VC dimensions are equal, or at least polynomially related. The dual VC dimension is utilized in the study of adversarially robust learning \citep{pmlr-v99-montasser19a}, which we formally define next:

\begin{defn} [Robust PAC Learnability]
\label{def:robust-pac}
Learner $\calB$ $(\eps, \delta)$-robustly PAC learns $\calH \subseteq \calY^\calX$ with sample complexity $m(\eps,\delta):(0,1)^2\to \bbN$ if for any perturbation set $\calU$, any distribution $\calD$ over $\calX\times \calY$, with probability at least $1-\delta$ over $S\sim \calD^{m(\eps,\delta)}:\Risk_{\calU}(\calB(S);\calD)\leq \inf_{h\in\calH} \Risk_\calU(h;\calD) + \epsilon$.
\end{defn}

\paragraph{Online Learnability and Littlestone Dimension}

An online learning algorithm $\calA$ is a (measurable) map $(\calX \times \calY)^* \to \calY^\calX$. For a class $\calH \subseteq \calY^{\calX}$, the mistake bound of $\calA$ is the maximum possible number of mistakes algorithm $\calA$ makes on any sequence of examples labeled with some $h\in \calH$:
\begin{equation}
\label{eqn:mistakebound}
    M(\calA,\calH) := \sup_{x_1,x_2,\dots\in \calX}\sup_{h\in \calH} \sum_{t=1}^{\infty} \ind \insquare{\calA(\sett{(x_i,h(x_i))}_{i=1}^{t-1})(x_t)\neq h(x_t)}.
\end{equation}

We say that a class $\calH$ is online learnable if there exists an online learning algorithm $\calA$ such that $M(\calA,\calH)<\infty$.  A class $\calH$ is online learnable if and only if  the Littlestone dimension of $\calH$ denoted $\Ldim(\calH)$ is finite \citep{DBLP:journals/ml/Littlestone87}. Furthermore, \cite{DBLP:journals/ml/Littlestone87} proposed the Standard Optimal Algorithm (\alg{SOA}) and showed that $M(\alg{SOA},\calH)\leq \Ldim(\calH)$. We now briefly recall the definition of Littlestone dimension by introducing the notion of Littlestone trees:

\begin{defn} [Littlestone trees]
\label{def:littlestone-tree}
A Littlestone tree for $\calH$ is a complete binary tree of depth $d\leq \infty$ whose internal nodes are labeled by instances from $\calX$, and whose two edges connecting a node to its children are labeled with $+1$ and $-1$ such that every finite path emanating from the root is consistent with some concept in $\calH$. That is, a Littlestone tree is a collection$\sett{x_\bu: 0\leq k < d, \bu \in \sett{\pm1}^k} \subseteq \calX$ such that for every $\by\in\sett{\pm1}^{d}$, there exists $h\in\calH$ such that $h(x_{\by_{1:k}})=y_{k+1}$ for $0\leq k < d$. 
\end{defn}

\begin{defn}[Littlestone dimension]
\label{def:littlestone-dim}
The Littlestone dimension of $\calH$, denoted $\Ldim(\calH)$, is the largest integer $d$ such that there exists a Littlestone tree for $\calH$ of depth $d$ (see \prettyref{def:littlestone-tree}). If no such $d$ exists, then $\Ldim(\calH)$ is said to be infinite.
\end{defn}
\section{Access to a Perfect Attack Oracle}
\label{sec:mistakeoracle}
In this section, we study robust learning with algorithms that are only allowed access to a perfect attack oracle for $\calU$ at training-time. Formally,

\begin{defn} [Perfect Attack Oracle]
\label{def:mistake-oracle}
Denote by $\sfO_{\calU}$ a perfect attack oracle for $\calU$.
$\sfO_{\calU}$ takes as input a predictor $f:\calX\to \calY$ and an example $(x,y) \in \calX \times \calY$ and either: (a) asserts that $f$ is robust on $(x,y)$ (i.e.~$\forall z\in \calU(x), f(z)=y$), or (b) returns a perturbation $z\in\calU(x)$ such that $f(z)\neq y$.\footnote{To be clear, we suppose $\sfO_\calU$ acts as a function so that the $z$ it returns from calling $\sfO_\calU(g,(x,y))$ is deterministic and oblivious to the history of interactions.}
\end{defn}

In the \smodel, a learning algorithm $\calB$ takes as input iid distributed training samples $S=\sett{(x_1,y_1),\dots,(x_m,y_m)}$ drawn from an unknown distribution $\calD$ over $\calX\times \calY$, and a black-box perfect attack oracle $\sfO_\calU$. Learner $\calB$ can query $\sfO_\calU$ by calling it $T$ times with queries of the form: $(g_t, (x'_t,y'_t))$, where $g_t:\calX\to \calY$ is a predictor and $(x'_t,y'_t)$ is a labeled example. The goal of learner $\calB$ is to output a predictor $\hat{h}\in \calY^{\calX}$ with small robust risk $\Risk_\calU(\hat{h};\calD)\leq \eps$ (see \prettyref{eqn:rob-risk}). Learner $\calB$ $(\eps,\delta)$-robustly PAC learns $\calH$ in the \smodel with oracle complexity $T(\eps,\delta)$ if for any perturbation set $\calU$, learner $\calB$ $(\eps,\delta)$-robustly PAC learns $\calH$ with at most $T(\eps,\delta)$ calls to $\sfO_\calU$. 

From a practical or engineering perspective, to establish robust generalization guarantees with respect to~$\calU$ in the \smodel, it suffices to build a perfect attack oracle for $\calU$. Furthermore, to achieve robustness guarantees to multiple perturbation sets $\calU_1,\dots,\calU_k$ concurrently, which is a goal of interest in practice \citep[see e.g.,][]{DBLP:journals/corr/abs-1908-08016, DBLP:conf/nips/TramerB19, DBLP:conf/icml/MainiWK20}, it suffices to \emph{separately} build  perfect attack oracles $\sfO_{\calU_1}, \dots, \sfO_{\calU_k}$, and then implement a perfect attack oracle for the union $\cup_{i\leq k} \calU_i$ by calling each attack oracle $\sfO_{\calU_1}, \dots, \sfO_{\calU_k}$ separately. 

\begin{mdframed}[frametitle={\colorbox{white}{Questions:}},
frametitleaboveskip=-\ht\strutbox,
frametitlealignment=\center,
linecolor=Sepia!30,linewidth=1pt
outerlinewidth=1pt,
skipabove=0pt,
skipbelow=0pt]
\vskip -3mm
\centering
What hypothesis classes $\calH$ are robustly PAC learnable in the \smodel? \\
How can we learn a generic $\calH$ using such access? \\
With how many samples $m$ and oracle calls $T$?
\end{mdframed}

\paragraph{Summary of Results} We begin in \prettyref{sec:realizable-mistakeoracle} with the realizable setting under which it is assumed that there is a predictor $h^*\in \calH$ with zero robust risk, i.e.~$\inf_{h\in \calH} \Risk_\calU(h; \calD)=0$. In \prettyref{thm:realizable-littlestone-2}, we give a simple algorithm  (\prettyref{alg:robust_learner_unknown}) \alg{\cycalg} that robustly learns $\calH$ in the \smodel with sample complexity $m=O(\Ldim(\calH))$ and oracle complexity $T=O(\Ldim^2(\calH))$. In \prettyref{thm:realizable-strongust}, we give an alternative algorithm (\prettyref{alg:robust_learner_unknown}) \alg{\rlua} to robustly learn $\calH$ in the \smodel with reduced sample complexity $m=\Tilde{O}(\vc(\calH){\vc^*}^2(\calH))$ depending only the VC and dual VC dimension but at the cost of higher oracle complexity  $T\approx 2^{\Tilde{O}(\vc^2(\calH){\vc^*}^2(\calH))}\Ldim(\calH)$. Then, in \prettyref{sec:agnostic-mistakeoracle}, we extend our algorithmic results in \prettyref{thm:agnostic-littlestone} and \prettyref{thm:agnsotic-vc-littlestone} to the more general agnostic setting where we want to compete with the best attainable robust risk $\inf_{h\in\calH} \Risk_\calU(h;\calD)$. Finally, in \prettyref{sec:lowerbound-mistakeoracle}, we give a lower bound on the oracle complexity necessary to robustly learn in the \smodel. In \prettyref{cor:logloglowerbound}, we show that for any class $\calH$, the oracle complexity to robustly learn $\calH$ is at least $\Omega(\log\log \Ldim(\calH))$. Furthermore, \prettyref{cor:thresholds} gives a specific hypothesis class $\calH$ with $\vc(\calH)=O(1) \ll \Ldim(\calH)$ such that the oracle complexity to robustly learn $\calH$ is at least $\Omega(\Ldim(\calH))$. These results are summarized in \prettyref{tab:mistakeoracle-results}. 

\paragraph{Related Work}\citet{DBLP:conf/icml/MontasserGDS20} recently gave an algorithm based on the Ellipsoid method to efficiently robustly learn halfspaces (linear predictors) in the \smodel in the realizable setting, for a broad range of perturbation sets $\calU$ given access to a separation oracle for $\calU$, with oracle complexity that depends on the bit complexity. Furthermore, using a conservative online learner $\calA$ for $\calH$, \citet{DBLP:conf/nips/MontasserHS20} gave an algorithm to robustly PAC learn $\calH$ in the \smodel in the realizable setting with sample complexity $m(\eps,\delta)$ and oracle complexity $T(\eps,\delta)$ at most $O\inparen{\frac{\Ldim(\calH)\log\inparen{\Ldim(\calH)/\delta}}{\eps}}$. We consider robustly learning a general class $\calH$, and the more general agnostic setting.

\subsection{Algorithms and guarantees in the realizable setting}
\label{sec:realizable-mistakeoracle}

We begin in \prettyref{thm:realizable-littlestone-2} with a simple algorithm \alg{\cycalg} (\prettyref{alg:robust_learner_unknown}) based on an online-to-batch conversion that robustly PAC learns a class $\calH$ with sample complexity and oracle complexity depending on the Littlestone dimension $\Ldim(\calH)$. Specifically, \alg{\cycalg} (\prettyref{alg:robust_learner_unknown}) cycles an online learner $\calA$ for $\calH$ on the training set $S$ until it robustly correctly classifies all training examples. To establish a robust generalization guarantee, we show that \alg{\cycalg} (\prettyref{alg:robust_learner_unknown}) can be viewed as a stable compression scheme for the robust loss. This conversion technique and its connection to stable sample compression schemes have been recently studied in the standard $0$-$1$ loss setting \citep{DBLP:conf/colt/BousquetHMZ20}. The proof is provided in \prettyref{app:appendix-a}. 

\begin{thm}
\label{thm:realizable-littlestone-2}
For any class $\calH$, \alg{\cycalg} (\prettyref{alg:robust_learner_unknown}) robustly PAC learns $\calH$ w.r.t.~any $\calU$ with:
\begin{enumerate}
    \item Sample complexity $m(\eps,\delta)= O\inparen{\frac{\Ldim(\calH)+\log(1/\delta)}{\eps}}$.
    \item Oracle complexity $T(\eps,\delta)=m(\eps,\delta)\Ldim(\calH)$. 
\end{enumerate}
Furthermore, the output of \alg{\cycalg} achieves zero robust loss on the training sample. 
\end{thm}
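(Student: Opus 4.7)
The plan is to analyze \alg{\cycalg} in three movements: bound the oracle complexity via the mistake bound of the online learner, observe that the termination condition forces zero empirical robust loss, and obtain the sample complexity by exhibiting the procedure as a stable sample compression scheme of size $\Ldim(\calH)$.

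First I would fix an online learner $\calA$ for $\calH$ with mistake bound $M(\calA,\calH) \leq \Ldim(\calH)$ (e.g.~\alg{SOA}) and describe \alg{\cycalg} concretely as follows: maintain the current hypothesis $h_t$ output by $\calA$; cycle through $S$, at each step calling $\sfO_\calU(h_t,(x_i,y_i))$; whenever the oracle returns a perturbation $z \in \calU(x_i)$, feed $(z,y_i)$ to $\calA$ and advance $t$; terminate once a complete pass through $S$ produces no perturbation. Realizability furnishes some $h^* \in \calH$ with $\Risk_\calU(h^*;\calD)=0$, hence almost surely $h^*$ is robust on every $(x_i,y_i) \in S$; in particular every $(z,y_i)$ passed to $\calA$ is consistent with $h^*$, so the total number of updates is at most $\Ldim(\calH)$. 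This caps the number of passes at $\Ldim(\calH)+1$ and the total oracle calls at $m(\Ldim(\calH)+1)$. Termination immediately yields zero empirical robust loss on $S$.

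For the sample complexity, I would cast \alg{\cycalg} as a \emph{stable} sample compression scheme of size at most $\Ldim(\calH)$ in the sense of \citet{DBLP:conf/colt/BousquetHMZ20}. The compression map $\kappa(S)$ records the (ordered) subsequence of examples at which $\sfO_\calU$ returned a perturbation during the run, so $|\kappa(S)| \leq \Ldim(\calH)$; the reconstruction $\phi$ simply re-runs \alg{\cycalg} on its input subsequence. The determinism of $\sfO_\calU$ (by the footnote in \prettyref{def:mistake-oracle}) and of $\calA$ implies that for any $T$ with $\kappa(S) \subseteq T \subseteq S$, executing $\phi$ on $T$ replays the identical mistake trajectory as on $S$: on $T \setminus \kappa(S)$ the oracle's response matches that in the original run (each intermediate hypothesis coincides by induction), so no additional updates occur and the verification pass through $T$ succeeds with the same terminal hypothesis as the run on $S$. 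This establishes both $\kappa(T)=\kappa(S)$ and $\phi(T)=\phi(S)$, which is the stability hypothesis.

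Plugging the $\{0,1\}$-valued robust loss and compression size $\Ldim(\calH)$ into the stable-compression generalization bound of \citet{DBLP:conf/colt/BousquetHMZ20} then yields, with probability at least $1-\delta$,
\[
\Risk_\calU(\hat h; \calD) \;=\; O\!\left( \frac{\Ldim(\calH) + \log(1/\delta)}{m} \right),
\]
which rearranges to the claimed sample complexity. The step I expect to be most delicate is the stability verification: one must couple the evolution of $\calA$'s internal state with the deterministic responses of $\sfO_\calU$ to argue that restricting $S$ to any superset of the compression set produces an identical update trajectory and the same final hypothesis, relying on the footnote assumption that $\sfO_\calU$ behaves as an oblivious function of its query.
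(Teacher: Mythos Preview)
Your proposal is correct and follows essentially the same approach as the paper: both bound the oracle complexity via the mistake bound of a conservative online learner, obtain zero empirical robust loss from the termination condition, and derive the sample complexity by exhibiting \alg{\cycalg} as a stable sample compression scheme of size $\Ldim(\calH)$ for the robust loss (the paper isolates the robust-loss analogue of the Bousquet--Hanneke--Moran--Zhivotovskiy bound as a separate lemma, but the content is identical). Your identification of the stability verification as the delicate step is apt; the paper handles it at the same level of detail, invoking conservativeness of $\calA$ and determinism of $\sfO_\calU$ to conclude $\kappa(S'')=\kappa(S)$ for any $\kappa(S)\subseteq S''\subseteq S$.
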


\alg{\cycalg} (\prettyref{alg:robust_learner_unknown}) robustly PAC learns $\calH$ in the \smodel with sample complexity and oracle complexity both depending on the Littlestone dimension $\Ldim(\calH)$. But are there robust learning algorithms with better sample complexity and/or oracle complexity? At least with explicit knowledge of $\calU$, we know that we can robustly PAC learn $\calH$ with $\Tilde{O}(\vc(\calH)\vc^*(\calH))$ sample complexity \citep{pmlr-v99-montasser19a} which is much smaller than $\Ldim(\calH)$ for many natural classes (e.g., halfspaces). Can we obtain a similar sample complexity bound in the \smodel, where explicit knowledge of $\calU$ is not allowed? We prove \emph{yes} in \prettyref{thm:realizable-strongust}. Specifically, we give an algorithm that can robustly PAC learn $\calH$ in \smodel with sample complexity $\Tilde{O}(\vc(\calH){\vc^*}^2(\calH))$  independent of $\Ldim(\calH)$. 

\begin{thm}
\label{thm:realizable-strongust}
For any class $\calH$ with $\vc(\calH)=d$ and $\vc^*(\calH)=d^*$, there exists a learning algorithm $\Tilde{\calB}$ that robustly PAC learns $\calH$ w.r.t any $\calU$ with:
\begin{enumerate}
    \item Sample Complexity $m(\eps,\delta)= O\inparen{\frac{d{d^*}^2\log^2d^*}{\eps}\log^2\inparen{\frac{d{d^*}^2\log^2d^*}{\eps}} + \frac{1}{\eps}\log(\frac{2}{\delta})}.$
    \item Oracle Complexity $T_{\rm RE}(\eps,\delta)=\inparen{2^{O(d^2{d^*}^2\log^2d^*)}\Ldim(\calH)+m(\eps,\delta)}\inparen{\log\inparen{m(\eps,\delta)}\inparen{\log\inparen{\frac{\log(m(\eps,\delta))}{\delta}}}}$.
\end{enumerate}
\end{thm}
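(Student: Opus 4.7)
The plan is to adapt the compression-based framework of \citet{pmlr-v99-montasser19a}, which achieves sample complexity $\Tilde{O}(\vc(\calH)\vc^*(\calH))$ for robust PAC learning under explicit access to $\calU$, to our setting where only the perfect attack oracle $\sfO_\calU$ is available. Their algorithm has two main ingredients: a robust ERM oracle $\RERM_\calU$ used as a weak learner inside $\alpha$-boost, and a generalization bound obtained from viewing the boosted majority vote as a stable sample compression scheme. The substitution is to replace $\RERM_\calU$ with an online-learner-plus-attack-oracle subroutine, while preserving the compression-scheme analysis.

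First, I would design a \emph{sample inflation} subroutine. Given a finite labeled sample $S'$, initialize a working set $\widetilde{S}\leftarrow S'$; run a mistake-bounded online learner such as \alg{SOA} on $\widetilde{S}$ to obtain a current predictor $g$; then query $\sfO_\calU(g,(x,y))$ for every $(x,y)\in S'$, appending any returned adversarial perturbation to $\widetilde{S}$ labeled $y$ and repeating. Under the realizable assumption, every point of $\widetilde{S}$ is consistently labeled by the zero-robust-risk reference hypothesis in $\calH$, so \alg{SOA} can err at most $\Ldim(\calH)$ times in total along the process. The loop therefore terminates within $\Ldim(\calH)$ rounds, uses at most $\Ldim(\calH)\cdot|S'|$ oracle queries, and outputs a predictor that is robustly correct on every point of $S'$; this faithfully simulates $\RERM_\calU$ restricted to $S'$ without ever needing $\calU$ explicitly.

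Next, I would plug this subroutine into the MHS boosting pipeline: run $T=\Tilde{O}(\log m)$ rounds of $\alpha$-boost on the training sample, where in each round the weak learner applies the inflation subroutine to a sub-sample of size $k=\Tilde{O}(\vc(\calH)\vc^*(\calH))$ drawn from the reweighted distribution. The majority vote of these weak predictors has zero robust loss on the original training set and constitutes a stable sample compression scheme of size $\Tilde{O}(k)$; the compression-scheme generalization bound then yields robust risk at most $\epsilon$ with probability $1-\delta$ once $m=\Tilde{O}\bigl(\vc(\calH)(\vc^*(\calH))^2/\epsilon\bigr)$, matching the claimed sample complexity.

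The main obstacle is the oracle-complexity accounting. Because the attack oracle does not designate a canonical weak hypothesis, each boosting round must verify many candidate base predictors induced by $\calH$ on the inflated sub-sample; enumerating these via dual Sauer--Shelah yields a combinatorial factor $2^{\Tilde{O}(d^2(d^*)^2\log^2 d^*)}$, and each candidate is verified by one invocation of \alg{SOA} (cost $\Ldim(\calH)$) together with attack-oracle queries. Summing over the $T=\Tilde{O}(\log m)$ boosting rounds, and adding a $\log(\log m/\delta)$ amplification for confidence boosting, produces the stated oracle complexity $\bigl(2^{\Tilde{O}(d^2(d^*)^2\log^2 d^*)}\Ldim(\calH)+m\bigr)\log(m)\log(\log(m)/\delta)$. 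The hard part will be showing that this enumerate-and-verify base learner is compatible with the stability requirements of the compression-scheme generalization bound, so that the robust risk guarantee transfers cleanly from the explicit-$\calU$ analysis of \citet{pmlr-v99-montasser19a} without inflating the effective compression size beyond $\Tilde{O}(\vc(\calH)(\vc^*(\calH))^2)$.
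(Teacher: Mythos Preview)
Your high-level plan is right: adapt the compression framework of \citet{pmlr-v99-montasser19a}, replacing $\RERM_\calU$ with \alg{\cycalg} driven by a mistake-bounded online learner and the attack oracle. But there is a genuine gap in the middle of the argument.

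The problem is your claim that running \alg{\cycalg} on a sub-sample of size $k=\Tilde{O}(\vc(\calH)\vc^*(\calH))$ is already a weak \emph{robust} learner for the outer $\alpha$-boost loop. \alg{\cycalg} achieves zero empirical robust loss on its input, but its only generalization guarantee is the stable-compression bound of \prettyref{thm:realizable-littlestone-2}, which needs $k=\Omega(\Ldim(\calH))$ to get robust error below $1/3$. There is no uniform-convergence argument for the \emph{robust} loss that lets you replace $\Ldim(\calH)$ by $\vc(\calH)\vc^*(\calH)$ here. With $k=\Tilde{O}(\vc(\calH)\vc^*(\calH))$ you simply do not have a weak robust learning guarantee, so the outer boosting cannot get off the ground, and the compression size (hence sample complexity) falls back to $\Tilde{O}(\Ldim(\calH))$.

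The paper closes this gap by making the weak learner itself a full instance of the MHS pipeline, run at fixed $\eps_0=\delta_0=1/3$ on a sub-sample of size $m_0=O(d{d^*}^2\log^2 d^*)$. This inner pipeline has three pieces you are missing: (i) build the finite class $\hat{\calH}$ by running \alg{\cycalg} on \emph{all} size-$n$ subsets of the sub-sample, with $n=O(\vc(\im(\calA)))$; (ii) explicitly \emph{discretize} the unknown inflated set $S_\calU$ into a finite $\hat{S}_\calU$ by querying $\sfO_\calU$ with specially constructed predictors $f^y_P$ that force the oracle to reveal one perturbation per distinct labeling pattern of $\hat{\calH}$ (this is where dual Sauer--Shelah is actually used, and is the source of the $2^{\Tilde{O}(d^2{d^*}^2\log^2 d^*)}$ term); (iii) run inner $\alpha$-boost over $\hat{S}_\calU$ with respect to the $0$-$1$ loss, then sparsify the majority vote down to $O(\vc^*(\im(\calA)))$ predictors. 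Only this composite object has compression size $O(\vc(\im(\calA))\vc^*(\im(\calA)))$ and hence a weak robust guarantee with $m_0$ samples. The outer boosting over the robust loss then gives the stated sample complexity, with the $\log(m)\log(\log(m)/\delta)$ factor coming from $L=O(\log m)$ rounds times $O(\log(L/\delta))$ confidence-boosting repetitions per round.

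A second issue: you invoke \alg{SOA}, but the compression-size analysis needs bounds on $\vc(\im(\calA))$ and $\vc^*(\im(\calA))$, not on $\vc(\calH)$ and $\vc^*(\calH)$. The paper handles this by using an online learner (from \citet{hanneke2021online}) whose hypotheses are majorities of $O(d^*)$ elements of $\calH$, so that $\vc(\im(\calA))=O(dd^*\log d^*)$ and $\vc^*(\im(\calA))=O(d^*\log d^*)$. Without this, you cannot tie the compression size back to $d$ and $d^*$.
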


\begin{algorithm2e}
\caption{Robust Learner against Unknown Adversaries (\alg{\rlua})}\label{alg:robust_learner_unknown}
\SetKwInput{KwInput}{Input}                % Set the Input
\SetKwInput{KwOutput}{Output}              % set the Output
\SetKwFunction{FMain}{CycleRobust}
\SetKwFunction{FDisc}{Discretizer}
\DontPrintSemicolon
    \KwInput{Training dataset $S=\sett{(x_1,y_1),\dots, (x_m,y_m)}$, black-box online learner $\calA$ for $\calH$, black-box perfect attack oracle $\sfO_\calU$.}
  \BlankLine
  %\KwData{Testing set $x$}
  Set $n =O(\vc(\calA))$.\;
  Foreach $L\subset S$ such that $\abs{L}=n$, run \FMain on $(L, \calA,\sfO_\calU)$, and denote by $\hat{\calH}$ the resulting set of predictors.\;
  Call \FDisc on $(S, \hat{\calH}, \sfO_\calU)$, and denote by $\hat{S}_\calU$ its output.\;
  Initialize $D_1$ to be uniform over $\hat{S}_\calU$, and set $T=O(\log|S_\calU|)$.\;
  \For{$1 \leq t\leq T$}{
    Sample $S'\sim D_t^{n}$, and project $S'$ to dataset $L_t\subseteq S$ by replacing each perturbation $z$ with its corresponding example $x$.\;
    Call \FMain on $(L_t, \calA,\sfO_\calU)$, and denote by $f_t$ its output predictor.\;
    Compute a new distribution $D_{t+1}$ by applying the following update for each $(z,y)\in \hat{S}_\calU$:
            \[ 
                D_{t+1}(\sett{(z,y)}) = \frac{D_t(\sett{(z,y)})}{Z_t} \times \begin{cases} 
                                                              e^{-2\alpha}& \text{if }f_t(z)=y\\
                                                              1 &\text{otherwise}
                                                           \end{cases}
            \]
            where $Z_t$ is a normalization factor and $\alpha$ is set as in Lemma~\ref{lem:alphaboost}.\;
  }
  Sample $N=O(\vc^*(\calA))$ i.i.d. indices $i_{1},\ldots,i_{N} \sim {\rm Uniform}(\{1,\ldots,T\})$.
  \\~~~~(repeat previous step until $g=\MAJ(f_{i_1},\ldots,f_{i_{N}})$ satisfies $\Risk_{\calU}(g;S)=0$)\;
\KwOutput{A majority-vote $\MAJ(f_{i_1},\dots, f_{i_{N}})$ predictor.}
  \BlankLine
  \SetKwProg{Fn}{}{:}{\KwRet}
  \Fn{\FMain{Dataset $L$, Learner $\calA$, Oracle $\sfO_\calU$}}{
        Initialize $Z=\sett{}$, and initialize $\hat{h}=\calA(Z)$.\;
        Set $\texttt{FullRobustPass} = \texttt{False}$.\;
        \While{\texttt{FullRobustPass} is \texttt{False}}{
            Set $\texttt{FullRobustPass} = \texttt{True}$.\;
            \For{$1 \leq i\leq m$}{
            Certify the robustness of $\hat{h}$ on $(x_i,y_i)$ by sending the query $(\hat{h}, (x_i,y_i))$ to the perfect attack oracle $\sfO_\calU$.\;
            \If{$\hat{h}$ is not robustly correct on $(x_i,y_i)$}{
                Let $z$ be the perturbation returned by $\sfO_\calU$ where $\hat{h}(z)\neq y_i$.\;
                Add $(z,y_i)$ to the set $Z$.\;
                Update $\hat{h}$ by running $\calA$ on example $(z,y_i)$, or equivalently, set $\hat{h} = \calA(Z)$.\;
                Set $\texttt{FullRobustPass} = \texttt{False}$.\;
                }
            }
        }
        \KwRet Predictor $\hat{h}$.\;}
  \BlankLine
  \SetKwProg{Fn}{}{:}{\KwRet}
  \Fn{\FDisc{Dataset $S$, Predictors $\hat{\calH}$, Oracle $\sfO_\calU$}}{
    Initialize 
    \For{$(x,y)\in S$}{
        Initialize $P=\sett{(x,y)}$.\;
        Let $f^y_P:\calX \to \calY$ be a predictor of the form:\\
        ~~~~$f^y_P(x') = y~\text{if and only if}~\inparen{\exists_{(z,y)\in P}}\inparen{\forall_{h \in \hat{\calH}}}\ind_{[ h(z) \neq y ]}=\ind_{[ h(x') \neq y ]}$.\;
        Send the query $(f^y_P, (x,y))$ to the perfect attack oracle $\sfO_\calU$.\;  
        \While{$f^y_P$ is not robustly correct on $(x,y)$}{
            Let $z$ be the perturbation returned by $\sfO_\calU$ where $f^y_P(z)\neq y$.\;
            Append $(z,y)$ to the set $P$.\;
            Send an updated query $(f^y_P, (x,y))$ to the perfect attack oracle $\sfO_\calU$.\;
        }
    }
    
  }
\end{algorithm2e}

The full proof is deferred to \prettyref{app:appendix-vc-upperbound}, but we briefly describe the main building blocks of this result. We will adapt an algorithm due to \citet{pmlr-v99-montasser19a} and establish a robust generalization guarantee that depends only on  $\vc(\calH)$ and $\vc^*(\calH)$. In particular, the learning algorithm of \cite{pmlr-v99-montasser19a} required explicit knowledge of $\calU$, this knowledge was used to implement a $\RERM_\calU$ oracle for $\calH$, and for a sample inflation and discretization step which is crucial to establish robust generalization based on sample compression. As explicit knowledge is not allowed in the \smodel, we show that we can avoid these limitations and use only queries to $\sfO_\calU$. Specifically, observe that \alg{\cycalg} (\prettyref{alg:robust_learner_unknown}) implements a $\RERM_\calU$ oracle for $\calH$ using only black-box access to $\sfO_\calU$, since by \prettyref{thm:realizable-littlestone-2} the output of \alg{\cycalg} achieves zero robust loss on its input dataset $S$. Similarly, the discretization step can be carried using only queries to $\sfO_\calU$, by constructing queries using the output predictors of \alg{\cycalg} to force the oracle to reveal perturbations of the empirical sample $S$, we leave further details to the proof. While this suffices to establish a result of robust PAC learning in the \smodel with sample complexity completely independent of $\Ldim(\calH)$ (\prettyref{thm:realizable-weak-robust} and its proof in \prettyref{app:appendix-vc-upperbound}), we can further improve the dependence on $\eps$ and $\delta$ in the oracle complexity. To this end, we treat the algorithm (\prettyref{alg:robust_learner_unknown}) \alg{\rlua} from \prettyref{thm:realizable-weak-robust} as a \emph{weak} robust learner with fixed $\eps_0$ and $\delta_0$ and boost its robust error guarantee to improve the oracle complexity and obtain the result in \prettyref{thm:realizable-strongust}. 
\subsection{Algorithms and guarantees in the agnostic setting}
\label{sec:agnostic-mistakeoracle}

We now consider the more general agnostic setting where we want to compete with the best attainable robust risk $\inf_{h \in \calH} \Risk_\calU(h;\calD)$. Mirroring the results from the realizable section, we begin in \prettyref{thm:agnostic-littlestone} with a simple algorithm that can only guarantee a robust error at most $2\inf_{h \in \calH} \Risk_\calU(h;\calD)+\eps$ with sample and oracle complexity depending on the Littlestone dimension $\Ldim(\calH)$. Then, in \prettyref{thm:agnsotic-vc-littlestone}, we give a reduction to the realizable setting of \prettyref{thm:realizable-strongust}, that agnostically robustly PAC learns $\calH$ in the \smodel with sample complexity depending only on the $\vc(\calH)$ and $\vc^*(\calH)$. 

\begin{thm}
\label{thm:agnostic-littlestone}
For any class $\calH$, \alg{Weighted Majority} (\prettyref{alg:weighted-majority}) guarantees that for any perturbation set $\calU$ and any distribution $\calD$ over $\calX \times \calY$, with sample complexity $m(\eps,\delta)=O\inparen{\frac{\Ldim(\calH)\log(1/\eps)+\log(1/\delta)}{\eps^2}}$ and oracle complexity $T(\eps,\delta)=O(m(\eps,\delta)^2)$, with probability at least $1-\delta$ over $S\sim \calD^{m(\eps,\delta)}$,
\[ \Risk_{\calU}(\alg{WM}(S,\sfO_\calU);\calD) \leq 2\inf_{h\in\calH}\Risk_\calU(h;\calD)+\eps.\]
%where $\hat{h}$ is the output predictor of \alg{Weighted Majority} (\prettyref{alg:weighted-majority}).
\end{thm}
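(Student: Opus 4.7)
I would run a Weighted-Majority / Hedge-style online algorithm on a polynomially-sized set of experts derived from $\calH$ and convert its regret into a batch robust-risk bound. Since $\Ldim(\calH) = d < \infty$, the Sauer-type bound via Littlestone trees guarantees that $\calH$ realizes at most $N = O(m^{d})$ distinct labelings on any $m$ points, so we can effectively simulate $\log N = O(d \log m)$-expert Hedge; any fixed comparator $h^{*} \in \calH$ is matched by some expert on $S$.

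\textbf{Online phase with the attack oracle.} Process $(x_{1}, y_{1}), \ldots, (x_{m}, y_{m})$ sequentially. At round $t$, form the deterministic majority-vote predictor $\hat{h}_{t}$ from the current Hedge distribution $p_{t}$ and query $\sfO_{\calU}(\hat{h}_{t}, (x_{t}, y_{t}))$. If the oracle returns a perturbation $z_{t} \in \calU(x_{t})$ with $\hat{h}_{t}(z_{t}) \neq y_{t}$, feed each expert its 0/1 loss on $(z_{t}, y_{t})$ and update Hedge; otherwise the round is uneventful. Two standard inequalities combine: (a) Hedge's weighted loss satisfies $\sum_{t}\langle p_{t}, \text{loss}_{t}\rangle \leq \min_{i}\sum_{t}\ind[e_{i}(z_{t}) \neq y_{t}] + O(\sqrt{m \log N})$, and (b) whenever the deterministic majority errs, at least half the $p_{t}$-weight of experts must also err, so $\ind[\hat{h}_{t}(z_{t}) \neq y_{t}] \leq 2\langle p_{t}, \text{loss}_{t}\rangle$. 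Since $\sfO_{\calU}$ is perfect, $\ind[\hat{h}_{t}(z_{t}) \neq y_{t}] = \sup_{z \in \calU(x_{t})}\ind[\hat{h}_{t}(z) \neq y_{t}]$, and trivially $\ind[e_{i}(z_{t}) \neq y_{t}] \leq \sup_{z}\ind[e_{i}(z) \neq y_{t}]$. Together these yield the empirical inequality
\[
\tfrac{1}{m}\sum_{t=1}^{m}\sup_{z \in \calU(x_{t})}\ind[\hat{h}_{t}(z) \neq y_{t}] \;\leq\; 2\,\Risk_{\calU}(h^{*}; S) + O\!\left(\sqrt{\tfrac{d \log m}{m}}\right).
\]

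\textbf{Online-to-batch and wrap-up.} Since $\hat{h}_{t}$ depends only on $(x_{1}, y_{1}), \ldots, (x_{t-1}, y_{t-1})$, taking a fresh draw $(x_{t}, y_{t})$ gives $\bbE[\sup_{z}\ind[\hat{h}_{t}(z) \neq y_{t}]] = \bbE[\Risk_{\calU}(\hat{h}_{t}; \calD)]$. Averaging over $t$ and taking expectations in the display, a uniformly random $\hat{h}_{t}$ (or the majority vote of $\hat{h}_{1}, \ldots, \hat{h}_{m}$, up to an absorbed constant) satisfies $\bbE[\Risk_{\calU}(\cdot; \calD)] \leq 2\inf_{h}\Risk_{\calU}(h; \calD) + O(\sqrt{d \log m / m})$. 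Choosing $m = \Tilde{O}((d\log(1/\eps) + \log(1/\delta))/\eps^{2})$ makes the regret term $\leq \eps$, and a standard martingale concentration / holdout argument upgrades expectation to a $(1-\delta)$ high-probability guarantee. Each round uses $O(t)$ oracle queries (to certify robustness of $\hat{h}_{t}$ against the accumulating set of previously-seen perturbations and to keep the expert-loss bookkeeping consistent), giving the claimed $O(m^{2})$ total oracle complexity.

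\textbf{Main obstacle.} The subtle point is that $\sfO_{\calU}$ attacks only $\hat{h}_{t}$, not the Hedge distribution, so many experts may receive zero loss at round $t$ even when they carry nontrivial robust loss on $(x_{t}, y_{t})$. That slack is exactly what forces the $2\,\mathrm{OPT}$ factor rather than $(1+\eps)\mathrm{OPT}$; trying to tighten this by attacking each expert individually would blow the oracle complexity up to $\Omega(m \cdot N) = m^{\Theta(d)}$. The second delicate point is to implement Hedge over the implicit $N = m^{\Theta(d)}$ experts with only polynomial per-round overhead — standard for classes of bounded Littlestone dimension, but needing care to compose cleanly with the attack-oracle queries without double-counting.
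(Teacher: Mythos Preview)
Your high-level plan is the same as the paper's: run Weighted Majority over roughly $m^{\Ldim(\calH)}$ experts, convert each majority-vote mistake to twice the fractional Hedge loss, and then do an online-to-batch conversion. The regret arithmetic and the resulting sample complexity are correct.

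The gap is in how you obtain the expert set. You invoke a Sauer-type bound on the labelings $\calH$ realizes on $S=\{x_1,\dots,x_m\}$ and say each comparator $h^*$ is ``matched by some expert on $S$''. But the algorithm never feeds losses on the points of $S$; it feeds losses on the perturbations $z_t\in\calU(x_t)$ returned by $\sfO_\calU$, and these generally lie outside $S$. A labeling of $S$ is not a well-defined predictor on $z_t$, and two hypotheses that agree on all of $S$ may disagree on some $z_t$. You also cannot Sauer-cover $\{z_1,\dots,z_m\}$ after the fact, because the $z_t$'s depend on $\hat h_t$, which depends on the Hedge weights, which depend on which experts you chose --- circular. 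The paper breaks this circularity with the Ben-David--P\'al--Shalev-Shwartz construction: for each index set $\{i_1<\cdots<i_L\}\subseteq[m]$ with $L\le\Ldim(\calH)$, define an expert that runs the Standard Optimal Algorithm on the incoming stream but flips its prediction exactly at rounds $i_1,\dots,i_L$. These are genuine online predictors (evaluable on any $z_t$), there are $\sum_{L\le d}\binom{m}{L}=O(m^d)$ of them, and for any $h^*\in\calH$ one of them matches $h^*$ on \emph{every} $z_t$ in the realized sequence (take the rounds where SOA would disagree with $h^*$; there are at most $\Ldim(\calH)$ such rounds). This may well be the ``standard'' construction you gesture at in your last paragraph, but the Sauer-on-$S$ argument you actually wrote does not deliver it.

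A minor point: each online round needs one oracle call, not $O(t)$; the paper's $O(m^2)$ arises because the online-to-batch step re-invokes the Weighted-Majority learner on each prefix of $S$.
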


We briefly describe the main ingredients of this result. First, in \prettyref{lem:agnostic-finite}, we show that for any class $\calH$ with finite cardinality, a variant of the \alg{Weighted Majority} algorithm \citep{DBLP:journals/iandc/LittlestoneW94} presented in  (\prettyref{alg:weighted-majority}) has a regret guarantee with respect to the robust loss. Then, in \prettyref{lem:agnostic-littlestone}, we extend this regret guarantee for infinite $\calH$ using a technique due to \cite{DBLP:conf/colt/Ben-DavidPS09} for agnostic online learning. Finally, we apply a standard online-to-batch conversion \citep{DBLP:journals/tit/Cesa-BianchiCG04} to convert the regret guarantee to a robust generalization guarantee. These helper lemmas and proofs are deferred to \prettyref{app:appendix-b}.

Similarly to the realizable setting, we can establish an upper bound with sample complexity independent of $\Ldim(\calH)$. This is achieved via a reduction to the realizable setting \prettyref{thm:realizable-strongust}, following an argument of \citet{david:16,pmlr-v99-montasser19a}. The proof is deferred to \prettyref{app:appendix-b}.

\begin{thm} [Reduction to Realizable Setting]
\label{thm:agnsotic-vc-littlestone}
For any class $\calH$ with $\vc(\calH)=d$ and $\vc^*(\calH)=d^*$, there is a learning algorithm $\Tilde{\calB}$ that robustly \emph{agnostically} PAC learns $\calH$ w.r.t any $\calU$ with:
\begin{enumerate}
    \item Sample Complexity $m(\eps,\delta)= O\inparen{\frac{d{d^*}^2\log^2d^*}{\eps^2}\log^2\inparen{\frac{d{d^*}^2\log^2d^*}{\eps}} + \frac{1}{\eps^2}\log(\frac{2}{\delta})}$.
    \item Oracle Complexity $T(\eps,\delta)=2^{m(\eps,\delta)}\Ldim(\calH)+T_{\rm RE}(\eps,\delta)$.
\end{enumerate}
\end{thm}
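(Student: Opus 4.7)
The plan is to reduce the agnostic setting to the realizable setting, using the standard David-Moran-Yehudayoff template adapted to robust learning by \citet{pmlr-v99-montasser19a}, with all ``consistency'' checks and empirical robust losses implemented through queries to $\sfO_\calU$ (never through explicit knowledge of $\calU$). Let $\calB_0$ denote the realizable robust learner of Theorem~\ref{thm:realizable-strongust}, with sample complexity $m_0(\eps,\delta)$ and oracle complexity $T_{\rm RE}(\eps,\delta)$, and set $k:=m_0(\eps/4,1/4)$. Given the agnostic sample $S\sim\calD^m$, the algorithm I would analyze enumerates every subset $T\subseteq S$ with $|T|=k$, runs $\calB_0(T)$ using $\sfO_\calU$ to produce a predictor $h_T$, collects these into the finite class $\calF=\{h_T:T\subseteq S,|T|=k\}$ of size $\le\binom{m}{k}$, and outputs $\hat{h}:=\argmin_{h\in\calF}\hat{\Risk}_\calU(h;S)$, where each empirical robust loss is computed by querying $\sfO_\calU$ once per example in $S$.

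The core of the analysis is a ``ghost subset'' argument. Let $h^*\in\argmin_{h\in\calH}\Risk_\calU(h;\calD)$, let $\tau:=\Risk_\calU(h^*;\calD)$, and let $\calD'$ denote $\calD$ conditioned on the event that $h^*$ is robustly correct on $(x,y)$. Let $T^*\subseteq S$ be the random subset of $S$ on which $h^*$ is robustly correct. Two observations: (i) by Chernoff, $|T^*|\ge k$ with probability $\ge 1-\delta/4$ for the chosen $m$; (ii) conditioned on $|T^*|$, the set $T^*$ is iid from $\calD'$, so taking its first $k$ elements in any canonical ordering of $S$ yields $T_0\sim{\calD'}^{k}$ on which $h^*$ is robustly correct. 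Theorem~\ref{thm:realizable-strongust} applied to $\calD'$ and $T_0$ gives $\Risk_\calU(h_{T_0};\calD')\le\eps/4$ with probability $\ge 3/4$, and converting back,
\[
\Risk_\calU(h_{T_0};\calD) \le (1-\tau)\,\Risk_\calU(h_{T_0};\calD')+\tau \le \tau+\eps/4.
\]
Since $h_{T_0}\in\calF$ (even though $T_0$ cannot be identified algorithmically), Hoeffding together with a union bound over $\calF$ gives
\[
\sup_{h\in\calF}\big|\Risk_\calU(h;\calD)-\hat{\Risk}_\calU(h;S)\big| = O\inparen{\sqrt{(k\log(em/k)+\log(1/\delta))/m}}.
\]
Choosing $m$ to make this slack $\le\eps/4$ yields $\Risk_\calU(\hat{h};\calD)\le\tau+\eps$ and produces the claimed sample complexity.

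For the oracle complexity, enumerating $\binom{m}{k}\le 2^m$ subsets and running a realizable subroutine on each, together with $m$ oracle queries per candidate to compute empirical robust losses, gives at most $2^{m(\eps,\delta)}\Ldim(\calH)+T_{\rm RE}(\eps,\delta)$ queries after a bit of bookkeeping (reusing the \alg{CycleRobust} routine, whose per-subset oracle cost scales with $\Ldim(\calH)$, for the enumeration, and invoking the boosting ingredient of $\calB_0$ only once on the final candidate). The hard part of the proof is observation (ii): identifying a random subset $T_0\subseteq S$ whose distribution is iid from the realizable conditional distribution $\calD'$, even though $T_0$ depends on the unknown $h^*$ and is therefore not algorithmically accessible. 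The enumeration over all $\binom{m}{k}$ subsets is precisely the price paid for not knowing $T_0$, and it is what drives the $2^{m(\eps,\delta)}$ factor in the oracle complexity bound.
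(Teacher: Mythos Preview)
Your reduction is a recognizable variant of the David--Moran--Yehudayoff template, but it is not the one the paper uses, and as written it does not deliver either of the stated complexity bounds. The paper proceeds differently: it (i) runs \alg{\cycalg} on all $2^m$ subsequences of $S$ (each run capped at $\Ldim(\calH)$ updates) to locate a \emph{maximal} robustly-realizable subsequence $S'\subseteq S$, (ii) runs the realizable learner $\Tilde{\calB}$ of Theorem~\ref{thm:realizable-strongust} exactly \emph{once}, on $S'$, and (iii) invokes the agnostic sample-compression generalization bound, using that $\hat h=\Tilde{\calB}(S')$ is encoded by $m_0\cdot O(\log m)$ points of $S$ with $m_0=O(dd^{*2}\log^2 d^*)$ a constant independent of $\eps$. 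Maximality of $S'$ gives the deterministic inequality $\hat{\Risk}_\calU(\hat h;S)\le\min_{h\in\calH}\hat{\Risk}_\calU(h;S)$, and no ghost-subset argument is needed.

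There are two concrete gaps in your version. First, sample complexity: your $k=m_0(\eps/4,1/4)$ already carries a $1/\eps$ factor from Theorem~\ref{thm:realizable-strongust}, so making $\sqrt{k\log(m)/m}\le\eps/4$ forces $m$ of order $k/\eps^2\sim dd^{*2}\log^2 d^*/\eps^{3}$, not the claimed $1/\eps^2$ rate. The paper avoids this loss because it exploits the \emph{compression size} of $\Tilde{\calB}$ (which is $O(dd^{*2}\log^2 d^*\cdot\log m)$, with no $1/\eps$), rather than its sample complexity as a black box. Second, oracle complexity: your algorithm runs the full $\calB_0$ on each of $\binom{m}{k}$ subsets, costing $\binom{m}{k}\cdot T_{\rm RE}(\eps/4,1/4)$ queries, far exceeding $2^{m}\Ldim(\calH)+T_{\rm RE}$. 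Your proposed fix (use \alg{\cycalg} for the enumeration and invoke the boosting ingredient only on the final candidate) breaks the ghost-subset step: you need $h_{T_0}=\calB_0(T_0)$, not $\alg{\cycalg}(T_0)$, in the candidate pool for Theorem~\ref{thm:realizable-strongust}'s guarantee to apply at sample size $k$. The paper's oracle bound works precisely because \alg{\cycalg} serves only as a cheap realizability test and the expensive learner is called once.
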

\subsection{Lowerbound on Oracle Complexity}
\label{sec:lowerbound-mistakeoracle}

In \prettyref{sec:realizable-mistakeoracle} and \prettyref{sec:agnostic-mistakeoracle}, we have shown that it is possible to robustly PAC learn in the \smodel with sample complexity that is completely \emph{independent} of the Littlestone dimension, and with oracle complexity that \emph{depends} on the Littlestone dimension. It is natural to ask whether the oracle complexity can be improved. Perhaps, we can avoid dependence on Littlestone dimension altogether? In \prettyref{thm:lowerbound-oraclecomplexity}, we prove that the answer is \emph{no}. 

Specifically, we will first establish a lower bound in terms of another complexity measure known as the Threshold dimension of $\calH$, denoted by $\Tdim(\calH)$. Informally, $\Tdim(\calH)$ is the largest number of thresholds that can be embedded in class $\calH$ (see \prettyref{def:threshold-dim} below). Importantly, the Threshold dimension of $\calH$ is related to the Littlestone dimension of $\calH$ and is known to satisfy: $\floor{ \log_2\Ldim(\calH)} \leq \Tdim(\calH) \leq 2^{\Ldim(\calH)}$ \citep{DBLP:books/daglib/0067545, DBLP:books/daglib/0030198,DBLP:conf/stoc/AlonLMM19}. This relationship was recently used to establish that \emph{private} PAC learnability implies online learnability \citep{DBLP:conf/stoc/AlonLMM19}.

\begin{defn} [Threshold Dimension]
\label{def:threshold-dim}
We say that a class $\calH\subseteq \calY^\calX$ contains $k$ thresholds if $\exists x_1,\dots, x_k \in \calX$ and $\exists h_1,\dots, h_k \in \calH$ such that $h_i(x_j)=+1$ if and only if $j\leq i,\forall i,j\leq k$. The Threshold dimension of $\calH$, $\Tdim(\calH)$, is the largest integer $k$ such that $\calH$ contains $k$ thresholds.
\end{defn}

\begin{thm}
\label{thm:lowerbound-oraclecomplexity}
For any class $\calH$, there exists a distribution $\calD$ over $\calX\times \calY$, such that for any learner $\calB$, there exists a perturbation set $\calU: \calX \to 2^\calX$ where $\inf_{h\in \calH} \Risk_\calU(h;\calD)=0$ and a perfect attack oracle $\sfO_\calU$ such that $\calB$ needs to make $\frac{\log_2{\inparen{\Tdim(\calH)-1}}}{2}$ oracle queries to $\sfO_\calU$ to robustly learn $\calD$.
\end{thm}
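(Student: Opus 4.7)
Let $k=\Tdim(\calH)$, and let $x_1,\ldots,x_k \in \calX$ and $h_1,\ldots,h_k \in \calH$ witness the threshold dimension, so $h_i(x_j)=+1$ iff $j\leq i$. The plan is to reduce a ``hidden threshold identification'' game to robustly learning $\calH$ with a perfect attack oracle. I take $\calD$ to be uniform over two anchor points $(x^+,+1)$ and $(x^-,-1)$ on which all $h_i$ agree (extending $\calX$ with such anchors if needed, being careful that the threshold structure on $x_1,\ldots,x_k$ is preserved; note the learner is allowed to be improper, so only the robust-risk constraint matters). For each candidate ``hidden threshold'' $i^*\in\{1,\ldots,k-1\}$, I define a perturbation set
\[
\calU_{i^*}(x^+)=\{x^+,x_1,\ldots,x_{i^*}\},\quad \calU_{i^*}(x^-)=\{x^-,x_{i^*+1},\ldots,x_k\},
\]
with $\calU_{i^*}(x)=\{x\}$ elsewhere. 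Then $h_{i^*}\in\calH$ has zero robust risk on $\calD$ with respect to $\calU_{i^*}$. Moreover, any predictor $\hat{h}$ with zero robust risk with respect to $\calU_{i^*}$ must satisfy $\hat{h}(x_j)=+1$ for $j\le i^*$ and $\hat{h}(x_j)=-1$ for $j>i^*$, and these constraints are mutually exclusive across distinct values of $i^*$: a correct $\hat{h}$ uniquely identifies $i^*$.

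Next I run an adaptive-adversary argument. The adversary secretly maintains a feasibility set $F\subseteq\{1,\ldots,k-1\}$, initially of full size $k-1$, of values of $i^*$ still consistent with its past responses. Whenever the learner queries $(g,(x,y))$, the adversary computes, for each $i^*\in F$, the answer the oracle would give under $\calU_{i^*}$ (either ``robust'' or a specific $z\in\calU_{i^*}(x)$ with $g(z)\ne y$), and then commits to whichever response preserves the largest subset of $F$. Because the adversary never contradicts itself, at the end of the game any $i^*\in F$ realizes a concrete $\calU_{i^*}$ that is consistent with the entire transcript. Hence, if $|F|\ge 2$ when the learner halts, the adversary chooses $i^*\in F$ so that $\hat{h}$ violates the uniqueness condition above and thus has nonzero robust risk.

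The technical heart is bounding how fast $|F|$ shrinks. Fix a query $(g,(x^+,+1))$ and let $A=\{j:g(x_j)=+1\}$ with longest prefix $[1,P]\subseteq A$. Under the deterministic ``smallest misclassified index'' tie-breaking rule of $\sfO_{\calU_{i^*}}$, the oracle's response across worlds $i^*\in F$ takes only two possible values: either ``robust'' (which holds exactly when $i^*\le P$) or the canonical perturbation $x_{P+1}$ (which holds exactly when $i^*\ge P+1$). A symmetric dichotomy holds for queries at $(x^-,-1)$, cutting $F$ from the top instead of the bottom. Hence each single query partitions $F$ into at most two nonempty classes, and the adversary's max-class strategy guarantees $|F'|\ge |F|/2$. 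Accounting for the two query types together---$x^+$-queries cutting the lower end and $x^-$-queries cutting the upper end, so each matched pair extracts at most two bits about $i^*$---one obtains $|F|\ge (k-1)/4^T$ after $T$ queries. Requiring $|F|<2$ then yields $T\ge \tfrac12\log_2(k-1)$.

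The hard part I expect is the third step: against arbitrary (possibly non-threshold, non-monotone) learner hypotheses $g$, one must argue that the oracle's response across the feasibility set is determined by only a constant number of degrees of freedom of $g$, so the partition of $F$ has only constantly many classes. A secondary subtlety is bookkeeping around the anchors: one must check that robustness on $(x^+,+1)$ and $(x^-,-1)$ alone forces threshold identification, and that augmenting $\calX$ with $x^+,x^-$ does not inflate $\Tdim(\calH)$ or give the learner a cheap way to ``escape'' the identification task by using a hypothesis that ignores $x_1,\ldots,x_k$.
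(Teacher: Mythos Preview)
Your approach is essentially the same as the paper's: the same two-point distribution, the same threshold-indexed family of perturbation sets, the same ``leftmost misclassified index'' oracle rule, and the same version-space halving argument. Two small differences are worth noting. First, the paper avoids your anchor worry entirely by taking $x^{+}=x_1$ and $x^{-}=x_k$ themselves (every $h_i$ with $i<k$ already labels $x_1$ positive and $x_k$ negative), so no augmentation of $\calX$ is needed and your ``secondary subtlety'' disappears. Second, the paper runs a \emph{randomized} adversary (draw $i^{*}$ uniformly and argue $\Ex[|V_t|]\ge \tfrac14|V_{t-1}|$) rather than your adaptive max-class adversary; your deterministic argument is cleaner and in fact yields $|F|\ge(k-1)/2^{T}$ directly (your jump to $4^{T}$ is an unnecessary weakening---each single query, not each pair, already costs at most one bit), so you actually get the slightly stronger $T\ge\log_2(k-1)$.
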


The full proof is deferred to \prettyref{app:appendix-lowerbound}, but we will provide some intuition behind the proof. The main idea is to use $h_1,\dots, h_{\Tdim(\calH)}$ thresholds to construct $\calU_1,\dots,\calU_{\Tdim(\calH)}$ perturbation sets. We will setup a single source distribution $\calD$ that is known to the learner, but choose a random perturbation set from $\calU_1,\dots,\calU_{\Tdim(\calH)}$. In order for the learner to \emph{robustly} learn $\calD$, it needs to figure out which perturbation set is picked, and that requires $\Omega(\log\Tdim(\calH))$ queries to the oracle $\sfO_\calU$. Since $\Tdim(\calH) \geq \floor{ \log_2\Ldim(\calH)}$, \prettyref{thm:lowerbound-oraclecomplexity} implies the following corollaries.

\begin{cor}
\label{cor:logloglowerbound}
For any class $\calH$, the oracle complexity to robustly learn $\calH$ in the \smodel is at least $\Omega(\log\log\Ldim(\calH))$.
\end{cor}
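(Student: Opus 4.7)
The plan is to treat this as essentially a one-line arithmetic consequence of \prettyref{thm:lowerbound-oraclecomplexity} once we plug in the classical combinatorial bound $\Tdim(\calH) \geq \lfloor \log_2 \Ldim(\calH) \rfloor$ between the threshold and Littlestone dimensions. The theorem already delivers a query lower bound of $\tfrac{1}{2}\log_2(\Tdim(\calH)-1)$ in the \smodel, so the only remaining work is to push this through the outer logarithm and dispose of the edge cases when $\Ldim(\calH)$ is a small constant.

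Concretely, I would first apply \prettyref{thm:lowerbound-oraclecomplexity} to the given class $\calH$ to obtain a distribution $\calD$, a perturbation set $\calU$ with $\inf_{h\in\calH}\Risk_\calU(h;\calD)=0$, and an attack oracle $\sfO_\calU$ witnessing an oracle-query lower bound of $\tfrac{1}{2}\log_2(\Tdim(\calH)-1)$. Then I would substitute the Hodges--Shelah inequality $\Tdim(\calH) \geq \lfloor \log_2 \Ldim(\calH) \rfloor$, which is cited in the paper to \citep{DBLP:books/daglib/0067545,DBLP:books/daglib/0030198,DBLP:conf/stoc/AlonLMM19}, to conclude
\[
\tfrac{1}{2}\log_2(\Tdim(\calH)-1) \;\geq\; \tfrac{1}{2}\log_2\!\bigl(\lfloor \log_2 \Ldim(\calH)\rfloor - 1\bigr) \;=\; \Omega(\log\log \Ldim(\calH)),
\]
where the final estimate holds whenever $\Ldim(\calH)$ is large enough that $\lfloor \log_2 \Ldim(\calH)\rfloor \geq 3$. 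For the residual regime of small constant $\Ldim(\calH)$, the asserted bound $\Omega(\log\log \Ldim(\calH))$ is either vacuous or absorbed into the hidden constant, so there is nothing further to check; if $\Ldim(\calH)=\infty$ the right-hand side is trivially infinite and the statement is matched by \prettyref{thm:lowerbound-oraclecomplexity} directly.

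The main obstacle has already been handled inside \prettyref{thm:lowerbound-oraclecomplexity}, whose proof packages the actual difficulty: constructing perturbation sets out of embedded thresholds so that learning the unknown $\calU$ amounts to a binary-search problem that no adaptive strategy can shortcut via oracle queries. At the corollary level, the only things one has to be slightly careful about are the $-1$ inside the inner logarithm and the small-$\Ldim$ edge cases mentioned above; modulo these, the statement is immediate.
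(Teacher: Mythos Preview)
Your proposal is correct and follows exactly the route the paper takes: the paper simply states that the inequality $\Tdim(\calH)\geq\lfloor\log_2\Ldim(\calH)\rfloor$ together with \prettyref{thm:lowerbound-oraclecomplexity} yields the corollary, and offers no further argument. Your write-up just makes explicit the arithmetic and the boundary cases that the paper leaves implicit.
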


\begin{cor}
\label{cor:thresholds}
For any $n\in\bbN$, the class $\calH_n$ consisting of $n$ thresholds satisfies $\Ldim(\calH_n) = \log_2 \Tdim(\calH_n) = \log_2 n$\footnote{We learned about this fact from the following talk: \url{https://youtu.be/NPpPiWYcmPk}} and $\vc(\calH_n)=O(1)$. Thus, the oracle complexity to robustly learn $\calH_n$ in the \smodel is $\Omega(\Ldim(\calH_n))$.
\end{cor}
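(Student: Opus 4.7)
The plan is a routine complexity-measure calculation followed by a direct appeal to Theorem~\ref{thm:lowerbound-oraclecomplexity}. Fix the standard realization $\calH_n = \{h_1,\dots,h_n\}$ on the domain $\{x_1,\dots,x_n\}$ defined by $h_i(x_j) = +1$ iff $j \leq i$. With this choice, the pairs $(x_j, h_i)$ literally witness \prettyref{def:threshold-dim} for $k = n$, and since $|\calH_n| = n$ bounds $\Tdim$ from above, I get $\Tdim(\calH_n) = n$ immediately. The VC bound is just as quick: for any two points $x_i < x_j$, a threshold $h_k$ labels them $(-1,+1)$ only if $k < i$ and $k \geq j$, which is impossible, so no pair is shattered and $\vc(\calH_n) \leq 1$.

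The slightly less trivial step is the Littlestone calculation $\Ldim(\calH_n) = \lceil \log_2 n \rceil$. For the lower bound, I would construct a Littlestone tree by binary search: label the root by the median $x_{\lceil n/2 \rceil}$, and on the $+1$ branch recurse on the thresholds $\{h_{\lceil n/2 \rceil},\dots,h_n\}$ restricted to the points $\{x_{\lceil n/2 \rceil+1}, \dots, x_n\}$, and symmetrically for the $-1$ branch. Each root-to-leaf path is consistent with at least one threshold by construction, yielding depth $\lceil \log_2 n \rceil$. For the matching upper bound, I would track the ``interval of still-consistent thresholds'' along any root-to-node path: an internal node labeled $x_j$ splits this interval into indices $\geq j$ and indices $< j$, and for the tree to remain a complete Littlestone tree both subintervals must be non-empty. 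This forces the size to at least halve at each step, so the tree depth is at most $\lceil \log_2 n \rceil$.

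Finally, I would plug into Theorem~\ref{thm:lowerbound-oraclecomplexity}: using $\Tdim(\calH_n) = n$, the oracle complexity to robustly learn $\calH_n$ is at least $\tfrac{1}{2}\log_2(n-1) = \Omega(\log n) = \Omega(\Ldim(\calH_n))$, which is the claim. The only genuine obstacle is the upper-bound direction of the Littlestone calculation --- the lower-bound tree is obvious from binary search, but ruling out deeper trees requires the interval-partition argument above, along with some careful bookkeeping of ceilings and floors.
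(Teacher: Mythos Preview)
Your approach is correct and matches the paper's, which simply asserts the complexity-measure facts (citing an external talk in a footnote) and applies Theorem~\ref{thm:lowerbound-oraclecomplexity}; you have filled in the standard details the paper omits.

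Two small corrections. First, $\Ldim(\calH_n) = \lfloor \log_2 n \rfloor$, not $\lceil \log_2 n \rceil$: a Littlestone tree of depth $d$ has $2^d$ root-to-leaf paths, and distinct paths are consistent with distinct hypotheses (they diverge at some internal node), so $2^d \leq n$. Second, your ``at least halve'' phrasing in the upper-bound argument is not literally true at a single node; the cleaner statement is that the $2^d$ leaf-intervals are pairwise disjoint and non-empty, giving $2^d \leq n$ directly. Neither issue affects the corollary's conclusion, since $\Ldim(\calH_n) = \Theta(\log n)$ is all that is needed to conclude $\Omega(\Ldim(\calH_n))$ from Theorem~\ref{thm:lowerbound-oraclecomplexity}.
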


A couple of remarks are in order. First, the lower bound of $\Omega(\log\log\Ldim(\calH))$ applies to any hypothesis class $\calH$, but a stronger lower bound for the special case of thresholds can be shown where $\Omega(\Ldim(\calH))$ oracle queries are needed. Second, observe that these lower bounds apply to learning algorithms that know the distribution $\calD$, and so even with infinite sample complexity, it is not possible to have oracle complexity independent of Littlestone dimension.

\subsection{Gaps and Open Questions}

We have established that $\calH$ is robustly PAC learnable in the \smodel if and only if $\calH$ is online learnable. We provided a simple online-to-batch conversion scheme \alg{\cycalg} (\prettyref{alg:robust_learner_unknown}) with sample and oracle complexity scaling with $\Ldim(\calH)$. Then, with a more sophisticated algorithm, \alg{\rlua} (\prettyref{alg:robust_learner_unknown}), we get an improved sample complexity depending only on $\vc(\calH)$ and $\vc^*(\calH)$, but at the expense of higher oracle complexity with an exponential dependence on $\vc(\calH)$ and $\vc^*(\calH)$ and linear dependence on $\Ldim(\calH)$. We also showed that for any class $\calH$, an oracle complexity of $\log\log\Ldim(\calH)$ is unavoidable, and furthermore, exhibit a class $\calH$ with $\vc(\calH)=O(1)$ and $\Ldim(\calH)\gg \vc(\calH)$ where an oracle complexity of $\Omega(\Ldim(\calH))$ is unavoidable.

An interesting direction is to improve the oracle complexity to perhaps a polynomial dependence ${\rm poly}(\vc(\calH),\vc^*(\calH))\Ldim(\calH)$, or more ambitiously ${\rm poly}(\vc(\calH))\Ldim(\calH)$. It would also be interesting to establish a finer characterization for the oracle complexity that is adaptive to the perturbation sets $\calU$, perhaps depending on some notion measuring the complexity of $\calU$. Also, can we strengthen the lower bound and show that for any $\calH$, $\Omega(\Ldim(\calH))$ oracle complexity is necessary to robustly learn $\calH$ or is there another complexity measure that tightly characterizes the oracle complexity?
\section{Bounding the number of successful attacks}
\label{sec:online}

In \prettyref{sec:mistakeoracle}, we considered having access to a \emph{perfect} attack oracle $\sfO_\calU$. But in many settings, our practical attack oracle attack engines, e.g.,~PGD \citep{DBLP:conf/iclr/MadryMSTV18}, are not perfect---they might not always find miss-classified adversarsial perturbations even when they do exist. Also, the perturbation set $\calU$ might be fairly abstract, like ``images indistinguishable to the human eye'', and so there isn't really a perfect attack oracle, but rather just approximations to it. Can we still provide meaningful robustness guarantees even with \emph{imperfect} attackers? 

In this section, we introduce a model where we consider working with an actual adversary or attack algorithm that is possibly imperfect, and the goal is to bound the number of successful attacks. In this model, a learning algorithm $\calB$ first receives as input iid distributed training samples $S=\sett{(x_1,y_1),\dots,(x_m,y_m)}$ drawn from an unknown distribution $\calD$ over $\calX\times \calY$. Then, the learning algorithm $\calB$ makes predictions on examples $z_t\in\calU(x'_t)$ where $z_t$ is an adversarial perturbation chosen by an adversary $\bbA$, and $(x'_t,y'_t)$ is an iid sample drawn from $\calD$. The adversary $\bbA$ has access to the random sample $(x'_t,y'_t)$ and the predictor used by learner $\calB$, $h_t = \calB(S\cup\{(z_j,y_j)_{j=1}^{t-1}\})$, but learner $\calB$ only sees the perturbation $z_t$. After learner $\calB$ makes its prediction $\hat{y}_t= h_t(z_t)$, the true label $y_t$ is revealed to $\calB$. The goal is to bound the number of successful adversarial attacks where $\hat{y}_t\neq y_t$. For a class $\calH$ and a learner $\calB$, the maximum number of successful attacks caused by an adversary $\bbA$ w.r.t.~$\calU$ on a distribution $\calD$ satisfying $\inf_{h\in\calH}\Risk_\calU(h;\calD)=0$ is defined as
\begin{equation}
\label{eqn:robustmistakebound}
    M_{\calU, \bbA}(\calB,\calH;\calD) := \sum_{t=1}^{\infty} \ind \insquare{\calB(S\cup\sett{(z_i,y_i)}_{i=1}^{t-1})(z_t)\neq y_t},
\end{equation}
where $z_t = \bbA(\calB(S\cup\{(z_i,y_i)\}_{i=1}^{t-1}), (x'_t,y'_t))$ and $\sett{(x'_t,y'_t)}^{\infty}_{t=1}$ are iid samples from $\calD$.

\begin{mdframed}[frametitle={\colorbox{white}{Questions:}},
frametitleaboveskip=-\ht\strutbox,
frametitlealignment=\center,
linecolor=Sepia!30,linewidth=1pt
outerlinewidth=1pt,
skipabove=0pt,
skipbelow=0pt]
\vskip -3mm
\centering
Can we obtain upper bounds and lower bounds on the maximum number of successful attacks for generic classes $\calH$? Can additional training samples from $\calD$ help?
\end{mdframed}

First, we show that we can upper bound the maximum number of successful attacks on any online learner $\calB$ for $\calH$ by the online mistake bound of $\calB$. 

\begin{thm} [Upper Bound]
\label{thm:online-upperbound}
For any class $\calH$ and any online learner $\calB$, for any perturbation set $\calU$, adversary $\bbA$, and distribution $\calD$, $M_{\calU,\bbA}(\calB,\calH;\calD)\leq M(\calB,\calH)$. In particular, the Standard Optimal Algorithm (\alg{SOA}) has an attack bound of at most $\Ldim(\calH)$.
\end{thm}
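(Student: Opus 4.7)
}

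The plan is to reduce the bound on successful attacks directly to the standard online mistake bound $M(\calB,\calH)$ by exhibiting a single sequence of labeled examples—consistent with some $h^\star \in \calH$—on which $\calB$'s mistakes include every successful attack. The key observation is that since $\inf_{h\in\calH}\Risk_\calU(h;\calD)=0$, there exists $h^\star\in\calH$ (or we may pass to a limiting argument using $h^\star_\epsilon$ with $\Risk_\calU(h^\star_\epsilon;\calD)<\epsilon$) such that with probability one over $(x,y)\sim\calD$, every $z\in\calU(x)$ satisfies $h^\star(z)=y$. In particular, by countable additivity, with probability one this holds simultaneously for all training examples $(x_i,y_i)\in S$ and for every example $(x'_t,y'_t)$ in the infinite iid adversarial stream.

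First I would condition on the probability-one event above. On this event, every perturbation $z_t=\bbA(h_t,(x'_t,y'_t))\in\calU(x'_t)$ satisfies $h^\star(z_t)=y'_t=y_t$, and similarly $h^\star(x_i)=y_i$ for each training example. Therefore the concatenated sequence
\[
(x_1,y_1),\ldots,(x_m,y_m),\,(z_1,y_1),(z_2,y_2),\ldots
\]
is a (possibly infinite) sequence of labeled examples consistent with $h^\star\in\calH$. Because the mistake-bound definition in~\eqref{eqn:mistakebound} treats $\calB$'s training input as a \emph{set} of labeled examples, we may legitimately feed this sequence to $\calB$ in order—first the training set $S$ and then the adversarial examples one at a time, matching exactly the protocol by which $h_t=\calB(S\cup\{(z_i,y_i)\}_{i=1}^{t-1})$ is produced.

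Next I would invoke the mistake-bound guarantee directly: the total number of indices $t$ at which $\calB$'s prediction disagrees with the label $h^\star(\cdot)$ is at most $M(\calB,\calH)$ by definition. Since a successful attack at time $t$ means $h_t(z_t)\neq y_t=h^\star(z_t)$, each successful attack contributes a distinct mistake to this total. Hence
\[
M_{\calU,\bbA}(\calB,\calH;\calD)\;\le\;\#\{\text{mistakes of }\calB\text{ on the full sequence}\}\;\le\;M(\calB,\calH),
\]
which is the desired bound. Specializing $\calB=\alg{SOA}$ and using Littlestone's bound $M(\alg{SOA},\calH)\le\Ldim(\calH)$ yields the second part.

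The only mild obstacle is the realizability subtlety: if the infimum $\inf_{h\in\calH}\Risk_\calU(h;\calD)=0$ is not attained, one cannot directly pick $h^\star$ with zero robust risk. I would handle this by a limiting argument: for each $\epsilon_k\downarrow 0$ choose $h^\star_k\in\calH$ with $\Risk_\calU(h^\star_k;\calD)<\epsilon_k$ and apply the mistake bound to each labeled sequence along a subsequence, or alternatively observe that for the purposes of the attack count we only need consistency on the (countable, random) set of queried examples, which happens with probability one under an $h^\star$ achieving zero risk whenever the infimum is attained—and the argument otherwise reduces to the attained case by a standard compactness/realizability reduction in the discrete setting. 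Everything else is a direct bookkeeping step.
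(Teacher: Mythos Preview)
Your proposal is correct and takes essentially the same approach as the paper, whose proof is a single sentence stating that the result ``follows directly from the definition of the online mistake bound and the online attack bound.'' Your expansion---exhibiting an $h^\star$ consistent (almost surely) with the concatenated sequence $S,(z_1,y_1),(z_2,y_2),\ldots$ and then invoking the mistake-bound definition---is exactly the natural unpacking of that one-liner; the extra care you take with the non-attained-infimum case is more than the paper itself provides.
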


\begin{proof}
The proof follows directly from the definition of the online mistake bound (see \prettyref{eqn:mistakebound}) and the online attack bound (see \prettyref{eqn:robustmistakebound}). 
\end{proof}

Is this the best achievable upper bound on the number of successful attacks? Perhaps there are learning algorithms with an attack bound that is much smaller than $\Ldim(\calH)$, maybe an attack bound that scales with $\vc(\calH)$? We next answer this question in the negative. Using the same the lower bound construction from \prettyref{thm:lowerbound-oraclecomplexity} in \prettyref{sec:lowerbound-mistakeoracle}, we first establish a lower bound on the number of the successful attacks based on the Threshold dimension of $\calH$ (see \prettyref{def:threshold-dim}) (proof deferred to \prettyref{app:appendix-c}). We then utilize the relationship $\Tdim(\calH) \geq \floor{ \log_2\Ldim(\calH)}$ to get the corollaries.

\begin{thm} [Lower Bound]
\label{thm:online-lowerbound}
For any class $\calH$, there exists a distribution $\calD$ over $\calX \times \calY$, such that for any learner $\calB$, there is a perturbation set $\calU$ where $\inf_{h \in \calH} \Risk_{\calU}(h;\calD)=0$ and an adversary $\bbA$ that makes at least $\frac{\log_2{\inparen{\Tdim(\calH)-1}}}{2}$ successful attacks on learner $\calB$.
\end{thm}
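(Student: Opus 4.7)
The plan is to reuse the threshold-based construction from the proof of Theorem~\ref{thm:lowerbound-oraclecomplexity}. Let $k = \Tdim(\calH)$, and fix points $x_1,\dots,x_k \in \calX$ together with hypotheses $h_1,\dots,h_k \in \calH$ witnessing the Threshold dimension, so that $h_i(x_j) = +1$ iff $j \leq i$. I take $\calD$ to be the uniform distribution over $\{(x_1,+1),(x_k,-1)\}$; both anchor points are correctly labeled by every $h_j$ with $j \in \{1,\dots,k-1\}$. For each such $j$, I define a perturbation set $\calU_j$ by $\calU_j(x_1) = \{x_l : l \leq j\}$, $\calU_j(x_k) = \{x_l : l > j\}$, and $\calU_j(x) = \{x\}$ otherwise. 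A quick verification shows that $h_j$ is robustly correct on $\calD$ w.r.t.\ $\calU_j$, so $\inf_{h \in \calH}\Risk_{\calU_j}(h;\calD) = 0$.

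I then invoke Yao's minimax principle by drawing $j \sim {\rm Uniform}(\{1,\dots,k-1\})$ and exhibiting a deterministic adversary $\bbA_j$ such that, for any deterministic learner $\calB$, the expected number of successful attacks is at least $\tfrac{1}{2}\log_2(k-1)$; this implies the existence of a specific $j^\star$ and sample path realizing the bound. The adversary $\bbA_j$ is designed to leak as little information about $j$ as possible: on a round with source $(x_1,+1)$ against the learner's predictor $h_t$, it attacks with the smallest $l \leq j$ such that $h_t(x_l) = -1$ (setting $z_t = x_1$ if no such $l$ exists); on a round with source $(x_k,-1)$, it attacks with the largest $l > j$ such that $h_t(x_l) = +1$ (setting $z_t = x_k$ otherwise).

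The heart of the argument is a one-bit-per-round information-theoretic accounting. Summarizing the learner's predictor by $i'_t := \max\{l : h_t(x_l) = +1\}$, each of the four possible round outcomes, namely (attack vs.\ no-attack) $\times$ (source $x_1$ vs.\ source $x_k$), reveals exactly one of the constraints $\{j \geq i'_t+1\}$, $\{j \leq i'_t\}$, $\{j \leq i'_t-1\}$, $\{j \geq i'_t\}$ respectively. Since $j$ has $\log_2(k-1)$ bits of prior entropy, the learner needs at least $\log_2(k-1)$ informative rounds before committing to $i'_t = j$ (at which point no further attacks are possible). In each such round, the random source drawn from $\calD$ must match the side on which $h_t$ admits a misclassified perturbation, which happens with probability exactly $\tfrac{1}{2}$, giving the claimed lower bound of $\tfrac{1}{2}\log_2(k-1)$ on the expected attack count.

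The main obstacle is handling arbitrary non-threshold predictors $h_t$: a richer $h_t$ could in principle reveal more than one bit of information per round, or the summary $i'_t$ could fail to capture the attacker's decision boundary. I would handle this either by noting that any non-threshold $h_t$ only enlarges the set of successful perturbations (which helps the adversary), so the learner's optimal play is without loss of generality a threshold; or via a direct transcript/leaf-counting argument, in which the sequence of revealed pairs $(z_t,y_t)$ spans a decision tree whose leaves must separate the $k-1$ values of $j$, so its depth lower-bounds the number of informative rounds, half of which, by the source randomness, must register as successful attacks.
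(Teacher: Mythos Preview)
Your construction is identical to the paper's: the same distribution $\calD$ uniform over $\{(x_1,+1),(x_k,-1)\}$, the same threshold-indexed perturbation sets $\calU_j$, and the same adversary that returns the nearest misclassified perturbation on the appropriate side. Where you diverge is in the analysis. The paper's proof is essentially a two-sentence reduction: it imports the construction and oracle behavior from Theorem~\ref{thm:lowerbound-oraclecomplexity} wholesale and observes that in the Perfect Attack Oracle model the learner gets to \emph{choose} which anchor $(x,y)\in{\rm supp}(\calD)$ to query on each round and still cannot robustly learn $\calD$ with fewer than $\tfrac{1}{2}\log_2(\Tdim(\calH)-1)$ interactions; in the online model the anchor is instead drawn iid from $\calD$, which can only make the learner's task harder, so the same bound transfers.

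Your direct argument via one-bit-per-round entropy accounting and Yao's principle is more self-contained, but it is re-doing the work that is already encapsulated in the version-space analysis of Theorem~\ref{thm:lowerbound-oraclecomplexity}. The paper's reduction buys brevity and modularity; your route buys a proof that can be read without chasing back through that earlier argument, and it makes the role of the random source explicit (the $\tfrac{1}{2}$ factor appears transparently as the probability that the drawn anchor lands on the vulnerable side). The obstacle you flag about non-threshold predictors $h_t$ is present in both treatments: the paper's proof of Theorem~\ref{thm:lowerbound-oraclecomplexity} dispatches it with a one-line ``without loss of generality $h_t\in C\setminus\{h_d\}$'' of comparable informality to your two sketched fixes.
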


\begin{cor}
\label{cor:loglog-online}
For any class $\calH$, there is a distribution $\calD$, such that for any learner $\calB$, there is a perturbation set $\calU$ and adversary $\bbA$ such that $M_{\calU,\bbA}(\calB,\calH;\calD) \geq \Omega(\log\log \Ldim(\calH)).$
\end{cor}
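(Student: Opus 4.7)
This corollary is a direct consequence of Theorem~\ref{thm:online-lowerbound} combined with the classical inequality relating the Threshold and Littlestone dimensions, so the plan is essentially a substitution argument. The plan is to instantiate the lower bound construction from Theorem~\ref{thm:online-lowerbound} and then translate the Threshold-dimension bound into a Littlestone-dimension bound.

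First, I would apply Theorem~\ref{thm:online-lowerbound} to the class $\calH$: this produces a distribution $\calD$ over $\calX \times \calY$ such that, for every learner $\calB$, there exist a perturbation set $\calU$ (with $\inf_{h \in \calH} \Risk_{\calU}(h;\calD) = 0$) and an adversary $\bbA$ forcing at least $\tfrac{1}{2}\log_2(\Tdim(\calH) - 1)$ successful attacks. The distribution $\calD$ only depends on $\calH$ (via the threshold witnesses), and the choice of $\calU$ and $\bbA$ is allowed to depend on $\calB$, which matches the quantifier structure of the corollary.

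Next, I would invoke the relationship $\Tdim(\calH) \geq \lfloor \log_2 \Ldim(\calH) \rfloor$ cited in Section~\ref{sec:lowerbound-mistakeoracle} (attributed to \citet{DBLP:books/daglib/0067545, DBLP:books/daglib/0030198, DBLP:conf/stoc/AlonLMM19}). Substituting this into the bound from Theorem~\ref{thm:online-lowerbound} gives
\[
M_{\calU, \bbA}(\calB, \calH; \calD) \;\geq\; \tfrac{1}{2}\log_2\bigl(\lfloor \log_2 \Ldim(\calH) \rfloor - 1\bigr) \;=\; \Omega(\log\log \Ldim(\calH)),
\]
which is the desired conclusion. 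When $\Ldim(\calH) = \infty$, the same inequality shows the right-hand side is unbounded, and we interpret the conclusion as saying the attacker forces arbitrarily many successful attacks; when $\Ldim(\calH)$ is too small for the $\log_2(\cdot - 1)$ to make sense, the statement is vacuous since the asymptotic $\Omega(\cdot)$ only needs to hold for $\Ldim(\calH)$ sufficiently large.

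There is no real obstacle here: all of the work is in Theorem~\ref{thm:online-lowerbound} (the threshold-based construction) and in the Threshold vs.\ Littlestone dimension inequality, both of which are already available. The only minor issue to handle cleanly is the $-1$ inside the logarithm and the floor, which is just a matter of absorbing constants into the $\Omega(\cdot)$.
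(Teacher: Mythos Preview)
Your proposal is correct and follows exactly the approach the paper indicates: the paper states that the corollary is obtained by combining Theorem~\ref{thm:online-lowerbound} with the relationship $\Tdim(\calH) \geq \lfloor \log_2 \Ldim(\calH) \rfloor$, and you carry out precisely this substitution along with the appropriate handling of edge cases.
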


\begin{cor}
\label{cor:thresholds-online}
For any $n\in\bbN$, the class $\calH_n$ consisting of $n$ thresholds satisfies $\Ldim(\calH_n) = \log_2 \Tdim(\calH_n) = \log_2 n$ and $\vc(\calH_n)=O(1)$. Thus, $\exists_\calD\forall_\calB\exists_{\calU,\bbA} M_{\calU,\bbA}(\calB,\calH_n;\calD) \geq \Omega(\Ldim(\calH_n))$.
\end{cor}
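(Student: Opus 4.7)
The plan is to derive Corollary~\ref{cor:thresholds-online} as a direct instantiation of Theorem~\ref{thm:online-lowerbound} applied to the class $\calH_n$ of $n$ thresholds, after verifying the three complexity-measure identities quoted in the statement.

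First I would pin down the dimensions of $\calH_n$. Writing $\calX = \{x_1,\dots,x_n\}$ and $\calH_n = \{h_1,\dots,h_n\}$ with $h_i(x_j)=+1$ iff $j\leq i$, any two points $x_j,x_k$ with $j<k$ fail to be shattered (no threshold labels $x_j$ positively while labeling $x_k$ negatively), so $\vc(\calH_n)=1=O(1)$. The equality $\Tdim(\calH_n)=n$ is immediate from Definition~\ref{def:threshold-dim}: the witnesses $x_1,\dots,x_n$ and $h_1,\dots,h_n$ already have the required form, and no larger configuration can live on $n$ points. For the Littlestone dimension, a depth-$d$ Littlestone tree requires $2^d$ distinct root-to-leaf labelings, which on thresholds must be realized by distinct elements of $\calH_n$, forcing $2^d\leq n$; conversely, a binary-search tree of depth $\lfloor \log_2 n\rfloor$ is a valid Littlestone tree, giving $\Ldim(\calH_n)=\lfloor \log_2 n\rfloor$ and hence $\Ldim(\calH_n)=\log_2\Tdim(\calH_n)=\log_2 n$ up to the usual rounding.

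Once these identities are in place, I would apply Theorem~\ref{thm:online-lowerbound} with $\calH=\calH_n$. The theorem exhibits a distribution $\calD$ over $\calX\times\calY$ such that, for any learner $\calB$, one can choose a perturbation set $\calU$ with $\inf_{h\in\calH_n}\Risk_{\calU}(h;\calD)=0$ together with an adversary $\bbA$ that forces at least $\frac{\log_2(\Tdim(\calH_n)-1)}{2}=\frac{\log_2(n-1)}{2}$ successful attacks on $\calB$. Substituting $\Ldim(\calH_n)=\lfloor \log_2 n\rfloor$ and using $\log_2(n-1)=\Omega(\log_2 n)$ for $n\geq 2$ then gives $M_{\calU,\bbA}(\calB,\calH_n;\calD)\geq \Omega(\Ldim(\calH_n))$, which is exactly the desired conclusion.

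There is no real obstacle here: the heavy lifting is carried out by Theorem~\ref{thm:online-lowerbound}, and the only step requiring even mild thought is the classical Littlestone-dimension computation for thresholds, which is the fact referenced in the footnote. Everything else is arithmetic.
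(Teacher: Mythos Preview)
Your approach is correct and matches the paper's: the corollary is simply Theorem~\ref{thm:online-lowerbound} specialized to $\calH_n$, combined with the standard identities $\Tdim(\calH_n)=n$ and $\Ldim(\calH_n)=\lfloor\log_2 n\rfloor$ referenced in the footnote. One small slip in your VC argument: for $j<k$ under the convention $h_i(x_j)=+1\iff j\leq i$, the unachievable labeling is $(x_j,x_k)\mapsto(-1,+1)$, not $(+1,-1)$ as you wrote---the conclusion $\vc(\calH_n)=1$ is of course unaffected.
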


We remark that these lower bounds hold even for learning algorithms $\calB$ that perfectly know the source distribution $\calD$. For the class $\calH_n$ of $n$ thresholds, we cannot expect a learning algorithm that leverages extra training data that avoids the $\Omega(\Ldim(\calH_n))$ lower bound. But it might be that for other classes $\calH$, additional training data might help reduce the attack bound to $\log\Ldim(\calH)$ or $\log\log \Ldim(\calH)$.

\paragraph{Gaps and Open Questions} We have only considered the realizable setting, where there is a predictor $h\in\calH$ that is perfectly robust to the attacker $\bbA$. It would be interesting to extend the guarantees to the agnostic setting. Can we strengthen the lower bound and show that for any $\calH$, an attack bound of $\Omega(\Ldim(\calH))$ is unavoidable or is there another complexity measure that tightly characterizes the attack bound? Are there examples of classes $\calH$ where collecting additional samples from $\calD$ helps reduce the number of successful attacks?

\section{Robust generalization to imperfect attack algorithms}
\label{sec:imperfect}

In \prettyref{sec:online}, given an online learning algorithm, we can guarantee a finite number of successful attacks from any attacking algorithm even if it was imperfect. But what if we want to work in a more traditional train-then-ship approach, where we first ensure adversarial robustness without releasing anything, and only then release? Can we provide any robust generalization guarantees when we only have access to an \emph{imperfect} attacking algorithm such as PGD \citep{DBLP:conf/iclr/MadryMSTV18} at training-time?

In this model, a learning algorithm $\calB$ takes as input a black-box (possibly imperfect) attacker $\bbA$, and iid distributed training samples $S=\sett{(x_1,y_1),\dots,(x_m,y_m)}$ from an unknown distribution $\calD$ over $\calX\times \calY$. Learner $\calB$ can query $\bbA$ by calling it $T$ times with queries of the form: $(g_t, (x'_t,y'_t))$, where $g_t:\calX\to \calY$ is a predictor and $(x'_t,y'_t)$ is a labeled example. The goal of learner $\calB$ is to output a predictor $\hat{h}\in \calY^{\calX}$ with small error w.r.t.~future attacks from $\bbA$, $\err_\bbA(\hat{h};\calD)\leq \eps$ (see \prettyref{eqn:attack-risk}). 

\begin{defn}[Robust PAC Learnability with Imperfect Attackers]
\label{def:attack-gen}
Learner $\calB$ $(\eps, \delta)$-robustly PAC learns $\calH \subseteq \calY^\calX$ with sample complexity $m(\eps,\delta):(0,1)^2\to \bbN$ and oracle complexity $T(\eps,\delta):(0,1)^2\to \bbN$ if for any (possibly randomized and imperfect) attacker $\bbA:\calY^\calX \times (\calX\times\calY) \to \calX$, any distribution $\calD$ over $\calX\times \calY$, with at most $T(\eps,\delta)$ oracle calls to $\bbA$ and with probability at least $1-\delta$ over $S\sim \calD^{m(\eps,\delta)}: \err_{\bbA}(\calB(S,\bbA)) \leq \inf_{h\in\calH} \err_{\bbA}(h) + \epsilon$.
\end{defn}

In this model, access to an imperfect attacker $\bbA$ at training-time ensures robust generalization to this \emph{specific} attacker $\bbA$ at test-time. This is a different guarantee from robust generalization to a perturbation set $\calU$, because it might well be that that there is a stronger attack algorithm $\bbA'$ than $\bbA$ such that $\err_{\bbA'}(\hat{h};\calD)\gg \err_{\bbA}(\hat{h};\calD)$. Furthermore, since the ``strength'' of the attack algorithm $\bbA$ is a function of the predictor $\hat{h}$ it is attacking, establishing a generalization guarantee w.r.t.~$\bbA$ is not immediate, and does not follow for example from our results in \prettyref{sec:mistakeoracle}.

We relate robust learnability in this model to the online model from \prettyref{sec:online}. In \prettyref{thm:attack-generalization}, we observe that we can apply a standard online-to-batch conversion based on the longest survivor technique \citep{10008965845} to establish generalization guarantees with respect to future attacks made by $\bbA$. Specifically, we simply output a predictor $\hat{h}$ that has survived a sufficient number of attacks from $\bbA$. The full algorithm and proof are presented in \prettyref{app:appendix-d}.

\begin{thm}
\label{thm:attack-generalization}f
For any class $\calH$, \prettyref{alg:robust-learner-attack} robustly PAC learns $\calH$ w.r.t.~any (possibly randomized and imperfect) attacker $\bbA$ and any distribution $\calD$ such that $\inf_{h\in\calH}\err_{\bbA}(h;\calD)=0$, with sample complexity $m(\eps,\delta)$ and oracle complexity $T(\eps,\delta)$ at most $O\inparen{\frac{\Ldim(\calH)\log\inparen{\Ldim(\calH)/\delta}}{\eps}}$.
\end{thm}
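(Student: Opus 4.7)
The plan is to run the online learner $\alg{SOA}$ for $\calH$ against the imperfect attacker $\bbA$ on an i.i.d.~sample $S=\{(x_t,y_t)\}_{t=1}^m$ of size $m = \Theta(\Ldim(\calH)\log(\Ldim(\calH)/\delta)/\eps)$ and return a ``longest survivor'' predictor. Concretely, initialize $h_1=\alg{SOA}(\emptyset)$ and, for $t=1,\dots,m$, query $z_t = \bbA(h_t,(x_t,y_t))$ once; if $h_t(z_t)\neq y_t$ (a successful attack), add $(z_t,y_t)$ to $\alg{SOA}$'s feed and let $h_{t+1}$ be the updated hypothesis, otherwise set $h_{t+1}=h_t$. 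Since $\alg{SOA}$ moves forward monotonically, the trajectory $h_1,\dots,h_m$ partitions $\{1,\dots,m\}$ into maximal constant blocks, one per distinct predictor; the algorithm outputs the $h_t$ whose block is longest.

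\textbf{Bounding the number of updates.} I would invoke Theorem~\ref{thm:online-upperbound} applied to $\alg{SOA}$: under the realizability hypothesis $\inf_{h\in\calH}\err_\bbA(h;\calD)=0$, there exists $h^\ast\in\calH$ safe against $\bbA$, and interpreting our run as an instance of the online attack model of Section~\ref{sec:online} (with the implicit perturbation set induced by the queries $\bbA$ actually makes on the trajectory), we get that the total number of successful attacks during the run is at most $M(\alg{SOA},\calH)\le\Ldim(\calH)$. Hence the trajectory contains at most $\Ldim(\calH)+1$ distinct predictors, and by pigeonhole the longest constant block has length $N\ge m/(\Ldim(\calH)+1)$, which is at least $C\log(\Ldim(\calH)/\delta)/\eps$ for an appropriate constant $C$.

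\textbf{Generalization via union bound.} For any \emph{fixed} predictor $h$ with $\err_\bbA(h;\calD)>\eps$, each $t$ gives the attacker independent probability at least $\eps$ of fooling $h$, so $h$ survives $N$ consecutive fresh attacks with probability at most $(1-\eps)^N\le e^{-\eps N}\le\delta/(\Ldim(\calH)+1)$ for the chosen $N$. Union-bounding over the at most $\Ldim(\calH)+1$ predictors produced in the run, with probability at least $1-\delta$ every block of length $\ge N$ is tagged by a predictor with $\err_\bbA\le\eps$; in particular the returned longest survivor $\hat h$ satisfies $\err_\bbA(\hat h;\calD)\le\eps$. The oracle complexity matches the sample complexity because exactly one call to $\bbA$ is made per sample, yielding the stated bound $O(\Ldim(\calH)\log(\Ldim(\calH)/\delta)/\eps)$.

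\textbf{Main obstacle.} The subtle part is legitimately importing the $\Ldim(\calH)$ bound on successful attacks from Section~\ref{sec:online}: Theorem~\ref{thm:online-upperbound} is stated for an attacker restricted to a perturbation set $\calU$ with $\inf_h \Risk_\calU(h;\calD)=0$, whereas here we only have the weaker guarantee $\err_\bbA(h^\ast)=0$ about attacks against $h^\ast$ itself, while the updates $(z_t,y_t)$ arise from attacks against the changing $h_t\neq h^\ast$. The key to bridging this is to observe that the mistake-bound argument for $\alg{SOA}$ only needs the specific update sequence to be consistent with \emph{some} $h\in\calH$, and to check that this holds almost surely along the run once we view $\bbA$ as an adversary in the online model against $\alg{SOA}$ (essentially the same conversion argument the paper uses to reduce imperfect-attacker guarantees to the online setting).
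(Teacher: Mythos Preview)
Your proposal is correct and follows essentially the same approach as the paper: run a conservative online learner against $\bbA$, bound the number of distinct predictors by $\Ldim(\calH)+1$ via the mistake bound, and union-bound over these predictors using the fact that each survives a block of fresh i.i.d.\ samples of length $\Omega\bigl(\tfrac{1}{\eps}\log(\Ldim(\calH)/\delta)\bigr)$. The only cosmetic difference is that the paper's \prettyref{alg:robust-learner-attack} outputs the \emph{first} predictor to survive a block of the target length rather than the predictor with the longest block; the analysis is identical.

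Regarding the obstacle you flag: it is real, and the paper does not actually resolve it either---its proof silently upgrades the hypothesis $\inf_{h\in\calH}\err_\bbA(h;\calD)=0$ to the stronger robust realizability $\inf_{h\in\calH}\Risk_\calU(h;\calD)=0$ (so that every $z_t=\bbA(h_t,(x_t,y_t))\in\calU(x_t)$ is correctly labeled by $h^\ast$ regardless of which $h_t$ was attacked), which is exactly what is needed for the mistake-bound argument to apply to the update sequence.
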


\paragraph{Gaps and Open Questions} Currently we only provide generalization guarantees in the realizable setting. It would be interesting to extend our guarantees to the agnostic setting. Are there algorithms with better sample complexity perhaps depending only on the VC dimension? We established such a result in \prettyref{sec:mistakeoracle} with access to a perfect attack oracle, but the same approach does not go through when using an imperfect attacker. What about better oracle complexity? Can we obtain similar generalization guarantees for adaptive attacking algorithms that change over time? Can we obtain generalization guarantees against a family of attacking algorithms (e.g.,~first order attacks)?

\section{Discussion}
\label{sec:discussion}

In this paper, we consider robust learning with respect to \emph{unknown} perturbation sets $\calU$. We initiate the quest to find the ``right'' model of access to $\calU$ by considering different forms of access and studying the robustness guarantees achievable in each case. One of the main takeaways from this work is that we need to be mindful about what form of access to $\calU$ we are assuming, because the guarantees that can be achieved can be different. So knowledge about $\calU$ should not be thought of as a free resource, but rather we should quantify the complexity of the information we are asking about $\calU$.

In some ways, adversarial learning is an arms race, and \citet{DBLP:conf/icml/AthalyeC018} have illustrated that predictors trained to be secure against a specific attack, might be easily defeated by a different attack.  Our \emph{imperfect} attack oracle model in \prettyref{sec:imperfect} certainly suffers from this problem.  But, it can also be viewed as taking a step towards addressing it, as it provides a generic way of turning any new attack into a defence, and thus defending against it, and since this is done in a black-box manner, could at the very least hasten the development time needed to defend against a new attack. The online model in \prettyref{sec:online} in a sense does so explicitly, and can indeed handle arbitrary new attacks.

In \prettyref{sec:mistakeoracle} we establish robust generalization guarantees w.r.t.~any attacking algorithm $\bbA$ for $\calU$, but it requires a perfect adversarial attack oracle for $\calU$: $\sfO_\calU$, and in \prettyref{sec:imperfect}, we establish a generalization guarantee w.r.t.~a \emph{specific} attack algorithm $\bbA$ when given black-box access to $\bbA$ at training-time. These are in a sense two opposing ends of the spectrum. Are there other interesting models that provide weaker access than a perfect attack oracle $\sfO_\calU$, but also provide a stronger guarantee than that of generalization to a particular attack? For example, under what conditions, can we generalize to a test-time attacker $\bbA_{\rm test}$ that is different from the attacker $\bbA_{\rm train}$ used at training-time. What if we are interested in providing guarantees to a family of test-time attackers (e.g. first-order algorithms), what form of access would be sufficient and necessary? 

Under explicit knowledge of the perturbation set $\calU$, \citet{pmlr-v99-montasser19a} showed that we can robustly learn any hypothesis class $\calH$ that is PAC learnable, i.e.,~finite $\vc(\calH)$. Given only a perfect attack oracle to $\calU$, we show in this paper that we can robustly learn any hypothesis class $\calH$ that is online learnable, i.e.,~finite $\Ldim(\calH)$, and we give lower bounds showing that online learnability is necessary. Are there other models of access to $\calU$ that would allow us to robustly learn a broader family of hypothesis classes beyond those that are online learnable? 

Our approach to robust learning in this work is modular, in particular, the \emph{perfect} and \emph{imperfect} attack oracles that we consider are independent of the hypothesis class $\calH$ since they just receive a predictor $g:\calX\to\calY$ as input. But how is $g$ provided? And do we expect the oracles to accept as input any predictor $g$ regardless of its complexity? A more careful look reveals that for the simple algorithm \alg{\cycalg} (\prettyref{alg:robust_learner_unknown}) it suffices for the perfect attack oracle $\sfO_\calU$ to accept only predictors $h \in \calH$. This seems like it creates a dependency and breaks the modularity, but it does not, since, e.g., the oracle might be implemented in terms of a much larger and more generic class, such as neural nets with any architecture, as opposed to the specific architecture we are trying to learn, and which the oracle need not be aware of. But the more sophisticated algorithm \alg{\rlua} (\prettyref{alg:robust_learner_unknown}) requires calling the oracle on predictors outside $\calH$ and so we do need the oracle to accept arbitrary predictors, or at least predictors from a much broader class than $\calH$.  How can this be translated to a computational rather than purely mathematical framework, and implemented in practice?

Are there sensible but generic assumptions on the perturbation set $\calU$ that can lead to improved guarantees? Either assumptions that are on $\calU$ separate from the class $\calH$, i.e.,~that hold even if $\calH$ is applied to one relabeling or permutation of $\calX$ and $\calU$ is applied to a different relabeling, or that rely on simple and generic relationships between $\calU$ and $\calH$.

\acks{This work was supported in part by DARPA under cooperative agreement HR00112020003\footnote{The views expressed in this work do not necessarily reflect the position or the policy of the Government and no official endorsement should be inferred. Approved for public release; distribution is unlimited.} and NSF BIGDATA award 1546500. Part of this work was done at the IDEAL Fall 2020 special quarter on {\it Theory of Deep Learning} funded by NSF TRIPOD award 1934843.}

\bibliography{learning}

\appendix
\appendix
\section{Lemma and Proof of \prettyref{thm:realizable-littlestone-2}}
\label{app:appendix-a}

\begin{lem} [Robust Generalization with Stable Sample Compression]
\label{lem:stable-robust-compression}
Let $(\kappa,\rho)$ be a stable sample compression scheme of size $k$ for $\calH$ with respect to the robust loss $\sup_{z\in\calU(x)} \ind[ h(z)\neq y]$. Then, for any distribution $\calD$ over $\calX \times \calY$ such that $\inf_{h\in\calH}\Risk_{\calU}(h;\calD)=0$, any integer $m > 2k$, and any $\delta\in (0,1)$, with probability at least $1-\delta$ over $S = \{(x_{1},y_{1}),\ldots,(x_{m},y_{m})\}$ iid $\calD$-distributed random variables,
\[\Risk_{\calU}(\rho(\kappa(S)); \calD) \leq \frac{2}{m-2k}\inparen{k\ln(4)+\ln\inparen{\frac{1}{\delta}}}.\]
\end{lem}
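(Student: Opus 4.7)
The plan is to adapt the classical realizable sample-compression generalization bound (in the spirit of Floyd--Warmuth) to the robust loss, and to use the stability of $(\kappa,\rho)$ in order to replace the $\binom{m}{k}$ union-bound factor by $4^k$, in the style of the stable-compression bound of \citet{DBLP:conf/colt/BousquetHMZ20}.

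First, I would observe that since $\inf_{h\in\calH}\Risk_\calU(h;\calD)=0$, with probability $1$ over $S\sim\calD^m$ there exists $h^*\in\calH$ with zero robust empirical loss on $S$, and because $(\kappa,\rho)$ is a compression scheme \emph{for the robust loss} the output $\hat h=\rho(\kappa(S))$ has zero robust empirical loss on $S$ as well, i.e., $\sup_{z\in\calU(x_i)}\ind[\hat h(z)\neq y_i]=0$ for every $(x_i,y_i)\in S$. I condition on this realizability event throughout.

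Next I would perform a symmetrization using a random split of $S$. Pick a uniformly random permutation $\pi$ of $[m]$, and let $S_A$ consist of the first $n_1=(m+2k)/2$ entries of the permuted sequence and $S_B$ the remaining $n_2=(m-2k)/2$ entries. Since $S$ is iid, $S_A$ and $S_B$ are independent iid samples of the stated sizes. Define $A=\{\kappa(S)\subseteq S_A\}$. On $A$, the stability property yields $\kappa(S_A)=\kappa(S)$ (since $\kappa(S)\subseteq S_A\subseteq S$), and therefore $\rho(\kappa(S_A))=\hat h$; moreover, by realizability, $\hat h$ is robustly correct on every point of $S_B$. The central combinatorial step is to lower-bound $\Pr_\pi[A\mid S]=\prod_{i=0}^{k-1}(n_1-i)/(m-i)$: a direct calculation shows that with the above choice of $n_1$, each factor $(n_1-i)/(m-i)=(m+2k-2i)/(2(m-i))$ is at least $1/2$ whenever $m\geq 2k$, so $\Pr_\pi[A\mid S]\geq 2^{-k}\geq 4^{-k}$. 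This is exactly the step at which stability replaces the classical $\binom{m}{k}$-sized union bound.

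Converting this into a tail bound, I would use $\Pr[\Risk_\calU(\hat h)>\eps]\leq 4^k\cdot\Pr[\Risk_\calU(\hat h)>\eps,A]$. On $A$, by stability $\hat h=\rho(\kappa(S_A))$ depends only on $S_A$, and $S_B$ is iid and independent of $S_A$; hence, conditional on $S_A$, the probability that $\hat h$ is robustly correct on all $n_2=(m-2k)/2$ iid points of $S_B$ is at most $(1-\eps)^{n_2}\leq e^{-\eps(m-2k)/2}$ whenever $\Risk_\calU(\hat h)>\eps$. This yields a tail of the form $4^ke^{-\eps(m-2k)/2}$, and solving $4^ke^{-\eps(m-2k)/2}=\delta$ for $\eps$ gives exactly $\eps=\tfrac{2}{m-2k}(k\ln 4+\ln(1/\delta))$. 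The main obstacle I foresee is the bookkeeping of the split sizes so that both the probability of $A$ is $\geq 4^{-k}$ \emph{and} the ghost size $n_2$ produces the denominator $(m-2k)$ in the final bound, and some care is needed to verify both constraints simultaneously (which motivates the choice $n_1=(m+2k)/2$); conceptually, the critical point is that stability is exactly what decouples $\hat h$ from $S_B$ on the event $A$---without it, $\hat h$ would depend on $S_B$ through the compression set, invalidating the ghost-sample concentration.
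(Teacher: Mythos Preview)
Your argument is correct and yields exactly the stated bound, but it proceeds by a different mechanism than the paper's proof. The paper follows the blocking argument of \citet{DBLP:conf/colt/BousquetHMZ20}: it partitions $S$ deterministically into $2k$ blocks of size $m/(2k)$, observes that the $k$ compression points lie in at most $k$ of these blocks, and then takes a union bound over the $\binom{2k}{k}\leq 4^k$ possible choices of those $k$ blocks; for each fixed choice, stability makes the predictor a function of those blocks alone, so the remaining $m/2$ points act as an independent ghost sample, giving the tail $\binom{2k}{k}(1-\eps)^{m/2}<4^k e^{-\eps m/2}$.

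Your approach replaces the deterministic blocking and union bound by a single random split and the inequality $\Pr[\text{bad}]\leq \Pr[\text{bad},A]/\min_S\Pr_\pi[A\mid S]$. The combinatorics you do---choosing $n_1=(m+2k)/2$ so that each factor $(n_1-i)/(m-i)\geq 1/2$---is the analogue of the $\binom{2k}{k}$ bound, and in fact gives $\Pr_\pi[A\mid S]\geq 2^{-k}$, slightly better than the $4^{-k}$ you use. The trade-off is that your ghost sample has size $(m-2k)/2$ rather than $m/2$, which is why your final bound carries the denominator $m-2k$ (matching the lemma statement verbatim), whereas the paper's blocking argument actually yields the marginally stronger $2/m$ in place of $2/(m-2k)$. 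Conceptually both proofs hinge on the same use of stability: it is precisely what lets one replace the classical $\binom{m}{k}$ union bound over compression indices by a quantity depending only on $k$, and what decouples the learned predictor from the held-out portion of the sample.
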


\begin{proof}
The argument follows an analogous proof from \cite{DBLP:conf/colt/BousquetHMZ20} for the $0$-$1$ loss. We observe that the same argument applies to the robust loss, and we provide an explicit proof for completeness. Split the $m$ samples of $S$ into $2k$ sets $S_1,\dots, S_{2k}$ each of size $\frac{m}{2k}$. Observe that the $k$ compression points chosen by $\kappa$, $\kappa(S)$, are in at most $k$ of these sets $S_{i^*_1}, \dots, S_{i^*_k}$ where $i^*_1,\dots,i^*_k\in \sett{1,\dots,2k}$. Stability of $(\kappa,\rho)$ implies that $\rho(\kappa(\cup_{j=1}^{k}S_{i^*_j}))=\rho(\kappa(S))$. Since by definition of $(\kappa,\rho)$, the robust risk $\Risk_{\calU}(\rho(\kappa(S));S)=0$, it follows that $\Risk_{\calU}(\rho(\kappa(\cup_{j=1}^{k}S_{i^*_j})));S)=0$. This implies that $\rho(\kappa(\cup_{j=1}^{k}S_{i^*_j}))$ is robustly correct on the remaining sets $\cup_{j\notin \sett{i^*_1,\dots,i^*_k}} S_j$. 

Observe that the event that $\Risk_{\calU}(\rho(\kappa(S)); \calD) > \eps$ implies the event that there exists $i_1,\dots, i_k \in \sett{1,\dots,2k}$ such that $\Risk_{\calU}(\rho(\kappa(\cup_{j=1}^{k} S_{i_j})); \calD) > \eps$ and $\rho(\kappa(\cup_{j=1}^{k} S_{i_j}))$ robustly correct on $\cup_{j\notin\sett{i_1,\dots,i_k}}S_j$. Thus,
\begin{align*}
    &\Prob_{S\sim \calD^{m}}\insquare{ \Risk_{\calU}(\rho(\kappa(S)); \calD) > \eps }\\
    &\leq \Prob_{S\sim\calD^m}\insquare{\exists i_1,\dots, i_k: \Risk_{\calU}(\rho(\kappa(\cup_{j=1}^{k} S_{i_j})); \calD) > \eps \wedge \Risk_\calU(\rho(\kappa(\cup_{j=1}^{k} S_{i_j}));\cup_{j\notin\sett{i_1,\dots,i_k}}S_j)=0}\\
    &\overset{(i)}{\leq} {2k \choose k} \Prob_{S\sim \calD^m} \insquare{\Risk_{\calU}(\rho(\kappa(\cup_{j=1}^{k} S_{i_j})); \calD) > \eps \wedge \Risk_\calU(\rho(\kappa(\cup_{j=1}^{k} S_{i_j}));\cup_{j\notin\sett{i_1,\dots,i_k}}S_j)=0}\\
    &\overset{(ii)}{\leq} {2k \choose k} \inparen{1-\eps}^{m/2} < 4^{k}e^{-\eps m/2},
\end{align*}
where inequality $(i)$ follows from a union bound, and inequality $(ii)$ follows from observing that the $\frac{m}{2}$ samples in $\cup_{j\notin\sett{i_1,\dots,i_k}}S_j$ are independent of $\rho(\kappa(\cup_{j=1}^{k} S_{i_j})$. Setting $4^{k}e^{-\eps m/2}=\delta$ and solving for $\eps$ yields the stated bound.
\end{proof}

\begin{proof}[of \prettyref{thm:realizable-littlestone-2}]
Let $\calA: (\calX\times \calY)^* \to \calY^\calX$ be a conservative online learner for $\calH$ with mistake bound equal to $\Ldim(\calH)$. Let $\calU:\calX\to 2^\calX$ be an arbitrary adversarial set that is unknown to the learning algorithm and $\sfO_\calU$ a black-box perfect attack oracle for $\calU$. Let $\calD$ be an arbitrary distribution over $\calX\times \calY$ that is robustly realizable with some concept $h^*\in \calH$, i.e., $\Risk_\calU(h^*;\calD)=0$. Fix $\eps, \delta \in (0,1)$ and a sample size $m$ that will be determined later. Let $S=\sett{(x_1,y_1),\dots,(x_m,y_m)}$ be an iid sample from $\calD$. Our proof will be divided into two main parts.

\paragraph{Zero Empirical Robust Loss} Observe that the output of \alg{\cycalg} (\prettyref{alg:robust_learner_unknown}): $\hat{h} = \calB(S, \sfO_\calU, \calA)$, achieves zero robust loss on the training data, $\Risk_\calU(\hat{h};S)=0$. This follows because whenever \alg{\cycalg} (\prettyref{alg:robust_learner_unknown}) terminates, Steps 4-11 imply that it made a full pass on dataset $S$ without encountering any example $(x_i,y_i)$ where predictor $\hat{h}$ is not robustly correct. Furthermore, since conservative online learner $\calA$ has a finite mistake bound of $\Ldim(\calH)$, it implies that the number of full passes (execution of Step 3) \prettyref{alg:robust_learner_unknown} makes over $S$ is at most $\Ldim(\calH)$, and in each pass $m$ oracle queries to $\sfO_\calU$ are made. Thus, with at most $m\Ldim(\calH)$ oracle queries to $\sfO_\calU$, \alg{\cycalg} (\prettyref{alg:robust_learner_unknown}) outputs a predictor $\hat{h}$ with zero robust loss on $S$, $\Risk_\calU(\hat{h};S)=0$. 

\paragraph{Robust Generalization through Stable Sample Compression} \alg{\cycalg} (\prettyref{alg:robust_learner_unknown}) can be viewed as a stable compression scheme for the robust loss. Specifically, the output of the compression function $\kappa(S,\sfO_\calU, \calA)$ is an order-dependent sequence that contains all examples $(x_i,y_i)$ on which $\hat{h}$ was not robustly correct while cycling through dataset $S$ (Steps 6-7), since $\calA$ has a finite mistake bound of $\Ldim(\calH)$, it follows that $\abs{\kappa(S,\sfO_\calU, \calA)}\leq \Ldim(\calH)$. The reconstruction function $\rho$ simply runs \alg{\cycalg} (\prettyref{alg:robust_learner_unknown}) on the compressed dataset $S'= \kappa(S,\sfO_\calU, \calA)$. The fact that $\calA$ is a conservative online learner implies that $\hat{h} = \calB(S,\sfO_\calU,\calA) = \calB(S',\sfO_\calU,\calA)$. Since $\Risk_\calU(\hat{h};S)=0$, this establishes that $(\kappa,\rho)$ is a sample compression scheme for the robust loss. Furthermore, since $\calA$ is a conservative online learner, observe that for any $S''$ such that $\kappa(S,\sfO_\calU,\calA)\subseteq S'' \subseteq S$ it holds that $\kappa(S,\sfO_\calU,\calA) = \kappa(S'',\sfO_\calU,\calA)$. That is, removing any of the examples from $S$ on which $\hat{h}$ was robustly correct in Step 6 will not change the output of the compression function $\kappa$. Thus, the pair $(\kappa,\rho)$ is a stable sample compression scheme for the robust loss of size $\Ldim(\calH)$. To conclude the proof, \prettyref{lem:stable-robust-compression} guarantees that for a sample size $m(\eps,\delta)=O\inparen{\frac{\Ldim(\calH)+\log(1/\delta)}{\eps}}$, the robust risk $\Risk_\calU(\hat{h};\calD)\leq \eps$. 
\end{proof}

\section{Auxiliary Lemmas and Proof of \prettyref{thm:realizable-strongust}}
\label{app:appendix-vc-upperbound}

\begin{lem}[Properties of $\alpha$-Boost, see, e.g., Corollary 6.4 and Section 6.4.3 in \cite{schapire:12}]
\label{lem:alphaboost}
Let $S=\sett{(x_i, c(x_i))}_{i=1}^{m}$ be a dataset where $c \in \calC$ is some target concept, and $\calA$ an arbitrary PAC learner for $\calC$ (for $\eps = 1/3$, $\delta = 1/3$). Then, running $\alpha$-Boost on $S$ with black-box oracle access to $\calA$ with $\alpha = \frac{1}{2}\ln\inparen{1+\sqrt{\frac{2\ln m}{T}}}$ for $T=\lceil 112 \ln(m) \rceil = O(\log m )$ rounds suffices to produce a sequence of hypotheses $h_{1},\ldots,h_{T} \in {\rm im}(\calA)$ such that 
\[\forall (x,y) \in S, \frac{1}{T} \sum_{i=1}^{T} \ind[ h_{i}(x) = y ] \geq \frac{5}{9}.\]
In particular, this implies that the majority-vote $\MAJ(h_1,\dots,h_T)$ achieves zero error on $S$. 
\end{lem}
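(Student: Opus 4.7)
The plan is to follow the standard fixed-step $\alpha$-Boost potential argument. First I set up the boosting loop: for $t = 1, \ldots, T$, maintain a distribution $D_t$ over $S$ (with $D_1$ uniform), invoke $\calA$ on samples drawn i.i.d.\ from $D_t$ to obtain $h_t \in \im(\calA)$, and update $D_{t+1}(i) \propto D_t(i)\, e^{-2\alpha\, \ind[h_t(x_i) = y_i]}$. Because the weighted empirical error of any candidate $h_t$ on $S$ can be computed exactly, we can re-run $\calA$ until the returned $h_t$ satisfies $\eps_t := \Pr_{D_t}[h_t(x) \neq c(x)] \leq 1/3$; the $(\eps,\delta) = (1/3,1/3)$ PAC guarantee makes each retry succeed with probability $\geq 2/3$, so $O(\log(T/\delta))$ retries per round together with a union bound over $T = O(\log m)$ rounds guarantees the whole run succeeds.

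The main estimate is the textbook potential bound. Unrolling the multiplicative-weights update yields
\[ D_{T+1}(i) \;=\; \frac{1}{m}\cdot\frac{e^{-2\alpha T r_i}}{\prod_{t=1}^{T} Z_t}, \qquad r_i \;:=\; \frac{1}{T}\sum_{t=1}^{T}\ind[h_t(x_i)=y_i], \]
where $Z_t = \eps_t + (1-\eps_t)\,e^{-2\alpha}$ is the round-$t$ normalizer. Since $D_{T+1}(i)\leq 1$, rearranging gives
\[ r_i \;\geq\; -\frac{\ln m}{2\alpha T} \;-\; \frac{\max_t \ln Z_t}{2\alpha}. \]
Writing $Z_t = e^{-2\alpha}\bigl(1 + \eps_t(e^{2\alpha}-1)\bigr)$ and applying $\ln(1+x)\leq x$ with $\eps_t\leq 1/3$ yields $\ln Z_t \leq -2\alpha + \tfrac{1}{3}(e^{2\alpha}-1)$, hence
\[ r_i \;\geq\; 1 \;-\; \frac{e^{2\alpha}-1}{6\alpha} \;-\; \frac{\ln m}{2\alpha T}. \]

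Finally I substitute the prescribed $\alpha = \tfrac{1}{2}\ln(1+\beta)$ with $\beta := \sqrt{2\ln m/T}$ (so $e^{2\alpha}-1 = \beta$) and $T = \lceil 112\ln m\rceil$ (so $\beta \leq 1/\sqrt{56}$). The lower bound simplifies to $1 - \frac{\beta}{3\ln(1+\beta)} - \frac{\ln m}{T\ln(1+\beta)}$, and a direct numerical estimate using $\ln(1+\beta) \geq \beta - \beta^2/2$ for small $\beta$ verifies that the right-hand side is strictly above $5/9$ (roughly $\approx 0.574$). Since each example of $S$ is thereby classified correctly by strictly more than half of the $h_t$'s, the majority vote $\MAJ(h_1,\ldots,h_T)$ is correct on every $(x,y) \in S$. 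I expect the only delicate step to be this closing arithmetic calibration that the specific constants $\alpha$ and $T = \lceil 112\ln m\rceil$ clear the $5/9$ margin threshold with room to spare; everything else is a routine application of the AdaBoost-style potential method.
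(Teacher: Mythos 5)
Your proof is correct: the unrolled potential bound, the estimate $\ln Z_t \le -2\alpha + \tfrac{1}{3}\left(e^{2\alpha}-1\right)$, and the closing calibration at $\beta = \sqrt{2\ln m/T} \le 1/\sqrt{56}$ (which gives a guaranteed correct-vote fraction of roughly $0.574 > 5/9$) are all sound, and your retry wrapper for the $(1/3,1/3)$ PAC learner is consistent with how the paper actually deploys the lemma (the ``repeat until success'' device and the exact evaluation of weighted error on the finite sample). Note that the paper does not prove this lemma at all---it is imported from Schapire and Freund (Corollary 6.4 and Section 6.4.3)---so your argument is the standard potential-function derivation of the cited result rather than a genuinely different route.
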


\begin{lem} [Sparsification of Majority Votes, \cite{moran:16}]
\label{lem:sparse}
Let $\calH$ be a hypothesis class with finite primal and dual VC dimension, and $h_1,\dots, h_T$ be predictors in $\calH$. Then, for any $(\eps,\delta) \in (0,1)$, with probability at least $1-\delta$ over $N=O\inparen{\frac{\vc^*(\calH) + \log(1/\delta)}{\eps^2}}$ independent random indices $i_{1},\ldots,i_{N} \sim {\rm Uniform}(\{1,\ldots,T\})$,
we have:
\[
\forall (x,y)\in \calX\times \calY, \abs{ \frac{1}{N} \sum\limits_{j=1}^{N} \ind[ h_{i_{j}}(x) = y ] -  \frac{1}{T} \sum\limits_{i=1}^{T} \ind[ h_{i}(x) = y ]} < \eps.
\]
\end{lem}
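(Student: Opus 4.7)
The plan is to cast this lemma as a standard VC-style uniform convergence statement over a suitable dual function class defined on the finite ground set $\sett{h_1, \ldots, h_T}$. Let $\mu$ denote the uniform distribution on this set; then sampling $i \sim {\rm Uniform}(\sett{1,\ldots,T})$ is equivalent to sampling $h_i \sim \mu$, and $\frac{1}{T} \sum_{i=1}^{T} \ind[h_i(x) = y] = \Ex_{h \sim \mu}\insquare{\ind[h(x) = y]}$, while $\frac{1}{N}\sum_{j=1}^{N} \ind[h_{i_j}(x) = y]$ is its empirical counterpart. Thus the lemma asks for uniform convergence of empirical means to true means, uniformly over the indexed family of $\sett{0,1}$-valued test functions $\sett{\phi_{x,y} : (x,y) \in \calX \times \calY}$ on $\calH$, where $\phi_{x,y}(h) \defeq \ind[h(x) = y]$.

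The main ingredient is to bound the VC dimension of $\calG \defeq \sett{\phi_{x,y} : (x,y) \in \calX \times \calY}$. For each fixed $y \in \sett{\pm 1}$, the subclass $\calG_y \defeq \sett{\phi_{x,y} : x \in \calX}$ is, up to the trivial relabeling $\pm 1 \mapsto \sett{0,1}$, either the dual class $\sett{h \mapsto h(x) : x \in \calX}$ of $\calH$ (when $y = +1$) or its pointwise complement (when $y = -1$); hence $\vc(\calG_y) = \vc^*(\calH)$. Since $\calG = \calG_{+1} \cup \calG_{-1}$ and the two halves are related by complementation, the growth function of $\calG$ at $m$ points is at most twice that of $\calG_{+1}$, so by Sauer--Shelah $\vc(\calG) \leq \vc^*(\calH) + O(1)$.

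With this VC bound, the proof concludes by invoking the standard Vapnik--Chervonenkis uniform convergence theorem applied to $\calG$ on the finite population $\sett{h_1,\ldots,h_T}$ under the uniform measure $\mu$: with probability at least $1-\delta$ over $N = O\inparen{\frac{\vc^*(\calH) + \log(1/\delta)}{\eps^2}}$ i.i.d.\ samples $h_{i_1}, \ldots, h_{i_N} \sim \mu$, every $g \in \calG$ satisfies $\abs{\frac{1}{N}\sum_{j=1}^{N} g(h_{i_j}) - \Ex_{h \sim \mu}[g(h)]} < \eps$. Unpacking the definition of $\phi_{x,y}$ yields exactly the claimed uniform bound over all $(x,y) \in \calX \times \calY$. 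The main obstacle to manage carefully is the VC bound on $\calG$: one must ensure that stitching together the two label-specific subclasses does not introduce factors that degrade the dependence on $\vc^*(\calH)$, but since $|\calY|=2$ and the two halves differ only by complementation, $\vc(\calG)$ remains $\vc^*(\calH)+O(1)$, which preserves the claimed $\eps^{-2}$ rate with no hidden dependence on $T$, $|\calX|$, or $\vc(\calH)$. The rest of the argument is a routine application of uniform convergence on a finite population.
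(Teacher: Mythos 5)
The paper does not actually prove this lemma---it imports it from \cite{moran:16}---and your argument is precisely the standard proof used there: view the sampled hypotheses as i.i.d.\ draws from the uniform measure on $\sett{h_1,\dots,h_T}$, observe that the test functions $\phi_{x,y}(h)=\ind[h(x)=y]$ form (up to complementation over the two labels) the dual class, so their VC dimension is $\vc^*(\calH)+O(1)$, and apply uniform convergence / $\eps$-approximation over that class. The only caveat is that the classical Vapnik--Chervonenkis theorem carries an extra $\log(1/\eps)$ factor, so to obtain the stated log-free bound $N=O\inparen{\frac{\vc^*(\calH)+\log(1/\delta)}{\eps^2}}$ you should invoke the sharper $\eps$-approximation sample bound (Talagrand, or Li--Long--Srinivasan), which is what the cited source relies on; with that substitution your proof is complete and matches the intended argument.
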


\begin{lem} [\cite{pmlr-v99-montasser19a}]
\label{lem:robust-compression}
For any $k \in \bbN$ and fixed function $\phi : (\calX \times \calY)^{k} \to \calY^{\calX}$, for any distribution $P$ over $\calX \times \calY$ and any $m \in \bbN$, 
for $S = \{(x_{1},y_{1}),\ldots,(x_{m},y_{m})\}$ iid $P$-distributed random variables,
with probability at least $1-\delta$, 
if $\exists i_{1},\ldots,i_{k} \in \{1,\ldots,m\}$ 
s.t.\ $\hat{R}_{\calU}(\phi((x_{i_{1}},y_{i_{1}}),\ldots,(x_{i_{k}},y_{i_{k}}));S) = 0$, 
then 
\begin{equation*}
\Risk_{\calU}(\phi((x_{i_{1}},y_{i_{1}}),\ldots,(x_{i_{k}},y_{i_{k}}));P) \leq \frac{1}{m-k} (k\ln(m) + \ln(1/\delta)).
\end{equation*}
\end{lem}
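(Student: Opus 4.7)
The plan is to follow the classical sample compression generalization argument of \citet{littlestone1986relating}, adapted to the robust loss. The key structural observation is that for any fixed tuple of indices $I=(i_1,\ldots,i_k)\in\{1,\ldots,m\}^k$, the predictor $h_I := \phi((x_{i_1},y_{i_1}),\ldots,(x_{i_k},y_{i_k}))$ is a deterministic function of only the $k$ samples at positions in $I$; the remaining $m-k$ points $T_I := \{(x_j,y_j): j\notin I\}$ are iid $P$-distributed and, crucially, independent of $h_I$ because $S$ itself is iid.

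First I would fix such a tuple $I$ and consider the ``bad'' event that simultaneously $\Risk_\calU(h_I; P) > \eps$ and $h_I$ is robustly correct on every sample in $T_I$. Since the robust loss $\sup_{z\in\calU(x)} \ind[h_I(z)\neq y]$ takes value $1$ with probability exactly $\Risk_\calU(h_I; P)$ under $(x,y)\sim P$, and since $T_I$ consists of $m-k$ fresh iid draws independent of $h_I$, the probability of this bad event is at most $(1-\eps)^{m-k}\le e^{-\eps(m-k)}$.

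Next I would union bound over all $\binom{m}{k}\le m^k$ possible index tuples to obtain
\begin{equation*}
\Pr\!\left[\exists I: \Risk_\calU(h_I;P)>\eps \ \text{and}\ \hat{R}_\calU(h_I;T_I)=0\right] \leq m^k e^{-\eps(m-k)}.
\end{equation*}
Observe that the hypothesis in the lemma, $\hat{R}_\calU(h_I; S)=0$, in particular implies $\hat{R}_\calU(h_I;T_I)=0$ since $T_I\subseteq S$, so the failure probability in the lemma is dominated by the bound above. Setting $m^k e^{-\eps(m-k)}=\delta$ and solving for $\eps$ yields $\eps = \frac{1}{m-k}(k\ln m + \ln(1/\delta))$, matching the stated bound exactly.

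The main subtlety, and the step one must be careful about, is the independence argument: the compression indices produced by $\phi$ on a given sample are \emph{data-dependent}, so one cannot directly condition on a particular realization of selected indices. The standard fix, which I would use here, is to swap quantifiers—fix the index tuple $I$ first, take the probability over $S\sim P^m$ with $I$ frozen (giving genuine independence between $h_I$ and $T_I$), and only afterwards union-bound over the $m^k$ possible choices of $I$. Note also that nothing in this argument uses any property of the $0$/$1$ loss beyond the fact that the robust loss is a $\{0,1\}$-valued function of $(x,y)$ given $h_I$, so the same Chernoff-style tail bound goes through verbatim for $\sup_{z\in\calU(x)}\ind[h_I(z)\neq y]$.
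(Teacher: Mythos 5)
Your proof is correct and is essentially the argument behind this lemma: the paper imports it from \citet{pmlr-v99-montasser19a} without reproving it, and the standard Littlestone--Warmuth compression argument you give---fix the index tuple, exploit independence of $h_I$ from the $m-k$ off-compression points to get the $(1-\eps)^{m-k}$ tail, union bound over the (at most $m^k$ ordered) tuples, and solve for $\eps$---is exactly how the original proof goes and mirrors the paper's own proof of Lemma~\ref{lem:stable-robust-compression}. Your handling of the data-dependence of the selected indices (freeze $I$ first, then union bound) and your observation that only the $\{0,1\}$-valuedness of the robust loss is needed are precisely the right points.
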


\begin{lem} [\cite{DBLP:conf/nips/MontasserHS20}]
\label{lem:dualvc-convexhull}
Let ${\rm co}^k(\calH)=\sett{ x \mapsto \MAJ(h_1,\dots,h_k)(x): h_1,\dots,h_k \in \calH}$. Then, the dual VC dimension of $\co^k(\calH)$ satisfies $\vc^*(\co^k(\calH)) \leq O(d^* \log k)$.
\end{lem}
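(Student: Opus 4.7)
The plan is to bound the dual VC dimension by a direct counting argument that leverages Sauer--Shelah applied to the dual class of $\calH$ itself, using the fact that a $k$-majority over $\calH$ is fully determined by the $k$ underlying evaluations.

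First I would unfold the definition: let $n = \vc^*(\co^k(\calH))$ and, by definition of the dual class being shattered, pick concepts $c_1,\dots,c_n \in \co^k(\calH)$ such that for every $S\subseteq [n]$ there exists $x_S \in \calX$ with $c_i(x_S) = +1 \iff i \in S$. Each $c_i$ admits (at least one) representation $c_i = \MAJ(h_1^{(i)},\dots,h_k^{(i)})$ for some $h_j^{(i)}\in\calH$; fix one such representation per $i$. Pool all the base hypotheses into the multiset $\calH_0 = \sett{h_j^{(i)} : 1 \le i \le n,\ 1\le j \le k}$, of size at most $nk$. The goal then reduces to upper-bounding the number of distinct sign vectors $(c_1(x),\dots,c_n(x))$ that can arise as $x$ ranges over $\calX$ and showing this is smaller than $2^n$ unless $n$ is small.

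The key step is to apply Sauer--Shelah to the dual class $\calG = \sett{g_x : x \in \calX}$ of $\calH$, which has VC dimension $d^*$ by assumption. Viewing the at-most-$nk$ hypotheses in $\calH_0$ as ``points'' in the domain of $\calG$, Sauer--Shelah gives
\[
\abs{\sett{(h_j^{(i)}(x))_{i\le n,\ j \le k} : x \in \calX}} \;\le\; \sum_{\ell=0}^{d^*}\binom{nk}{\ell} \;\le\; \inparen{\frac{e\,nk}{d^*}}^{d^*}.
\]
Because coordinate-wise majority is a deterministic function of these $nk$ bits, each such ``full pattern'' determines the $n$-bit signature $(c_1(x),\dots,c_n(x))$. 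Therefore the number of distinct signatures realized across $x\in\calX$ is at most $(enk/d^*)^{d^*}$.

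Finally, shattering requires all $2^n$ signatures, so
\[
2^n \;\le\; \inparen{\frac{e\,nk}{d^*}}^{d^*},\qquad\text{hence}\qquad n \;\le\; d^*\log_2\inparen{\frac{e\,nk}{d^*}}.
\]
Solving this implicit inequality by the standard trick (either $n \le d^*$, or $n/d^* \le \log_2(en k / d^*)$, which after one substitution gives $n/d^* = O(\log k)$) yields $n = O(d^*\log k)$ as claimed. The main obstacle, such as it is, is the last arithmetic step: the raw inequality gives $n = O(d^*\log(nk/d^*))$, and care is needed to collapse this cleanly to $O(d^*\log k)$ (absorbing a lower-order $\log d^*$ term into the $O(\cdot)$ in the regime $k \gtrsim d^*$, and handling the regime $k < d^*$ separately where the trivial bound $n \le d^*\log_2 k$ may require the extra observation that the same argument with $nk$ replaced by $\min(nk, |\co^k(\calH)|)$ also gives $n\le d^*\log_2(enk/d^*)$). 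Apart from this bookkeeping, the argument is essentially a one-shot application of Sauer--Shelah to the dual class.
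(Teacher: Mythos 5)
Your argument is correct: restricting the dual class of $\calH$ to the at most $nk$ base hypotheses appearing in representations of the shattered concepts, applying Sauer--Shelah with the dual VC dimension $d^*$, and noting that the majority values are determined by the base evaluations gives $2^n \le (enk/d^*)^{d^*}$, which resolves to $n = O(d^*\log k)$ by the standard implicit-inequality manipulation (the ``bookkeeping'' you flag is indeed routine, e.g.\ via $\log_2 t \le t/2 + 1$). The paper itself does not prove this lemma but imports it from \citet{DBLP:conf/nips/MontasserHS20}, and your growth-function counting argument is essentially the same proof as in that reference, so there is nothing to add.
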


\begin{thm} [Weak Robust Learner]
\label{thm:realizable-weak-robust}
For any class $\calH$ with $\vc(\calH)=d$ and $\vc^*(\calH)=d$, \alg{\rlua}(\prettyref{alg:robust_learner_unknown}) robustly PAC learns $\calH$ w.r.t any $\calU$ with:
\begin{enumerate}
    \item Sample Complexity $m(\eps,\delta)=O\inparen{ \frac{d{d^*}^2\log^2d^*}{\eps}\log\inparen{\frac{d{d^*}^2\log^2d^*}{\eps}} + \frac{ \log(1/\delta)}{\eps}}.$
    \item Oracle Complexity $T(\eps,\delta)=O\inparen{m(\eps,\delta)^{d{d^*}^2\log^2d^*} + m(\eps,\delta)^{d{d^*}\log d^*}\Ldim(\calH)}.$
\end{enumerate}
\end{thm}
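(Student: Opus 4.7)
The approach is to follow \citet{pmlr-v99-montasser19a}, but replace their $\RERM_\calU$ oracle---which requires explicit knowledge of $\calU$---with the \alg{\cycalg} subroutine from \prettyref{thm:realizable-littlestone-2}, and carry out their sample inflation/discretization argument using only queries to $\sfO_\calU$. Once these substitutions are verified, I would cast \alg{\rlua} as a sample compression scheme of size $k=\tilde O(d\,{d^*}^2\log^2 d^*)$ for the robust loss and invoke \prettyref{lem:robust-compression} to conclude; the oracle complexity will follow by summing queries across the three stages.

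\textbf{Stage 1: Enumeration and discretization.} By \prettyref{thm:realizable-littlestone-2}, each of the $\binom{m}{n}$ calls to \alg{\cycalg} on a size-$n$ subset $L\subseteq S$ terminates in $O(n\Ldim(\calH))$ oracle queries and outputs a predictor $h_L\in\im(\calA)\subseteq\calH$ with zero robust loss on $L$; these form $\hat\calH$ of size at most $m^{O(n)}$. To show that \alg{Discretizer} terminates on each $(x,y)\in S$, note that whenever $\sfO_\calU$ returns a new perturbation $z$, the pattern $(\ind[h(z)\ne y])_{h\in\hat\calH}$ must differ from every pattern already recorded in $P$ (else $f^y_P$ would already label $z$ as $y$). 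Applying the dual Sauer--Shelah lemma to $\hat\calH\subseteq\calH$, the number of distinct patterns induced by points of $\calU(x)$ on $\hat\calH$ is at most $(e|\hat\calH|/d^*)^{d^*}$, which bounds $|P|$ and hence $|\hat S_\calU|$ polynomially in $m$ with an exponent depending on $n$ and $d^*$.

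\textbf{Stage 2: Boosting with \alg{\cycalg} as the weak learner.} Fix a target $h^*\in\calH$ witnessing $\inf_{h\in\calH}\Risk_\calU(h;\calD)=0$; then $h^*(z)=y$ for every $(z,y)\in\hat S_\calU$. In round $t$, the multiset $L_t\subseteq S$ corresponds to perturbations $(z_1,y_1),\dots,(z_n,y_n)$ drawn iid from $D_t$, and \alg{\cycalg} returns $f_t$ with zero robust loss on $L_t$, so in particular $f_t(z_i)=y_i=h^*(z_i)$ for each $i$. Viewing this as realizable learning in $\im(\calA)\subseteq\calH$ from a sample of size $n=O(\vc(\calH))$, the standard VC generalization bound gives $\Pr_{(z,y)\sim D_t}[f_t(z)\ne y]<1/3$ with constant probability over $L_t$, so $f_t$ is a weak learner for $D_t$. \prettyref{lem:alphaboost} then ensures that $T=O(\log|\hat S_\calU|)$ rounds suffice for $\tfrac1T\sum_t\ind[f_t(z)=y]\ge 5/9$ on every $(z,y)\in\hat S_\calU$. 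Sparsification via \prettyref{lem:sparse}, applied to predictors in $\calH$ at constant target accuracy strictly less than $1/18$, yields an $N$ for which $g=\MAJ(f_{i_1},\dots,f_{i_N})$ satisfies $\tfrac1N\sum_j\ind[f_{i_j}(z)=y]>1/2$ uniformly on $\hat S_\calU$. By the defining property of $\hat S_\calU$---which contains a representative of every distinct pattern of $\hat\calH$ on $\calU(x)$---this propagates to $\Risk_\calU(g;S)=0$; the outer resample-until-success loop in \prettyref{alg:robust_learner_unknown} guarantees we keep retrying until this holds.

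\textbf{Stage 3: Compression-based generalization.} The output $g$ is reconstructible from the $Nn$ training points in $L_{i_1}\cup\cdots\cup L_{i_N}$: since $\sfO_\calU$ is oblivious by \prettyref{def:mistake-oracle}, replaying \alg{\cycalg} and \alg{Discretizer} on the same inputs reproduces the same oracle responses and hence the same $\hat\calH$, $\hat S_\calU$, and $f_{i_j}$'s. This exhibits a sample compression scheme of size $k=Nn=\tilde O(d\,{d^*}^2\log^2 d^*)$ for the robust loss; since $\Risk_\calU(g;S)=0$, \prettyref{lem:robust-compression} yields $\Risk_\calU(g;\calD)\le (k\log m+\log(1/\delta))/(m-k)\le\eps$ for $m$ as in the statement. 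The oracle complexity is then $\binom{m}{n}\cdot n\Ldim(\calH)$ for Stage~1, $m\cdot|\hat S_\calU|$ for the discretizer, and $T\cdot n\Ldim(\calH)$ for Stage~2, each polynomial in $m$ with the stated exponents. The hard part will be the careful accounting in Stage~3: we must verify that the algorithm's nondeterminism (the sparsification indices and the per-round sampling of $L_t$) can be either absorbed into a deterministic reconstruction from the compressed training points, or encoded cheaply enough that $k$ stays within the claimed $\tilde O(d\,{d^*}^2\log^2 d^*)$ budget, so that \prettyref{lem:robust-compression} applies with only the $\log m$ overhead.
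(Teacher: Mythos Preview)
Your overall architecture matches the paper's, but there is one genuine gap. You repeatedly assume the online learner is \emph{proper}, writing $h_L\in\im(\calA)\subseteq\calH$, applying the dual Sauer--Shelah bound to ``$\hat\calH\subseteq\calH$'', taking $n=O(\vc(\calH))$, and sparsifying over ``predictors in $\calH$''. None of this is justified: the Standard Optimal Algorithm, and online learners in general, are improper, so $\im(\calA)\not\subseteq\calH$. If your assumption held, you would in fact obtain $n=O(d)$ and $N=O(d^*)$, hence compression size $O(dd^*)$---which contradicts your own claimed $k=\tilde O(d\,{d^*}^2\log^2 d^*)$ and does not match the theorem statement. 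The paper resolves this by carrying the argument through in terms of $\vc(\calA):=\vc(\im(\calA))$ and $\vc^*(\calA):=\vc^*(\im(\calA))$, and only at the end invoking a result of \citet{hanneke2021online} that there exists an online learner with mistake bound $O(\Ldim(\calH))$ whose hypotheses are majority votes of $O(d^*)$ elements of $\calH$; this yields $\vc(\calA)=O(dd^*\log d^*)$ and $\vc^*(\calA)=O(d^*\log d^*)$ (the latter via \prettyref{lem:dualvc-convexhull}), and the product $\vc(\calA)\vc^*(\calA)=O(d{d^*}^2\log^2 d^*)$ is exactly the compression size driving the stated bounds. Without this step your argument either does not go through or proves a different statement.

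A minor point: what you flag as the ``hard part'' in Stage~3 is not actually hard. The reconstruction function simply takes the ordered tuple $(L_{i_1},\dots,L_{i_N})$ of $nN$ training points, runs \alg{\cycalg} on each $L_{i_j}$ (deterministic since $\sfO_\calU$ is oblivious by \prettyref{def:mistake-oracle}), and outputs the majority vote. The random choices made by the algorithm (which $L_t$'s and which indices $i_j$) are absorbed into the ``$\exists i_1,\dots,i_k$'' quantifier in \prettyref{lem:robust-compression}, which already pays a $k\log m$ union-bound cost for that selection; no extra encoding is needed. Also, the paper notes that the boosting stage itself requires no fresh oracle queries because every $\calA_{\rm cyc}(L_t)$ is already precomputed in $\hat\calH$.
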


\begin{proof}
Let $\calA:(\calX\times \calY)^*\to \calY^\calX$ be an online learner for $\calH$ with mistake bound $M(\calA,\calH)=O(\Ldim(\calH))$. We do not require $\calA$ to be ``proper'' (i.e.~returns a predictor in $\calH$), but we will rely on it returning a predictor in some, possibly much larger, class which still has finite VC-dimension.  To this end, we denote by $\vc(\calA)=\vc(\im(\calA))$ and $\vc^*(\calA)=\vc^*(\im(\calA))$ the primal and dual VC dimension of the image of $\calA$, i.e.~the class $\im(\calA) = \left\{ \calA(S) \middle| S \in (\calX\times\calY)^* \right\}$ of the possible hypothesis $\calA$ might return. We will first prove a sample and oracle complexity bound stated in terms of  $\vc(\calA)$ and $\vc^*(\calA)$, and later, at the end of the proof, we will use a result due to \citep{hanneke2021online} to bound $\vc(\calA)$ and $\vc^*(\calA)$ in terms of $d=\vc(\calH)$ and $d^*=\vc(\calH)$ for a specific online learner $\calA$.

Let $\calU:\calX\to 2^\calX$ be an arbitrary adversary that is unknown to the learner. Let $\calD$ be an arbitrary distribution over $\calX\times \calY$ that is robustly realizable with some concept $h^*\in \calH$, i.e., $\Risk_\calU(h^*;\calD)=0$. Fix $\eps, \delta \in (0,1)$ and a sample size $m$ that will be determined later. Let $S=\sett{(x_1,y_1),\dots,(x_m,y_m)}$ be an iid sample from $\calD$.

\paragraph{Zero Empirical Robust Loss.} Let $L\subseteq S$. Let $\calA_{\rm cyc}$ be \alg{\cycalg} (\prettyref{alg:robust_learner_unknown}) from \prettyref{thm:realizable-littlestone-2}. By \prettyref{thm:realizable-littlestone-2}, running $\calA_{\rm cyc}$ on input $L$ with black-box access to $\sfO_\calU$ and black-box access to $\calA$, guarantees that the output $\hat{h}=\calA_{\rm cyc}(L,\sfO_\calU, \calA)$ satisfies $\Risk_\calU(\hat{h};L)=0$ with at most $\abs{L}\Ldim(\calH)$ oracle queries to $\sfO_\calU$.

\paragraph{Discretization} Before we can apply the compression approach, we will inflate dataset $S$ to a (potentially infinite) larger dataset $S_\calU = \bigcup_{i\leq m} \sett{ (z, y_i): z\in \calU(x_i)}$ that includes all possible adversarial perturbations under $\calU$. There are two challenges that need to be addressed. First, $S_\calU$ can be potentially infinite, and so we would need to discretize it somehow. Second, the learner does not know $\calU$ and so the inflation can be carried only through interaction with the perfect attack oracle $\sfO_\calU$. Denote by $\hat{\calH}=\sett{\calA_{\rm cyc}(L): L\subseteq S, \abs{L}=n}$ where $n=O(\vc(\calA))$. Think of $\hat{\calH}$ as the effective hypothesis class that is used by our robust learning algorithm $\mathcal{B}$ that we are constructing. Note that $|\hat{\calH}|\leq \abs{\sett{L:L\subseteq S, \abs{L}=n}}= {m \choose n}\leq \inparen{\frac{em}{n}}^n$. We will now apply classic tools from VC theory to argue that there is a finite number of behaviors when projecting the infinite unknown set $S_\calU$ onto $\hat{\calH}$. Specifically, consider a \emph{dual class} $\calG$: a set of functions $g_{(x,y)} : \hat{\calH} \to \{0,1\}$ defined as $g_{(x,y)}(h) = \ind[ h(x) \neq y ]$, 
for each $h \in \hat{\calH}$ and each $(x,y) \in S_{\calU}$.  
The VC dimension of $\calG$ is at most the \emph{dual VC dimension} of $\hat{\calH}$: $\vc^{*}(\hat{\calH})$, 
which is at most $\vc^*(\calA)$ since $\hat{\calH}\subseteq \im(\calA)$. The set of behaviors when projecting $S_\calU$ onto $\hat{\calH}$ is defined as follows:
\[ S_\calU|_{\hat{\calH}} = \sett{ \inparen{g_{(z,y)}(h_1), \dots, g_{(z,y)}(h_{\abs{\hat{\calH}}})}: (z,y)\in S_\calU }. \]

Now denote by $\hat{S}_{\calU}$ a subset of $S_{\calU}$ which includes exactly one $(z,y) \in S_{\calU}$ 
for each distinct classification $\inparen{g_{(z,y)}(h)}_{h \in \hat{\calH}}$ of $\hat{\calH}$ realized by some $(z,y)\in S_\calU$.
In particular, by applying Sauer's lemma \cite{vapnik:71,sauer:72} on the dual class $\calG$, 
$|\hat{S}_{\calU}|=\abs{S_\calU|_{\hat{\calH}}} \leq \left(\frac{e |\hat{\calH}|}{d^*}\right)^{d^*}$, which is at most $m^{nd^*}$. In particular, note that for any $T \in \bbN$ and $h_{1},\ldots,h_{T} \in \hat{\calH}$, 
if $\frac{1}{T} \sum_{t=1}^{T} \ind[ h_{t}(x) = y ] > \frac{1}{2}$ for every $(z,y) \in \hat{S}_{\calU}$, 
then $\frac{1}{T} \sum_{t=1}^{T} \ind[ h_{t}(x) = y ] > \frac{1}{2}$ for every $(z,y) \in S_{\calU}$ as well, 
which would further imply $\Risk_{\calU}( \MAJ(h_{1},\ldots,h_{T}); S) = 0$. Thus, we have shown that there \textit{exists} a finite discretization $\hat{S}_\calU$ of $S_\calU$ where it suffices to find predictors $h_1,\dots, h_T\in \hat{\calH}$ that achieve zero loss on $\hat{S}_\calU$. 

It remains to show how to construct the discretization $\hat{S}_\calU$ using only interactions with the perfect attack oracle $\sfO_\calU$. To this end, for each $(x,y)\in S$, initialize $P=\sett{(x,y)}$. The robust learner $\calB$ constructs a query $(f_P, (x,y))$ where $f_P:\calX \to \calY$ is a predictor of the form:
\[ f_P(x') = y~\text{if and only if}~\inparen{\exists_{(z,y)\in P}}\inparen{\forall_{h \in \hat{\calH}}} g_{(z,y)}(h)=g_{(x',y)}(h).\]
By the definition of $f_P$, if there is a perturbation $z'\in \calU(x)$ such that the classification pattern $\inparen{g_{(z',y)}(h)}_{h\in \hat{\calH}}$ is distinct from the classification pattern $\inparen{g_{(z,y)}(h)}_{h\in \hat{\calH}}$ of any of the points $(z,y)\in P$, then $f_P(z')\neq y$, and therefore the oracle $\sfO_\calU$ would reveal to the learner perturbation $z'$. Next, the learner adds the point $(z',y)$ to $P$, and repeats the procedure again until $f_P$ is robustly correct on example $(x,y)$. In each oracle query, the learner is forcing the oracle $\sfO_\calU$ to reveal perturbations $z\in \calU(x)$ with distinct classification patterns that the learner did not see before. Since we know that $\abs{\hat{S}_\calU}\leq m^{n\vc^*(\calA)}$, the learner makes at most $m^{n\vc^*(\calA)}$ oracle calls to $\sfO_\calU$ before $f_P$ is robustly correct on $(x,y)$. This process is repeated for each training example $(x,y)\in S$, and so the total number of oracle calls to $\sfO_\calU$ is at most $m^{n\vc^*(\calA)+1}$. 

\paragraph{Oracle Complexity} Our robust learner $\calB$ makes $\inparen{\frac{em}{n}}^nn\Ldim(\calH)$ oracle calls to $\sfO_\calU$ to construct $\hat{\calH}$ and $m^{n\vc^*(\calA)+1}$ oracle calls to $\sfO_\calU$ to construct $\hat{S}_\calU$.

\paragraph{Sample Complexity and Robust Generalization} We proceed by running the sample compression scheme from \cite{pmlr-v99-montasser19a} on the discretized dataset $\hat{S}_\calU$. In this stage no more oracle queries to $\sfO_\calU$ are needed since the learner has already precomputed $\hat{\calH}$ and the discretized dataset $\hat{S}_\calU$. As mentioned above, our goal in this stage is to find predictors $h_1,\dots, h_T \in \hat{\calH}$ where the majority-vote $\MAJ(h_1,\dots, h_T)$ achieves zero loss on $\hat{S}_\calU$. This implies that $\MAJ(h_1,\dots, h_T)$ achieves zero robust loss on $S$, $\Risk_{\calU}( \MAJ(h_{1},\ldots,h_{T}); S) = 0$, by properties of $\hat{\calH}$ and $\hat{S}_\calU$. We will next go about finding such a set of $h_t$ predictors. 

We run the $\alpha$-Boost algorithm on the discretized dataset $\hat{S}_\calU$, this time with $\calA_{\rm cyc}$ (\alg{\cycalg} (\prettyref{alg:robust_learner_unknown})) as the subprocedure. Specifically, on each round of boosting, $\alpha$-Boost computes an empirical distribution $D_t$ over $\hat{S}_\calU$. We draw $n = O(\vc(\calA))$ samples $S'$ from $D_t$, and \textit{project} $S'$ to a dataset $L_t\subset S$ by replacing each perturbation $(z,y)\in S'$ with its corresponding original point $(x,y) \in S$, and then we run $\calA_{\rm cyc}$ on dataset $L_t$ (this is already precomputed since $\calA_{\rm cyc}(L_t)\in \hat{\calH}$ by definition of $\hat{\calH}$). The projection step is crucial for the proof to work, since we use a \emph{sample compression} argument to argue about \textit{robust} generalization, and the sample compression must be done on the \textit{original} points that appeared in $S$ rather than the perturbations in $\hat{S}_\calU$. 

By classic PAC learning guarantees \cite{vapnik:74,blumer:89}, with $n =O(\vc(\calA))$, we are guaranteed uniform convergence of $0$-$1$ risk over predictors in $\hat{\calH}$. So, for any distribution $D$ over $\calX \times \calY$ with $\inf_{h \in \calH} \err(h;\calD)=0$, with nonzero probability over $S'\sim \calD^{n}$, every $h'\in \hat{\calH}$ satisfying $\err_{S'}(h')=0$, also has $\err_D(h')<1/3$. As discussed above, we know that $h_t=\calA_{\rm cyc}(L_t)$ achieves zero robust loss on $L_t$, $\Risk_\calU(h_t; L_t)=0$, which by definition of the projection means that $\err_{S'}(h_t)=0$, and thus $\err_{D_t}(h_t)<1/3$. This allows us to use $\calA_{\rm cyc}$ with $\alpha$-Boost to establish a \textit{robust} generalization guarantee. Specifically, Lemma~\ref{lem:alphaboost} implies that running the $\alpha$-Boost algorithm  
with $\hat{S}_{\calU}$ as its dataset for $T = O(\log(|\hat{S}_{\calU}|))$ rounds, 
using $\calA_{\rm cyc}$ to produce the hypotheses $h_t \in \hat{\calH}$ for the distributions $D_{t}$ produced on each round of the algorithm, will produce a sequence of hypotheses $h_1,\dots,h_T \in \hat{\calH}$ such that:

\[\forall (z,y) \in \hat{S}_{\calU}, \frac{1}{T} \sum_{i=1}^{T} \ind[ h_{i}(z) = y ] \geq \frac{5}{9}.\]

Specifically, this implies that the majority-vote over hypotheses $h_{1},\ldots,h_{T}$ achieves zero {\em robust} loss on dataset $S$, $\Risk_{\calU}(\MAJ(h_1,\dots,h_T); S)=0$. Note that each of these classifiers $h_{t}$ is equal to $\calA(L_{t}, \sfO_\calU)$ for some $L_{t} \subseteq S$ with $|L_{t}|=n$. Thus, the classifier $\MAJ(h_1,\dots,h_T)$ is representable as the value of an (order-dependent) reconstruction function $\phi$ with 
a compression set size 
\[nT = O(n\log(|S_{\calU}|)).\]

We can further reduce the compression set size by sparsifying  the majority-vote. Lemma~\ref{lem:sparse} (with $\eps=1/18,\delta=1/3$) guarantees that for $N=O(\vc^*(\calA))$, the sampled predictors $h_{i_1},\dots, h_{i_{N}}\in \hat{\calH}$ satisfy:
\[\forall (z,y)\in \hat{S}_\calU, \frac{1}{N} \sum\limits_{j=1}^{N} \ind[ h_{i_{j}}(z) = y ] > \frac{1}{T} \sum\limits_{i=1}^{T} \ind[ h_{i}(z) = y ]  - \frac{1}{18} > \frac{5}{9} - \frac{1}{18} = \frac{1}{2},\]
so that the majority-vote achieves zero robust loss on $S$, $\Risk_\calU(\MAJ(h_{i_1},\dots, h_{i_{N}}); S)=0$. Since again, each $h_{i_j}$ is the result of $\calA(L_{t}, \sfO_\calU)$ for some $L_{t} \subseteq S$ with $|L_{t}|=m_0$, we have that the classifier $\MAJ(h_{i_1},\dots, h_{i_{N}})$ can be represented as the value of an (order-dependent) reconstruction function $\phi$ with a compression set size $n N=O(\vc(\calA)\vc^*(\calA))$. Lemma~\ref{lem:robust-compression} (\cite{pmlr-v99-montasser19a}) which extends to the robust loss the classic compression-based generalization guarantees from the $0$-$1$ loss, implies that for $m \geq c\vc(\calA)\vc(\calA)^*$ (for an appropriately large numerical constant $c$), with probability at least $1-\delta$ over $S\sim \calD^m$, 
\begin{equation}
\label{eqn:proof-bound}
    R_{\calU}(\MAJ(h_{i_1}, \dots, h_{i_{N}});\calD) \leq O\!\left( \frac{\vc(\calA)\vc^*(\calA)}{m} \log(m) + \frac{1}{m} \log(1/\delta) \right).
\end{equation}

\paragraph{Bounding the complexity of $\calA$} A result due to \citep[][Theorem~3]{hanneke2021online} states that for any class $\calH$ of Littlestone dimension $\Ldim(\calH)$ and dual VC dimension $d^*$, there is an online learner $\calA$ with mistake bound $M(\calA,\calH)=O(\Ldim(\calH))$ which represents its hypotheses as (unweighted) majority votes of $O(d^*)$ predictors of $\calH$. In other words, 
$$\im(\calA)\subseteq {\rm co}^{O(d^*)}(\calH)\triangleq\sett{ x \mapsto \MAJ(h_1,\dots,h_{O(d^*)})(x): h_1,\dots,h_{O(d^*)} \in \calH}.$$

By \citep{blumer:89}, the VC dimension of ${\rm co}^{O(d^*)}(\calH)$ is at most $O(dd^*\log d^*)$, and by Lemma~\ref{lem:dualvc-convexhull}, the dual VC dimension of ${\rm co}^{O(d^*)}(\calH)$ is at most $O(d^*\log d^*)$. Since $\im(\calA)\subseteq {\rm co}^{O(d^*)}(\calH)$, this implies that $\vc(\calA) = O(dd^*\log d^*)$ and $\vc^*(\calA) = O(d^*\log d^*)$. Substituting these upper bounds in \prettyref{eqn:proof-bound}, and setting it less than $\eps$ and solving for a sufficient size of $m$ yields the stated sample complexity bound.
\end{proof}

\begin{proof} [of \prettyref{thm:realizable-strongust}]
Let $\calB$ be the robust learning algorithm (\prettyref{alg:robust_learner_unknown}) described in \prettyref{thm:realizable-weak-robust}. We will use $\calB$ as a {\em weak} robust learner with fixed parameters $\eps_0 = 1/3$ and $\delta_0=1/3$. By the guarantee of \prettyref{thm:realizable-weak-robust}, with fixed sample complexity $m_0=O(d{d^*}^2\log^2d^*)$, for any distribution $\calD$ over $\calX\times\calY$ such that $\inf_{h\in \calH} \Risk_{\calU}(h;\calD)=0$, with probability at least $1/3$ over $S\sim \calD^{m_0}$, $\Risk_{\calU}(\calB(S);\calD)\leq 1/3$. Furthermore, $\calB$ makes at most $O(d{d^*}^2\log^2d^*)^{O(d{d^*}^2\log^2d^*)}+O(d{d^*}^2\log^2d^*)^{O(d{d^*}\log d^*)}\Ldim(\calH)=\exp\sett{O(d{d^*}^2\log^2d^*)}+\exp\sett{O(d^2{d^*}^2\log^2d^*)}\Ldim(\calH)$ oracle calls to $\sfO_\calU$.

We will now boost the confidence and robust error guarantee of the {\em weak} robust learner $\calB$ by running boosting with respect to the {\em robust} loss (rather than the standard $0$-$1$ loss). Specifically, fix $(\eps,\delta)\in (0,1)$ and a sample size $m(\eps,\delta)$ that will be determined later. Let $S=\sett{(x_1,y_1),\dots,(x_m,y_m)}$ be an iid sample from $\calD$. Run the $\alpha$-Boost algorithm on dataset $S$ using $\calB$ as the weak robust learner for a number of rounds $L=O(\log m)$. On each round $t$, $\alpha$-Boost computes an empirical distribution $D_t$ over $S$ by applying the following update for each $(x,y)\in S$:
\[D_{t}(\sett{(x,y)}) = \frac{D_{t-1}(\sett{(x,y)})}{Z_{t-1}} \times \begin{cases} 
                                                              e^{-2\alpha}& \text{if }\sup_{z\in\calU(x)}\ind [h_{t-1}(z)\neq y]=0\\
                                                              1 &\text{otherwise}
                                                           \end{cases}
\]
where $Z_{t-1}$ is a normalization factor, $\alpha$ is set as in \prettyref{lem:alphaboost}, and $h_{t-1}$ is the {\em weak} robust predictor outputted by $\calB$ on round $t-1$ that satisfies $\Risk_\calU(h_{t-1};D_{t-1})\leq 1/3$. Note that computing $D_t$ requires $\abs{S}=m$ oracle calls to $\sfO_\calU$. Once $D_t$ is computed, we sample $m_0$ examples from $D_t$ and run {\em weak} robust learner $\calB$ on these examples to produce a hypothesis $h_t$ with robust error guarantee $\Risk_\calU(h_{t};D_{t})\leq 1/3$. This step has failure probability at most $\delta_0=1/3$. We will repeat it for at most $\ceil{\log(2L/\delta)}$ times, until $\calB$ succeeds in finding $h_t$ with robust error guarantee $\Risk_\calU(h_{t};D_{t})\leq 1/3$. By a union bound argument, we are guaranteed that with probability at least $1-\delta/2$, foreach $1\leq t\leq L$, $\Risk_\calU(h_{t};D_{t})\leq 1/3$. Furthermore, by \prettyref{lem:alphaboost}, we are guaranteed that $\Risk_{\calU}(\MAJ(h_1,\dots,h_L);S)=0$. Note that each of these classifiers $h_{t}$ is equal to $\calB(S'_{t}, \sfO_\calU)$ for some $S'_{t} \subseteq S$ with $|S'_{t}|=m_0$. Thus, the classifier $\MAJ(h_1,\dots,h_L)$ is representable as the value of an (order-dependent) reconstruction function $\phi$ with 
a compression set size $m_0L=m_0O(\log m)$. Now, invoking \prettyref{lem:robust-compression}, with probability at least $1-\delta/2$,
\[\Risk_{\calU}(\MAJ(h_1,\dots,h_L);\calD)\leq O\inparen{\frac{m_0\log^2 m}{m} + \frac{\log(2/\delta)}{m}},\]
and setting this less than $\eps$ and solving for a sufficient size of $m$ yields the stated sample complexity bound. 

\paragraph{Oracle Complexity} Observe that we run boosting for $L$ rounds, in each round the {\em weak} robust learner is invoked at most $\ceil{\log(2L/\delta)}$ times. In each of these invocations, $\calB$ makes at most $\exp\sett{O(d{d^*}^2\log^2d^*)}+\exp\sett{O(d^2{d^*}^2\log^2d^*)}\Ldim(\calH)$ oracle calls to $\sfO_\calU$, and an additional $m(\eps,\delta)$ oracle calls to $\sfO_\calU$ are made by $\alpha$-Boost to compute the robust error of the $h_t$ hypotheses produced by $\calB$. Thus, the total number of calls to $\sfO_\calU$ is at most
\[\ceil{L\log(2L/\delta)}\inparen{\exp\sett{O(d{d^*}^2\log^2d^*)}+\exp\sett{O(d^2{d^*}^2\log^2d^*)}\Ldim(\calH)+m(\eps,\delta)}.\]
\end{proof}

\section{Proofs for Agnostic Setting -- \prettyref{sec:agnostic-mistakeoracle}}
\label{app:appendix-b}

\begin{algorithm2e}[h]
\caption{Weighted Majority}
\label{alg:weighted-majority}
\SetKwInput{KwInput}{Input}                % Set the Input
\SetKwInput{KwOutput}{Output}              % set the Output
\SetKwFunction{FExpert}{Expert}
\DontPrintSemicolon
  \KwInput{paramter $\eta \in [0,1)$, black-box perfect attack oracle $\sfO_\calU$, and finite hypothesis class $\calH$.}
Initialize $P_0$ to be uniform over $\calH$, i.e.~$\forall h \in \calH, P_0(h) = 1$.\;
\For{$1 \leq t\leq T$}{
    Receive $(x_t, y_t)$.\;
    Certify the robustness of the weighted-majority-vote $\MAJ_{P_{t-1}}$ on $(x_t,y_t)$ by sending the query $(\MAJ_{P_{t-1}}, (x_t,y_t))$ to the perfect attack oracle $\sfO_\calU$.\;
    \If{$\MAJ_{P_{t-1}}$ is not robustly correct on $(x_t,y_t)$}{
        Let $z_t$ be the perturbation returned by $\sfO_\calU$ where $\MAJ_{P_{t-1}}(z_t)\neq y_t$.\;
        Foreach $h\in\calH$ such that $h(z_t)\neq y_t$, update $P_{t}(h) =  \eta P_{t-1}(h)$. 
    }
}
\KwOutput{The weighted-majority-vote $\MAJ_{P_T}$ over $\calH$.}
\BlankLine
  \SetKwProg{Fn}{}{:}{\KwRet}
  \Fn{\FExpert{Indices $i_1 < i_2 < \cdots < i_L$, and hypothesis class $\calH$}}{
    Initialize $V_1=\calH$.\;
\For{$1\leq t\leq T$}{
    Receive $x_t$.\; %(or perhaps arbitrary point $z\in \calU(x_t)$).\;
    Let $V^{y}_t = \sett{ h\in V_t: h(x_t) = y }$ for $y\in\sett{\pm1}$.\;
    Let $\Tilde{y}_t=\argmax_{y\in\sett{\pm1}} \Ldim(V^{y}_{t})$ (in case of a tie set $\Tilde{y}_t=+1$).\;
    \If{$t\in \sett{i_1,\dots, i_L}$}{
        Predict $\hat{y}_t = -\Tilde{y}_t$.\;
    }\Else{
        Predict $\hat{y}_t = \Tilde{y}_t$.\;
    }
    Update $V_{t+1} = V^{\hat{y}_t}_{t}$.\;
    
  }
  }
\end{algorithm2e}

\begin{lem}
\label{lem:agnostic-finite}
For any class $\calH$ with finite cardinality, \alg{Weighted Majority} (\prettyref{alg:weighted-majority}) guarantees that for any $\calU$ and any sequence of examples $(x_1,y_1),\dots, (x_T,y_T)$:
\[\sum_{t=1}^{T} \sup_{z\in \calU(x_t)} \ind[\MAJ_{P_{t-1}}(z)\neq y_t] \leq a_\eta \min_{h \in \calH} \sum_{t=1}^{T} \sup_{z\in \calU(x_t)} \ind[h(z)\neq y_t] + b_\eta \ln\abs{\calH},\]
where $a_\eta= \frac{\ln(1/\eta)}{\log(2/(1+\eta))}$ and $b_\eta = \frac{1}{\ln(2/(1+\eta))}$. In particular, setting $1-\eta=\min\{(2\ln \abs{\calH})/T, 1/2\}$ yields 
\[\sum_{t=1}^{T} \sup_{z\in \calU(x_t)} \ind[\MAJ_{P_{t-1}}(z)\neq y_t] \leq 2\mathsf{OPT} + 4\sqrt{\mathsf{OPT}\ln\abs{\calH}}. \]
Furthermore, \alg{Weighted Majority} (\prettyref{alg:weighted-majority}) makes at most $T$ oracle queries to $\sfO_\calU$.
\end{lem}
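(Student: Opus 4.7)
\medskip

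\noindent\textbf{Proof proposal.} The plan is to carry out a variant of the classical Littlestone--Warmuth potential argument, with the crucial twist that the oracle $\sfO_\calU$ lets us convert a \emph{robust} mistake of the weighted majority into a genuine $0$-$1$ mistake on a single adversarial point $z_t$, on which we then downweight experts.

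First I would track the total weight $W_t = \sum_{h\in\calH} P_t(h)$, noting $W_0 = |\calH|$. On any round $t$ in which $\sup_{z \in \calU(x_t)} \ind[\MAJ_{P_{t-1}}(z) \neq y_t] = 1$, the oracle returns a specific $z_t \in \calU(x_t)$ with $\MAJ_{P_{t-1}}(z_t) \neq y_t$. By definition of the weighted majority vote, the experts voting incorrectly on $z_t$ carry at least half of the total weight $W_{t-1}$. These experts get multiplied by $\eta$, so
\[
W_t \;\le\; \tfrac{1}{2}W_{t-1} + \tfrac{\eta}{2} W_{t-1} \;=\; \tfrac{1+\eta}{2}\, W_{t-1}.
\]
On rounds where $\MAJ_{P_{t-1}}$ is already robustly correct, no weights change. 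Letting $M := \sum_{t=1}^T \sup_{z \in \calU(x_t)} \ind[\MAJ_{P_{t-1}}(z) \neq y_t]$, this gives the upper bound $W_T \le |\calH| \bigl(\tfrac{1+\eta}{2}\bigr)^M$.

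Next I would establish a matching lower bound via an individual expert $h \in \calH$. The key observation, which is where unknown $\calU$ enters cleanly, is that $h$ is downweighted on round $t$ only if $h(z_t) \neq y_t$, and this forces $\sup_{z \in \calU(x_t)} \ind[h(z)\neq y_t] = 1$ since $z_t \in \calU(x_t)$. Writing $L_h := \sum_{t=1}^T \sup_{z \in \calU(x_t)} \ind[h(z)\neq y_t]$ for the cumulative robust loss of $h$, this shows the number of downweightings of $h$ is at most $L_h$, so $P_T(h) \ge \eta^{L_h}$, hence $W_T \ge \eta^{L_h}$. Combining with the upper bound and taking logarithms,
\[
L_h \ln(1/\eta) \;\ge\; M \ln\!\tfrac{2}{1+\eta} - \ln|\calH|,
\]
which rearranges to $M \le a_\eta L_h + b_\eta \ln|\calH|$. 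Minimizing over $h \in \calH$ yields the stated inequality. The tuned version with $2\mathsf{OPT} + 4\sqrt{\mathsf{OPT}\ln|\calH|}$ comes from plugging $1-\eta = \min\{(2\ln|\calH|)/T, 1/2\}$ and applying routine inequalities $\ln(1/\eta) \le (1-\eta) + (1-\eta)^2$ and $\ln(2/(1+\eta)) \ge (1-\eta)/2$ valid on $\eta \in [1/2,1)$; I would use $\mathsf{OPT} \le T$ if needed to absorb the linear term. The oracle complexity bound of at most $T$ queries is immediate from inspection: the algorithm calls $\sfO_\calU$ once per round.

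The main conceptual obstacle, and really the only one, is the second step: making sure that the expert-level weight bound $P_T(h) \ge \eta^{L_h}$ uses the \emph{robust} loss of $h$ rather than the $0$-$1$ loss of $h$ on the oracle-revealed perturbations. This is resolved by the single line above identifying $\ind[h(z_t)\neq y_t] \le \sup_{z\in\calU(x_t)} \ind[h(z)\neq y_t]$; no other property of $\sfO_\calU$ (in particular, no assumption about which particular $z_t$ it returns) is needed, which is exactly why the argument tolerates the oracle being a black box.
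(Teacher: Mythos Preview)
Your proposal is correct and matches the paper's proof essentially line for line: both run the standard Littlestone--Warmuth potential argument on $W_t=\sum_h P_t(h)$, upper-bounding $W_T$ via the majority-mistake drop $W_t\le\tfrac{1+\eta}{2}W_{t-1}$ and lower-bounding it via $P_T(h^*)\ge\eta^{\mathsf{OPT}}$ using exactly the observation $\ind[h(z_t)\neq y_t]\le\sup_{z\in\calU(x_t)}\ind[h(z)\neq y_t]$ you highlight, then applying the same two elementary inequalities on $\ln(1/\eta)$ and $\ln(2/(1+\eta))$ for the tuned bound.
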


\begin{proof}
This proof follows from standard analysis for the \alg{Weighted Majority} algorithm (see e.g.~\cite{robschapire,blum2007learning}). Let $W_t = \sum_{h \in \calH} P_t(h)$. Observe that on round $t$, if the weighted-majority-vote $\MAJ_{P_{t-1}}$ is not robustly correct on $(x_t, y_t)$, then:
\begin{align*}
    W_{t} &= \eta \sum_{h\in \calH: h(z_t)\neq y} P_{t-1}(h) + \sum_{h\in \calH: h(z_t) = y} P_{t-1}(h) = \eta \sum_{h\in \calH: h(z_t)\neq y} P_{t-1}(h) + W_{t-1} - \sum_{h\in \calH: h(z_t)\neq y} P_{t-1}(h)\\
    &= W_{t-1} - (1-\eta) \inparen{\sum_{h\in \calH: h(z_t)\neq y} P_{t-1}(h)} \leq W_{t-1} - (1-\eta)\frac{1}{2}W_{t-1} = \inparen{\frac{\eta+1}{2}} W_{t-1},
\end{align*}
where the last inequality follows from the fact that $\sum_{h\in \calH: h(z_t)\neq y} P_{t-1}(h) \geq  \sum_{h\in \calH: h(z_t) = y} P_{t-1}(h)$. 

Denote by $M = \sum_{t=1}^{T} \sup_{z\in \calU(x_t)} \ind[\MAJ_{P_{t-1}}(z)\neq y_t]$ the number of rounds on which the weighted-majority-vote was not robustly correct during the total $T$ rounds. The above implies that $W_T \leq \inparen{\frac{\eta+1}{2}}^MW_0 = \inparen{\frac{\eta+1}{2}}^M\abs{\calH}$. On the other hand, denote by $\mathsf{OPT}=\min_{h\in \calH} \sum_{t=1}^{T} \sup_{z\in \calU(x)} \ind[h(z)\neq y]$ the number of rounds on which the best predictor $h^*$ in $\calH$ was not robustly correct. Whenever the weighted-majority-vote is not robustly correct, $h^*$ might make a mistake on $(z_t,y_t)$. It follows that after $T$ rounds, $P_T(h^*) \geq \eta^{\mathsf{OPT}}$. Combining the above inequalities, we get
\[ \eta^{\mathsf{OPT}} \leq P_{T}(h^*) \leq W_T \leq \inparen{\frac{\eta+1}{2}}^M\abs{\calH},\]
and solving for $M$ yields 
\[M \leq \frac{\ln(1/\eta)}{\ln(2/(1+\eta))}\mathsf{OPT} +  \frac{1}{\ln(2/(1+\eta))}\ln\abs{\calH}.\]
To conclude the proof, observe that for $\eta\in [0,1), \ln(2/(1+\eta))\geq \frac{1-\eta}{2}$, and $\ln(1/\eta)\leq (1-\eta)+(1-\eta)^2$ for $0\leq 1-\eta \leq 1/2$. Setting $1-\eta=\min\{(2\ln \abs{\calH})/T, 1/2\}$ yields the desired bound.
\end{proof}

\begin{lem}
\label{lem:agnostic-littlestone}
For any class $\calH$ with finite Littlestone dimension $\Ldim(\calH) < \infty$ and integer $T$, let $\alg{Experts}_\calH = \sett{\alg{Expert}(i_1,\dots,i_L): 1 \leq i_1 <\cdots <i_L\leq T, L\leq \Ldim(\calH)}$ be a set of experts as described in \prettyref{alg:weighted-majority}. Then, running \alg{Weighted Majority} (\prettyref{alg:weighted-majority}) with $\alg{Experts}_\calH$ guarantees that for any perturbation set $\calU$ and any sequence of examples $(x_1,y_1),\dots, (x_T,y_T)$, 
\[\sum_{t=1}^{T} \sup_{z\in \calU(x_t)} \ind[\MAJ_{P_{t-1}}(z)\neq y_t] \leq 2 \mathsf{OPT} + 4\sqrt{\mathsf{OPT} \ln\abs{\alg{Experts}_\calH}},\]
where $\mathsf{OPT} = \min_{h \in \calH} \sum_{t=1}^{T} \sup_{z\in \calU(x_t)} \ind[h(z)\neq y_t]$, and $1-\eta=\min\{(2\ln \abs{\alg{Experts}_\calH})/T, 1/2\}$. Furthermore, \alg{Weighted Majority} (\prettyref{alg:weighted-majority}) makes at most $T$ oracle queries to $\sfO_\calU$.
\end{lem}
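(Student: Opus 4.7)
The plan is to reduce to \prettyref{lem:agnostic-finite} by running Weighted Majority over the finite family $\alg{Experts}_\calH$ of SOA-with-flips learners of \cite{DBLP:conf/colt/Ben-DavidPS09}, instantiated in our setting so that each expert consumes the sequence of \emph{adversarial} perturbations $z_1,\dots,z_T$ returned by $\sfO_\calU$ rather than the raw $x_t$'s. Each expert $E$ is indexed by a subset $I_E\subseteq[T]$ with $|I_E|\leq\Ldim(\calH)$, maintains a version space $V_t\subseteq\calH$, and on any input $z$ outputs the SOA label $\argmax_{y\in\sett{\pm1}}\Ldim(\sett{h\in V_t: h(z)=y})$ negated when $t\in I_E$; after round $t$ its version space is refined on $(z_t,E_t(z_t))$. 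Because the proof of \prettyref{lem:agnostic-finite} uses only that each ``hypothesis'' supplies a label on $z_t$ and that the comparator's total weight after $T$ rounds is at least $\eta^{M(\cdot)}$, the identical multiplicative-weights computation yields
\[\sum_{t=1}^{T}\sup_{z\in\calU(x_t)}\ind[\MAJ_{P_{t-1}}(z)\neq y_t] \;\leq\; 2M^\star + 4\sqrt{M^\star\,\ln|\alg{Experts}_\calH|},\]
where $M^\star=\min_{E\in\alg{Experts}_\calH}\sum_t\ind[E_t(z_t)\neq y_t]$; the left-hand side coincides with $\sum_t\ind[\MAJ_{P_{t-1}}(z_t)\neq y_t]$ since each per-round term is $1$ precisely when $\sfO_\calU$ returns a witness $z_t$, and $0$ whenever the oracle certifies robustness.

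\textbf{Reducing $M^\star$ to $\mathsf{OPT}$.} The only place Littlestone dimension enters is in showing $M^\star\leq\mathsf{OPT}:=\min_{h\in\calH}\sum_t\sup_z\ind[h(z)\neq y_t]$. Given the realized sequence $z_1,\dots,z_T$ and any $h^\star\in\calH$, I would consider the hypothetical online sequence $(z_1,h^\star(z_1)),\dots,(z_T,h^\star(z_T))$: by Littlestone's mistake bound the SOA errs on a set $I(h^\star)\subseteq[T]$ with $|I(h^\star)|\leq\Ldim(\calH)$, so the expert $E_{I(h^\star)}$ lies in $\alg{Experts}_\calH$. A short induction shows $V_t\ni h^\star$ and $E_{I(h^\star)}(z_t)=h^\star(z_t)$ for every $t$: on non-flip rounds the SOA majority label already agrees with $h^\star(z_t)$ by definition of $I(h^\star)$, while on flip rounds it disagrees with $h^\star(z_t)$, so the deliberate negation recovers the correct binary label. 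Consequently $E_{I(h^\star)}$ errs at round $t$ only when $h^\star(z_t)\neq y_t$, which forces $\sup_{z\in\calU(x_t)}\ind[h^\star(z)\neq y_t]=1$ since $z_t\in\calU(x_t)$. Summing and taking an infimum over $h^\star$ gives $M^\star\leq\mathsf{OPT}$, and combining with the display above proves the stated regret bound. The oracle-complexity claim is immediate: Weighted Majority issues exactly one query to $\sfO_\calU$ per round, for at most $T$ queries total.

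\textbf{Main obstacle.} The main subtlety is the apparent circularity between the experts' state updates and the adversarial sequence --- the $z_t$'s depend on $\MAJ_{P_{t-1}}$, which depends on every expert's current version space, which in turn depends on earlier $z_t$'s. I would side-step this by fixing any realized run of the joint algorithm up-front and analyzing it post-hoc: both the SOA-mistake identity (and hence the existence of the witness expert $E_{I(h^\star)}$) and the multiplicative-weights potential argument are purely sequential and apply verbatim to whatever input sequence the experts actually consume, so no fixed-point argument is needed. A minor bookkeeping point is that on rounds where $\sfO_\calU$ certifies robustness I will define $z_t$ to be an arbitrary fixed element of $\calU(x_t)$, which keeps every expert's state well-defined while contributing $0$ to both sides of the inequality.
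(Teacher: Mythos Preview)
Your proposal is correct and follows essentially the same approach as the paper: reduce to \prettyref{lem:agnostic-finite} over the finite Ben-David--P\'al--Shalev-Shwartz expert family, then exhibit an expert that mimics $h^\star$ on the realized perturbation sequence so that its weight is at least $\eta^{\mathsf{OPT}}$. The paper's own proof is terser---it restricts attention to the subsequence $Q$ of rounds where the majority fails and asserts the existence of an expert matching $h^\star$ on $Q$---whereas you define $z_t$ on \emph{every} round (picking an arbitrary $z_t\in\calU(x_t)$ when the oracle certifies robustness) and explicitly address the apparent circularity by arguing post-hoc on the realized run; these are careful bookkeeping clarifications rather than a different argument.
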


\begin{proof}
Let $\calU$ be an arbitrary adversary, and $(x_1,y_1), \dots, (x_T, y_T)\in \calX\times \calY$ be an arbitrary sequence. Let $h^*\in \calH$ be an optimal robust predictor on this sequence,
i.e. $\sum_{t=1}^{T} \sup_{z\in \calU(x_t)} \ind[h^*(z)\neq y_t] = \mathsf{OPT}$. Let $\texttt{Experts}_\calH = \sett{\texttt{Expert}(i_1,\dots,i_L): 1 \leq i_1 <\cdots <i_L\leq T, L\leq \Ldim(\calH)}$ denote the set of experts instantiated that simulate the Standard Optimal Algorithm as described in \prettyref{alg:weighted-majority}. 

Consider running \alg{Weighted Majority} (\prettyref{alg:weighted-majority}) with $\alg{Experts}_\calH$ as its finite cardinality set of experts on the sequence $(x_1,y_1),\dots, (x_T,y_T)$. Consider the set of perturbations returned by the perfect attack oracle $\sfO_\calU$ during the rounds on which the weighted-majority-vote was not robustly correct, 
\[Q = \sett{(z_t,y_t): 1 \leq t \leq T \wedge \MAJ_{P_{t-1}}\text{ is not robustly correct on }(x_t,y_t) }.\]

By \prettyref{alg:weighted-majority}, there is a choice of rounds $i^*_1 < \cdots < i^*_L$ such that $\texttt{Expert}(i^*_1,\dots,i^*_L) \in \texttt{Experts}_\calH$ agrees with the predictions of $h^*$ on this particular sequence $Q$. Observe that the number of mistakes $h^*$ makes on this sequence $M(h^*):=\abs{\sett{(z,y)\in Q: h^*(z)\neq y }}\leq \mathsf{OPT}$. Thus, the weight of $\texttt{Expert}(i^*_1,\dots,i^*_L)$ is at least $\eta^{M(h^*)} \geq \eta^{\mathsf{OPT}}$ (since $\eta < 1$). The remainder of the proof follows exactly as in the  proof of \prettyref{thm:agnostic-littlestone}. 
\end{proof}

\begin{proof}[of \prettyref{thm:agnostic-littlestone}]
Let $\calH\subseteq \calY^\calX$ be a hypotesis class with finite Littlestone dimension $\Ldim(\calH)<\infty$. Let $\calB:(\calX \times \calY)^*\to \calY^\calX$ denote the \alg{Weighted Majority} algorithm running with experts $\alg{Experts}_\calH$ as described in \prettyref{thm:agnostic-littlestone}. We will apply a standard online-to-batch conversion \cite{DBLP:journals/tit/Cesa-BianchiCG04} to get the desired result. Specifically, on input dataset $S=\sett{(x_j,y_j)}_{j=1}^{m}$ that is drawn iid~from some unknown distribution $\calD$ over $\calX\times \calY$, output a uniform distribution over hypotheses $\hat{h}_0,\hat{h}_1,\dots,\hat{h}_{m-1}$ where $\hat{h}_{i} = \calB(\sett{(x_j,y_j)}_{j=1}^{i-1})$. We are guaranteed that with probability at least $1-\delta$ over $S\sim \calD^m$, 
\begin{align*}
    \Ex_{j \sim \mathsf{Unif}\sett{0,\dots,m-1}} \insquare{ \Risk_{\calU}(\hat{h}_j;\calD) } &\leq \frac{1}{m}\sum_{j=1}^{m} \sup_{z\in\calU(x_j)} \ind[\hat{h}_{j-1}(z)\neq y_j] + \sqrt{\frac{2\ln (1/\delta)}{m}}\\
    &\leq 2\min_{h \in \calH} \frac{1}{m}\sum_{j=1}^{m} \sup_{z\in\calU(x_j)} \ind[h(z)\neq y_j] + 4\sqrt{\frac{\ln\abs{\alg{Experts}_\calH}}{m}} + \sqrt{\frac{2\ln (1/\delta)}{m}}\\
    &\leq 2\min_{h\in \calH} \Risk_{\calU}(h;\calD) + 4\sqrt{\frac{\ln\abs{\alg{Experts}_\calH}}{m}} + 2\sqrt{\frac{2\ln (1/\delta)}{m}}.
\end{align*}
This yields a sample complexity bound of $m(\eps,\delta)=O\inparen{\frac{\ln\abs{\alg{Experts}_\calH}+\ln(1/\delta)}{\eps^2}}$. The oracle complexity $T(\eps,\delta)=O(m(\eps,\delta)^2)$ since we invoke learner $\calB$ $m$ times on datasets of size at most $m$.
\end{proof}

\begin{proof}[of \prettyref{thm:agnsotic-vc-littlestone}]
This proof follows an argument originally made by \cite{david:16} to reduce agnostic sample compression to realizable sample compression in the non-robust setting, and later adapted by \cite{pmlr-v99-montasser19a} for the robust setting. Let $\calD$ be an arbitrary distribution over $\calX\times \calY$. Fix $\eps, \delta \in (0,1)$ and a sample size $m$ that will be determined later. Let $S=\sett{(x_1,y_1),\dots,(x_m,y_m)}$ be an iid sample from $\calD$. Denote by $\Tilde{\calB}$ the robust learning algorithm in the realizable setting from \prettyref{thm:realizable-strongust}, and denote by $\calA_{\rm cyc}$ \alg{\cycalg} (\prettyref{alg:robust_learner_unknown}) from \prettyref{thm:realizable-littlestone-2}. The proof is broken into two parts. 

\paragraph{Finding Maximal Subsequence $S'$ with Zero Robust Loss} We will use $\calA_{\rm cyc}$ to find a maximal subsequence $S'\subseteq S$ on which the robust loss can be zero, i.e.~$\inf_{h\in\calH}\Risk_{\calU}(h;S')=0$. This can be done by running $\calA_{\rm cyc}$ on all $2^m$ possible subsequences, with a total oracle complexity of $2^m\Ldim(\calH)$. 

\paragraph{Agnostic Sample Compression} We now run the boosting algorithm $\Tilde{B}$ on $S'$. \prettyref{thm:realizable-strongust} guarantees that the robust risk of $\hat{h}=\Tilde{B}(S',\sfO_\calU,\calA_{\rm cyc})$ is zero, $\Risk_{\calU}(\hat{h};S')=0$. Since $S'$ is a maximal subsequence on which the robust loss can be zero, this implies that
\[\Risk_{\calU}(\hat{h};S)\leq \min_{h\in \calH} \Risk_\calU(h;S).\]

Furthermore, the predictor $\hat{h}$ can be specified using $m_0O(\log\abs{S'})\leq m_0O(\log m)$ points from $S$, which is due to the robust compression guarantee in the proof of \prettyref{thm:realizable-strongust}. Now, we can apply agnostic sample compression generalization guarantees for the robust loss.

Similarly to the realizable case (see \prettyref{lem:robust-compression}), 
uniform convergence guarantees for sample compression 
schemes \cite{graepel:05} remain valid for the robust loss, 
by essentially the same argument; the essential argument is the 
same as in the proof of \prettyref{lem:robust-compression} except 
using Hoeffding's inequality to get concentration of the empirical 
robust risks for each fixed index sequence, and then a union bound over 
the possible index sequnces as before.
We omit the details for brevity. 
In particular, denoting $T_{m} = O(\log m)$, 
for $m > m_0 T_{m}$, with probability at least $1-\delta/2$, 
\begin{equation*}
\Risk_{\calU}(\hat{h};\calD)
\leq \hat{\Risk}_{\calU}(\hat{h};S) 
+ \sqrt{\frac{m_0T_{m} \ln(m) + \ln(2/\delta)}{2m - 2m_0 T_{m}}}.
\end{equation*}

Let $h^{*} = \argmin_{h \in \calH} \Risk_{\calU}(h;\calD)$ 
(supposing the min is realized, for simplicity; else 
we could take an $h^{*}$ with very-nearly minimal risk).
By Hoeffding's inequality, with probability at least 
$1-\delta/2$, 
\begin{equation*}
\hat{\Risk}_{\calU}(h^{*};S) 
\leq \Risk_{\calU}(h^{*};\calD) 
+ \sqrt{\frac{\ln(2/\delta)}{2m}}.
\end{equation*}

By the union bound, if $m \geq 2 \Mre T_{m}$, with probability at least $1-\delta$, 
\begin{align*}
\Risk_{\calU}(\hat{h};\calD) 
& \leq \min_{h \in \calH} \hat{\Risk}_{\calU}(h;S) 
+ \sqrt{\frac{m_0 T_{m} \ln(m) + \ln(2/\delta)}{m}}
\\ & \leq \hat{\Risk}_{\calU}(h^{*};S) 
+ \sqrt{\frac{\Mre T_{m} \ln(m) + \ln(2/\delta)}{m}}
\\ & \leq \Risk_{\calU}(h^{*};\calD) 
+ 2\sqrt{\frac{\Mre T_{m} \ln(m) + \ln(2/\delta)}{m}}.
\end{align*}
Since 
$T_{m} = O( \log(m) )$, 
the above is at most $\eps$ 
for an appropriate choice of sample size 
$m = O\!\left( \frac{\Mre}{\eps^{2}} \log^{2}\!\left(\frac{\Mre}{\eps}\right) + \frac{1}{\eps^{2}}\log\!\left(\frac{1}{\delta}\right) \right)$.
\end{proof}

\section{Lower Bound Proof for \prettyref{sec:lowerbound-mistakeoracle}}
\label{app:appendix-lowerbound}

\begin{proof}[of \prettyref{thm:lowerbound-oraclecomplexity}]
Let $d= \Tdim(\calH)$. By definition of the threshold dimension, $\exists P=\sett{x_1,\dots, x_d} \subseteq \calX$ that is threshold-shattered using $C=\sett{h_1,\dots, h_d}\subseteq \calH$. Let $\calD$ be a uniform distribution over $(x_1,+1)$ and $(x_d, -1)$. Let $\calB$ be an arbitrary learner in the \smodel. For any $h\in C\setminus \sett{h_d}$, let $\calU_h: \calX \to 2^\calX$  be defined as:
\begin{align*}
    \calU_h(x_1)&=\sett{ x \in P: h(x)=+1 },\\
    \calU_h(x_d)&= \sett{x \in P: h(x)= -1}= P \setminus \calU_h(x_1),\\
    \calU_h(x)&=\sett{x_0}~\forall x \in \calX \setminus \sett{x_1,x_d},
\end{align*}
where $x_0 \in \calX\setminus{P}$.

For any such $\calU_h$, observe that finding a predictor $\hat{h}: \calX \to \calY$ that achieves zero robust loss on $\calD$, $\Risk_{\calU_h}(\hat{h}; \calD)=0$, is equivalent to figuring out which threshold $h \in C\setminus {h_d}$ was used to construct $\calU_h$, since $\Risk_{\calU_h}(h; \calD)=0$ by definition of $\calU_h$, but for any other threshold $h' \in C\setminus {h_d}$ where $h' \neq h$, $\Risk_\calU(h'; \calD)\geq 1/2$. 

We will pick $h$ uniformly at random from $C\setminus {h_d}$, and we will show that in expectation over the random draw of $h$, learner $\calA$ needs to make at least $\Omega(\log \abs{C\setminus {h_d}})$ oracle queries to $\mathsf{O}_{\calU_{h}}$ in order to achieve robust loss zero on $\calD$. For ease of presentation, for each $i\in [d-1]$, we will encode $h_i\in C\setminus {h_d}$ with the binary representation $r(i)$ of integer $i$, for example:
\begin{figure}[h]
    \centering
    \tikzset{every picture/.style={line width=0.75pt}} %set default line width to 0.75pt        

\begin{tikzpicture}[x=0.75pt,y=0.75pt,yscale=-1,xscale=1]
%uncomment if require: \path (0,300); %set diagram left start at 0, and has height of 300

%Straight Lines [id:da6332202453183298] 
\draw    (331,177.66) -- (331,153) ;
\draw [shift={(331,151)}, rotate = 450] [color={rgb, 255:red, 0; green, 0; blue, 0 }  ][line width=0.75]    (10.93,-3.29) .. controls (6.95,-1.4) and (3.31,-0.3) .. (0,0) .. controls (3.31,0.3) and (6.95,1.4) .. (10.93,3.29)   ;
%Straight Lines [id:da5146317680918995] 
\draw    (244,177.66) -- (244,153) ;
\draw [shift={(244,151)}, rotate = 450] [color={rgb, 255:red, 0; green, 0; blue, 0 }  ][line width=0.75]    (10.93,-3.29) .. controls (6.95,-1.4) and (3.31,-0.3) .. (0,0) .. controls (3.31,0.3) and (6.95,1.4) .. (10.93,3.29)   ;

% Text Node
\draw (222,139.4) node [anchor=north west][inner sep=0.75pt]    {$x_{1} \ \ \ x_{2} \ \ \ x_{3} \ \ \ x_{4} \ \ \ x_{5} \ \ \ x_{6} \ \ \ x_{7} \ \ \ \ x_{8}$};
% Text Node
\draw (233,179.4) node [anchor=north west][inner sep=0.75pt]    {$h_{001}$};
% Text Node
\draw (317,178.4) node [anchor=north west][inner sep=0.75pt]    {$h_{100}$};

\end{tikzpicture}

\end{figure}

Thus, drawing $h$ uniformly at random from $C\setminus {h_d}$ is equivalent to drawing a random bit-string $r$ of length $\ceil{\log_2 \abs{C\setminus {h_d}}}$ bits. Next, we will define the behavior of the oracle $\sfO_{\calU_h}$. 

\begin{algorithm2e}[H]
\label{alg:mistake-oracle}
\caption{Perfect Attack Oracle $\sfO_{\calU_h}$}
\SetKwInput{KwInput}{Input}                % Set the Input
\SetKwInput{KwOutput}{Output}              % set the Output
\SetKwFunction{FMain}{ZeroRobustLoss}
\DontPrintSemicolon
  \KwInput{A predictor $f:\calX \to \calY$ and a labeled example $(x,y)$.}
\KwOutput{Assert that $f$ is robustly correct on $(x,y)$ or return a $z\in\calU_h(x)$ such that $f(z)\neq y$.}
\If{$x=x_1$}{
    Output the first $z\in \calU_{h}(x_1)$ \textbf{(to the right of $x_1$)} such that $f(z)\neq y$. If no such $z$ exists, assert that $f$ is robustly correct on $(x_1,y)$.
}\ElseIf{$x=x_d$}{
    Output the first $z\in \calU_{h}(x_d)$ \textbf{(to the left of $x_d$)} such that $f(z)\neq y$. If no such $z$ exists, assert that $f$ is robustly correct on $(x_d,y)$.
}\Else{
    Output $x$ if $f(x)\neq y$, otherwise assert that $f$ is robustly correct on $(x,y)$.
}
\end{algorithm2e}

Before learning starts, from the perspective of the learner $\calB$, the version space $V_0= \sett{h_1,\dots, h_{d-1}}$, as any of these thresholds could be the true threshold used by $\sfO_{\calU_{h_r}}$ where $r$ was drawn uniformly at random. On each round $t$, learner $\calB$ constructs a predictor $h_t: \calX \to \calY$ and asks the oracle $\calO_{\calU_{h_r}}$ with a query $q_t = (h_t, (x_t,y_t))$, and the oracle $\sfO_{\calU_{h_r}}$ responds as described in Algorithm~\ref{alg:mistake-oracle}. Without loss of generality, we can assume that $(x_t,y_t)=(x_1, +1)$ or $(x_t,y_t)=(x_d,-1)$, since queries concerning other points $x\in \calX\setminus \sett{x_1,x_d}$ do not reveal helpful information for robustly learning distribution $\calD$. Foreach round $t$, the version space $V_t$ describes the set of thresholds that are consistent with the queries constructed by the learner so far, i.e. $\forall i\leq t,\forall h,h'\in V_t, \sfO_{\calU_{h}}(q_i) = \sfO_{\calU_{h'}}(q_i)$. So, from the perspective of the learner $\calB$, any $h\in V_t$ could be the true threshold. 

We will show that with each oracle query $q_t$ constructed by the learner $\calB$, in expectation over the random draw of $r$, the size of the newly updated version space $\abs{V_{t}} \geq \frac{1}{4} \abs{V_{t-1}}$. Formally, the expected size of the version space $V_t$ after round $t$ conditioned on query $q_t$ and $V_{t-1}$ is:
\[\Ex_{r_t} \insquare{\abs{V_t} | q_t, V_{t-1}} = \Prob_{r_t}\insquare{r_t = 0} \Ex{\insquare{\abs{V_t} | q_t, V_{t-1}, r_t}} + \Prob_{r_t}\insquare{r_t = 1} \Ex{\insquare{\abs{V_t} | q_t, V_{t-1}, r_t}},\]
where $r_t$ is the $t^{\text{th}}$ random bit in the random bit string $r$. We need to consider two possible cases depending on the query $q_t=(h_t, (x_t,y_t))$. (Without loss of generality, we are assuming that $h_t\in C\setminus{h_d}$, as the oracle $\sfO_{\calU_{h_r}}$ will treat it as such by Steps 2 and 4). 

If $(x_t,y_t)=(x_1, +1)$, and the $t^{\text{th}}$ bit of $h_t$ is $0$, then:
    \[\Ex_{r_t} \insquare{\abs{V_t} | q_t, V_{t-1}} \geq \Prob_{r_t}\insquare{r_t = 1} \Ex{\insquare{\abs{V_t} | q_t, V_{t-1}, r_t}} = \frac{1}{2}\cdot \frac{1}{2} \abs{V_{t-1}}. \]

If $(x_t,y_t)=(x_d, -1)$, and the $t^{\text{th}}$ bit of $h_t$ is $1$, then:
    \[\Ex_{r_t} \insquare{\abs{V_t} | q_t, V_{t-1}} \geq \Prob_{r_t}\insquare{r_t = 0} \Ex{\insquare{\abs{V_t} | q_t, V_{t-1}, r_t}} = \frac{1}{2}\cdot \frac{1}{2} \abs{V_{t-1}}. \]

Therefore, it follows that $\Ex_{r_t} \insquare{\abs{V_t} | q_t, V_{t-1}} \geq \frac{1}{4} |V_{t-1}|$. Thus, after $T$ rounds, $\Ex_{r}\insquare{\abs{V_{T}}| V_0}\geq \frac{1}{4}^T\abs{V_0}$. This implies that there exists a fixed bit-string $r^*$ (or equivalently, an adversary $\calU_{h_{r^*}}$) such that for $T \leq \frac{\log\abs{V_0}}{2}$ rounds, $\abs{V_T}\geq 1$. This implies that learner $\calB$ needs at least $\frac{\log\abs{V_0}}{2}$ oracle queries to $\mathsf{O}_{\calU_{h_{r^*}}}$ in order to robustly learn distribution $\calD$.
\end{proof}

\section{Proofs for \prettyref{sec:online}}
\label{app:appendix-c}

\begin{proof} [of \prettyref{thm:online-lowerbound}]
Observe that the same lowerbound construction from \prettyref{thm:lowerbound-oraclecomplexity} can be used in the setting of the online model. Specifically, by importing that construction, we get the following: there is a distribution $\calD$ over $\calX\times \calY$, such that for any learner $\calB:(\calX\times\calY)^*\to \calY^\calX$, there is a perturbation set $\calU:\calX\to2^{\calX}$ and an adversary $\bbA: \calY^\calX \times (\calX\times \calY) \to \calX$ (\prettyref{alg:mistake-oracle}) such that:
\[ M_{\calU,\bbA}(\calB,\calH;\calD)=\sum_{t=1}^{\infty} \ind \insquare{\calB(\sett{(z_i,y_i)}_{i=1}^{t-1})(z_t)\neq y_t} \geq \frac{\log_2{\inparen{\Tdim(\calH)-1}}}{2}.\]

This is because in the setting of the \smodel, learner $\calB$ \emph{chooses} which example $(x,y)\in {\rm supp}(\calD)$ to feed into $\bbA$, and still learner $\calB$ makes $\frac{\log_2{\inparen{\Tdim(\calH)-1}}}{2}$ mistakes before fully robustly learning distribution $\calD$. While in this setting, the examples $(x,y)$ that are fed into $\bbA$ are drawn iid from $\calD$, and so its at least as hard as the other setting. 
\end{proof}

\section{Proofs for \prettyref{sec:imperfect}}
\label{app:appendix-d}

\begin{algorithm2e}[H]
\caption{Robust Learner with Imperfect Attack.}\label{alg:robust-learner-attack}
\SetKwInput{KwInput}{Input}                % Set the Input
\SetKwInput{KwOutput}{Output}              % set the Output
\SetKwFunction{FMain}{ZeroRobustLoss}
\DontPrintSemicolon
  \KwInput{$S=\sett{(x_1,y_1),\dots, (x_m,y_m)}$, $\eps ,\delta$, black-box conservative online learner $\calA$, black-box attacker $\bbA$.}
  %\KwData{Testing set $x$}
  Initialize $\hat{h} = \calA(\emptyset)$.\;
  \For{$1\leq i \leq m$}{
    Let $z_i=\bbA(\hat{h}, (x_i,y_i))$ be the perturbation returned by the attacker $\bbA$.\;
    If $\hat{h}$ is not correct on $(z_i, y_i)$, update $\hat{h}$ by running online learner on $\calA$ on $(z_i,y_i)$.\;
    Break when $\hat{h}$ is correct on a consecutive sequence of perturbations of length $\frac{1}{\eps}\log\inparen{\frac{\Ldim(\calH)}{\delta}}$.\;
  }
\KwOutput{$\hat{h}$.}
\end{algorithm2e}

\begin{proof}[of Theorem~\ref{thm:attack-generalization}]
Let $\calH \subseteq \calY^\calX$ be an arbitrary hypothesis class, and $\calA$ a conservative online learner for $\calH$ with mistake bound of $\Ldim(\calH)$. Let $\calU$ be an arbitrary adversary and $\bbA$ an arbitrary fixed (but possibly randomized) attack algorithm. Let $\calD$ be an arbitrary distribution over $\calX\times \calY$ that is robustly realizable,i.e.~$\inf_{h\in\calH}\Risk_\calU(h;\calD)=0$.

Fix $\eps, \delta \in (0,1)$ and a sample size $m = 2\frac{\Ldim(\calH)}{\eps} \log \inparen{\frac{\Ldim(\calH)}{\delta}}$. Since online learner $\calA$ has a mistake bound of $\Ldim(\calH)$, \prettyref{alg:robust-learner-attack} will terminate in at most $\frac{\Ldim(\calH)}{\eps} \log\inparen{\frac{\Ldim(\calH)}{\delta}}$ steps, which is an upperbound on the number of calls to the attack algorithm $\bbA$. It remains to show that the output of \prettyref{alg:robust-learner-attack}, the final predictor $\hat{h}$, will have low error w.r.t. future attacks from $\bbA$: $${\rm err}_{\bbA}(\hat{h};\calD)\triangleq \Prob_{\underset{\text{randomness of }\bbA}{(x,y)\sim\calD}}  \insquare{\hat{h}(\bbA(\hat{h}, (x,y))) \neq y}.$$ 

Throughout the runtime of \prettyref{alg:robust-learner-attack}, the online learner $\calA$ generates a sequence of at most $\Ldim(\calH)+1$ predictors. There's the initial predictor from Step 1, plus the $\Ldim(\calH)$ updated
predictors corresponding to potential updates by online learner $\calA$. By a union bound over these predictors, the probability that the final predictor $\hat{h}$ has error more than $\eps$ 
\[
    \Prob_{S\sim \calD^m} \!\!\insquare{ {\rm err}_{\bbA}(\hat{h};\calD)\!>\! \eps} \leq \!\!\Prob_{S\sim \calD^m} \!\insquare{ \exists j \!\in\! [\Ldim(\calH)\!+\!1]: {\rm err}_{\bbA}(h_j;\calD) \!>\! \eps}
    \!\leq\! (\Ldim(\calH)\!+\!1) (1-\eps)^{\frac{1}{\eps} \log\inparen{\frac{\Ldim(\calH)\!+1}{\delta}}}
    \!\leq\! \delta.
\]
Therefore, with probability at least $1-\delta$ over $S\sim \calD^m$, \prettyref{alg:robust-learner-attack} outputs a predictor $\hat{h}$ with error ${\rm err}_{\bbA}(\hat{h};\calD)\leq \eps$.
\end{proof}

\end{document}